\documentclass{article} 
\usepackage{iclr2026_conference,times}

\usepackage{amsmath,amsfonts,bm}

\def\eqref#1{equation~\ref{#1}}

\def\1{\bm{1}}

\DeclareMathAlphabet{\mathsfit}{\encodingdefault}{\sfdefault}{m}{sl}
\SetMathAlphabet{\mathsfit}{bold}{\encodingdefault}{\sfdefault}{bx}{n}

\usepackage{hyperref}
\usepackage{url}

\newtheorem{theorem}{Theorem}
\newtheorem{proposition}[theorem]{Proposition}

\newenvironment{proof}{{\noindent\it Proof.}\quad}{\hfill $\square$\par}

\usepackage{hyperref}
\hypersetup{
    colorlinks=true, 
    linkcolor=red, 
    filecolor=magenta, 
    urlcolor=blue, 
}
\usepackage{url}

\usepackage{algorithm}
\usepackage{amsmath,amsfonts}
\usepackage{algorithm}
\usepackage{algpseudocode}
\usepackage{lineno}
\usepackage{bm}
\usepackage{bbm}
\usepackage{float}
\usepackage{multirow}
\usepackage{makecell}
\usepackage{tabularx}
\usepackage{amssymb}
\usepackage{stmaryrd}
\usepackage{lipsum}
\usepackage{subfigure}
\usepackage{bbold}
\usepackage{graphicx}
\usepackage{subcaption}
\usepackage{caption}
\usepackage{newfloat}
\usepackage{listings}
\usepackage{wrapfig}
\usepackage{tikz}
\usepackage{mathtools}
\usepackage{booktabs}
\usepackage{geometry}
\usepackage{array}
\usepackage{longtable}
\usepackage{lscape}
\usepackage{xcolor}
\usepackage{multirow}
\usepackage{booktabs}

\usepackage{listings}

\title{A Closer Look at the Application of Causal Inference in Graph Representation Learning}

\iclrfinalcopy
\author{Hang Gao \thanks{Equal contribution.}, Kunyu Li ,* Huang Hong, Baoquan Cui, Fengge Wu \thanks{Corresponding Author.} \\
Institute of Software\\
Chinese Academy of Science\\
\texttt{\{fengge\}@iscas.ac.cn} \\
}

\begin{document}

\maketitle

\begin{abstract}

Modeling causal relationships in graph representation learning remains a fundamental challenge. Existing approaches often draw on theories and methods from causal inference to identify causal subgraphs or mitigate confounders. However, due to the inherent complexity of graph-structured data, these approaches frequently aggregate diverse graph elements into single causal variables—an operation that risks violating the core assumptions of causal inference. In this work, we prove that such aggregation compromises causal validity. Building on this conclusion, we propose a theoretical model grounded in the smallest indivisible units of graph data to ensure that the causal validity is guaranteed. With this model, we further analyze the costs of achieving precise causal modeling in graph representation learning and identify the conditions under which the problem can be simplified. To empirically support our theory, we construct a controllable synthetic dataset that reflects real-world causal structures and conduct extensive experiments for validation. Finally, we develop a causal modeling enhancement module that can be seamlessly integrated into existing graph learning pipelines, and we demonstrate its effectiveness through comprehensive comparative experiments. \textit{Code and data can be found in the supplementary materials.}

\end{abstract}

\section{Introduction}

In deep learning, accurately modeling causal relationships is a key step toward building trustworthy AI \citep{li2023trustworthy}. Causal relationships represent the real cause-and-effect links between variables, possessing deterministic certainty, unlike probabilistic correlations that may involve false connections. This is especially important for graph representation learning using neural networks \citep{gao2023robust}, as the complex connections between nodes and the built-in structure of graph data are likely to create confusing biases and false correlations \citep{jukna2006graph}. In common applications, like recommendation systems \citep{wu2022graph}, drug discovery \citep{takigawa2013graph}, and social network analysis \citep{tan2019deep}, these biases often appear as the challenge of separating key causal factors—like telling popularity from true preference, bias from biological function, and similarity from influence. Additionally, since graph models are often used to study systems with spreading effects, predictions based on such false correlations are not only likely to fail, but their negative outcomes can also be made much worse when spread by the network structure.

In recent years, researchers have sought to address this issue \citep{DBLP:conf/iclr/WuWZ0C22,DBLP:conf/nips/Fan0MST22,gao2023robust,DBLP:conf/aaai/GaoYLSJWZ024,DBLP:conf/coling/SunSHQLJ25,DBLP:conf/www/ZhaoYLL0ZG025}. Mirroring trends in other domains of neural networks \citep{kaddour2022causal}, they have begun to integrate principles of causal inference into graph representation learning, achieving notable success. These approaches typically function by either identifying causal subgraphs within the graph data or eliminating confounders—extraneous variables that interfere with the accurate modeling of causal relationships. The efficacy of these methods has been validated on both synthetic datasets designed for causal benchmarks and real-world datasets.

\begin{wrapfigure}{r}{0.4\textwidth}
    \centering
    \includegraphics[width=0.4\textwidth]{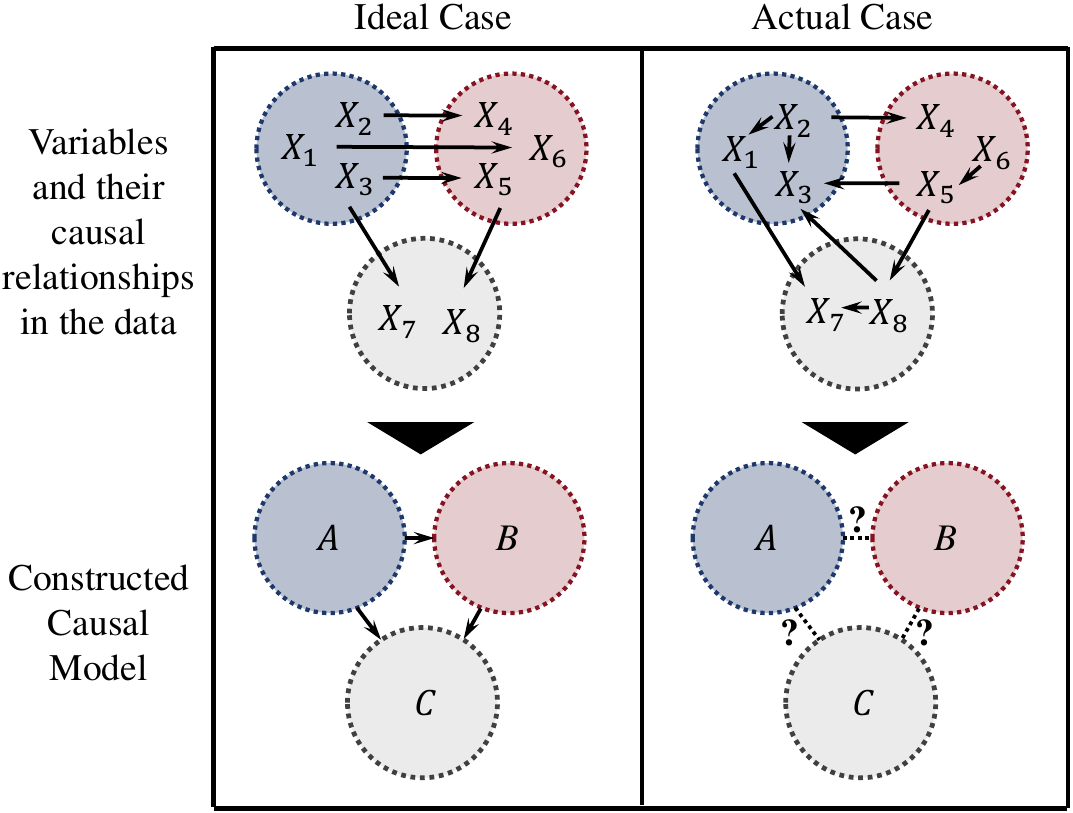}
    \vskip -0.1in
    \caption{An example of the ideal case and the actual case in causal model building. In the actual case, the causal model cannot be constructed as in the ideal case due to the complex reciprocal causal relationships between the merged variables.}
    \label{fig:mtv}
    \vskip -0.1in
\end{wrapfigure}

However, the aforementioned methods often merge multiple graph components—such as nodes and edges—into a single causal variable in their analysis. For example, they typically consider the entire causal subgraph or confounder as one unified variable. In an ideal scenario, if the causal relationships between the variables in the studied graph data align with the variables created by these methods, then, based on the theory of \citet{spirtes2009variable}, such an analysis poses no issues and satisfies the prerequisites for causal inference applications. However, in real-world situations, the complex interrelationships within graph data lead to highly intricate interactions between the variables, which do not meet this ideal condition. We provide an intuitive example in Figure \ref{fig:mtv}. This practice raises an important question: \textbf{what impact does such merging of variables have on the granularity and accuracy of causal analysis in graph representation learning?} Our findings show that such a simplification inevitably violates the two fundamental premises of causal inference, making it inapplicable. Please refer to Proposition \ref{prp:1} for details. Consequently, a new question emerges: \textbf{from a causal theory perspective, is it possible to achieve perfectly accurate causal relationship modeling in graph representation learning, and at what cost?}

In this paper, we address the aforementioned issues from a rigorous theoretical perspective, supported by sufficient experimental evidence. Specifically, we develop a new theoretical model for studying causal modeling in graph representation learning. Such a model is built upon the smallest divisible variables in graph data, ensuring strict adherence to the fundamental theoretical premises of causal inference. Based on this model, we conduct a series of analyses and proofs. Subsequently, we carry out experimental analyses, constructing an artificial synthetic dataset that closely resembles real-world scenarios and performing multiple experiments and analyses. Building upon the aforementioned research outcomes, we also develop a new plug-and-play causal modeling enhancement module. Our contributions can be summarized as follows:
\begin{itemize}
\item We proposed a new theoretical model that strictly adheres to the fundamental premises of causal inference for studying causal modeling in graph representation learning.
\item Based on the proposed model, we derive and prove a series of theories concerning causal relationship modeling in graph representation learning, including its costs and simplifications.
\item We constructed a synthetic graph dataset with controllable causal relationships, more closely resembling real-world scenarios, for causal relationship modeling research. Additionally, we conducted a series of experiments for cross-validation with the proposed theory.
\item Based on the aforementioned research and conclusions, we introduced a novel plug-and-play module for optimizing causal relationship modeling in graph representation learning.
\end{itemize}

\section{Related Works}

\subsection{Causal Learning}

Causal learning focuses on identifying and modeling causal relationships rather than mere correlations, utilizing methods like causal graph models \citep{DBLP:conf/nips/KocaogluJSB19}, causal inference \citep{Pearl2018}, and causal discovery \citep{DBLP:conf/nips/ZhengARX18}. Recent advances integrate causal learning with neural networks to enhance interpretability and generalization by incorporating causal structures \citep{DBLP:conf/icml/ChattopadhyayMS19, DBLP:conf/nips/0005ZL20}, removing spurious correlations \citep{DBLP:conf/cvpr/Zhang0XZ0S21}, and analyzing latent variable relationships \citep{DBLP:conf/iclr/YaoXLMTMKL24}. Causal deep generative models disentangle factors like object shape and texture for counterfactual generation \citep{DBLP:conf/iclr/Sauer021}, while causal reinforcement learning improves robustness through stable representations \citep{DBLP:conf/icml/0002LYL24}. These approaches advance causal understanding in neural networks across prediction, generation, and decision-making tasks \citep{10121327, DBLP:conf/icml/BagiGS023, cheng2024gu, DBLP:conf/ijcai/Zhu0Z23, DBLP:conf/nips/YuFPQZS24, DBLP:journals/tnn/ZengCSHH25}.

\subsection{Causal Relationship Modeling with Graph Neural Networks}

Causality plays a crucial role in addressing the complexity of graph data \citep{DBLP:conf/iclr/LippeMLACG23, DBLP:conf/kdd/SuiWWL0C22}, especially in fields such as finance \citep{wang2022causalgnn}, medicine \citep{shang1pre}, and biology \citep{zitnik2018modeling}. Most current research focuses on how to enable Graph Neural Networks (GNNs) to model causal relationships in graph representation learning. These studies typically employ two main approaches: (1) modeling causal subgraphs, where \cite{10268633} proposed a causal representation framework for stable GNNs, \cite{DBLP:conf/iclr/WuWZ0C22} developed a discovering invariant rationale (DIR) strategy, and \cite{DBLP:conf/nips/0002ZB00XL0C22} introduced Causal-Inspired Invariant Graph Learning (CIGA) for OOD generalization; and (2) eliminating confounding factors, where \cite{DBLP:conf/nips/Fan0MST22} proposed a decoupled GNN framework, \cite{DBLP:conf/aaai/GaoYLSJWZ024} designed a lightweight optimization module, and \cite{DBLP:conf/www/WuNYBY24} employed causal inference-inspired learning to overcome confounding biases. Both of these approaches adopt variable merging in their causal analysis, and our work aims to investigate the impact of this merging from both theoretical and experimental perspectives.

\section{Theoretical Analysis}

\subsection{Basic Model}

As discussed above, research on graph representation learning from a causal perspective often merges numerous node and edge-level variables, making it difficult to ensure causal validity. Additionally, categorizing complex, interdependent variables within a graph dataset $\mathcal{G} = \{ G_{i}\}_{i=1}^{|\mathcal{G}|}$ as ``confounders'' or ``causal subgraphs'' can impact the effectiveness of causal analysis methods, violating two key assumptions in causal inference: the Causal Markov Assumption and the Causal Faithfulness Assumption \citep{pearl2009causality}. Formally, we propose the following proposition:

\begin{proposition}
When the variables within graph dataset $\mathcal{G}$ are merged to form a new and smaller variable set $S$, in certain cases, it becomes impossible to construct a causal model based on $S$ while still satisfying the two key prerequisites for applying causal inference methods—namely, the Causal Markov Assumption and the Causal Faithfulness Assumption.
\label{prp:1}
\end{proposition}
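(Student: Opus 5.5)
Since the proposition is existential (``in certain cases''), I will prove it by an explicit counterexample. I take a small class of graphs whose elementary variables (node features and edge indicators, the smallest indivisible units) are generated by a structural causal model that is Markov and faithful at this fine level. I then give a natural merging of these variables into a smaller set $S$, for instance a ``causal subgraph'' variable $C$ and a ``confounder'' variable $B$ as existing methods use. Finally I show that no DAG over $S$ is both Markov and faithful to the induced distribution of the merged variables. This is the aggregation phenomenon of \citet{spirtes2009variable}, made specific to graph data.

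The concrete construction uses four elementary binary variables $X_1, X_2, Y_1, Y_2$ plus a label $L$. The fine-grained DAG is $X_1 \to Y_1$, $Y_2 \to X_2$, and $Y_1 \to L$. It has generic (faithful) parameters, with independent noise. I merge $C=(X_1,X_2)$ and $B=(Y_1,Y_2)$, both treated as vector-valued variables, so the merged set is $S=\{C,B,L\}$. The two key facts are the following.
\begin{itemize}
\item[(i)] $C$ and $B$ are dependent, since both components are linked. This is a reciprocal dependence of the kind drawn in Figure~\ref{fig:mtv}.
\item[(ii)] $C \perp L \mid B$ holds, because $L$ depends only on $Y_1\in B$, while $C \not\perp L$ marginally.
\end{itemize}

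I then enumerate DAGs on three nodes that satisfy the independencies. Markov together with faithfulness requires the DAG's d-separations to match exactly the independence set $\{C\perp L\mid B\}$. The only candidates are the chains $C\to B\to L$ and $L\to B\to C$, and the fork $C\leftarrow B\to L$. Here I expect a subtlety: such a DAG may fit the observational independencies. So the contradiction must come from interventions or from the causal (not merely statistical) Markov condition. Under the causal reading, the edge between $C$ and $B$ must point both ways, $C\to B$ via $X_1\to Y_1$ and $B\to C$ via $Y_2\to X_2$. An intervention on $C$ (setting $X_1,X_2$) changes $B$, and an intervention on $B$ changes $C$. No DAG over $S$ can represent both directions, and a cyclic graph or one with a latent node violates the acyclic causal model premise. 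Any acyclic orientation omits a true causal dependence, which breaks the causal Markov condition relative to the manipulated distributions. Alternatively, adding both an edge and a latent common cause breaks minimality and faithfulness.

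The main obstacle is that purely observational merged distributions often do admit some faithful DAG. For that reason I will strengthen the example so that the observational argument alone suffices, and I will try this first. I add a second label-like child of $X_2$, say $M$, giving $X_2\to M$. Now $M$ and $L$ attach to different components of the merged variables. The independencies among $\{C,B,L,M\}$ then become $L\perp M\mid B$ and $L\perp M \mid C$ but $L\not\perp M$, plus $C\perp L\mid B$ and $B\perp M\mid C$. The pair ``$L\perp M\mid B$ and $L\perp M\mid C$ with $C,B$ adjacent'' cannot be reproduced faithfully by any DAG: d-separation graphoid reasoning (intersection-like closure) forces either $L\perp M$ or a missing $C$–$B$ adjacency. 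I will check this by case analysis on the orientation of the $C$–$B$ edge and of the edges to $L$ and $M$. That completes the counterexample, and with it the proposition.
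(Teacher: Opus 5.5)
Your three-variable discussion, including the interventional fallback (each merged block contains causes of the other, so no acyclic causal graph over $S$ is correct under interventions), is essentially the paper's own argument. You are also right that $\{C,B,L\}$ alone admits a faithful DAG observationally. The problem is the strengthened example, which you make the main line.

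First, the dependence $L\not\perp M$ that you list is false. The fine-grained DAG $X_1\to Y_1\to L$, $Y_2\to X_2\to M$ consists of two disconnected components with independent noise, so $L\perp M$ marginally. Second, the lemma you intend to invoke is also false. The chain $L\leftarrow B\leftarrow C\to M$ entails $L\perp M\mid B$, $L\perp M\mid C$, $C\perp L\mid B$ and $B\perp M\mid C$, while keeping $L\not\perp M$ and $B$, $C$ adjacent. So the independence list you wrote down is faithfully represented by a DAG, and nothing ``forces either $L\perp M$ or a missing $C$--$B$ adjacency.'' Your planned case analysis would not produce a contradiction.

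The construction can nevertheless be repaired, with the obstruction running the other way. Over $S=\{C,B,L,M\}$, the pairs $C$--$B$, $B$--$L$ and $C$--$M$ each contain a fine-level edge and so are never separated. On the other hand, $C\perp L\mid B$, $B\perp M\mid C$ and $L\perp M$ all hold. Hence any DAG that is Markov and faithful to the merged distribution must have skeleton $L - B - C - M$. Marginal independence of $L$ and $M$ then requires a collider on this path:
\begin{itemize}
\item a collider at $B$ would give $L\perp C$, which is false because of $X_1\to Y_1\to L$;
\item a collider at $C$ would give $B\perp M$, which is false because of $Y_2\to X_2\to M$.
\end{itemize}
Equivalently, the merged independence model violates weak transitivity: $L\perp M$ and $L\perp M\mid B$ hold, yet $L\not\perp B$ and $B\not\perp M$. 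Every DAG-faithful distribution satisfies weak transitivity.

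Repaired this way, your route gives a purely observational, fully checkable counterexample. It is stronger than the paper's proof, which relies on reading the Markov condition causally (reciprocal influence between $S_i$ and $S_j$) and establishes the faithfulness failure only by positing $a\perp b$.
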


\begin{figure}[h]
\begin{center}
\includegraphics[width=1\columnwidth]{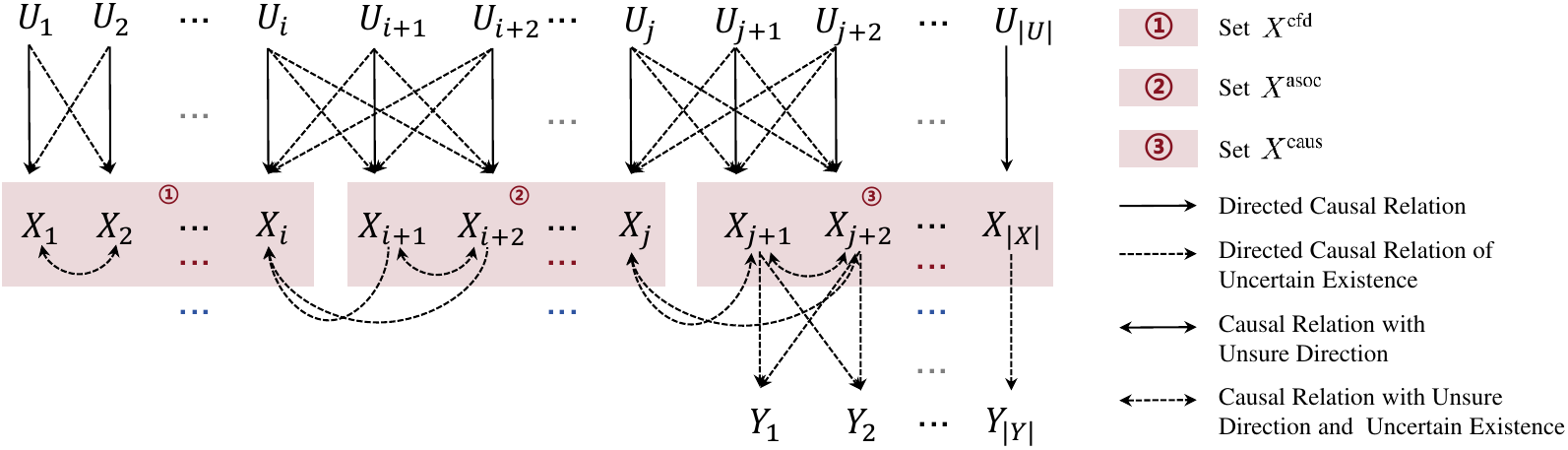} 
\end{center}
\vskip -0.1in
\caption{Graphical illustration of the proposed SCM of the graph representation learning scenario.}
\vskip -0.2in
\label{fig:scm}
\end{figure}

The proof can be found in \textbf{Appendix} \ref{prf:1}. This influence warrants more rigorous and systematic theoretical investigation. To study the problem, we formalize it using a Structural Causal Model (SCM) \citep{pearl2009causality}, which serves as a framework for representing causal relationships among variables. An SCM consists of a set of variables and a corresponding set of relations that describe how each variable is causally influenced by others. In Figure \ref{fig:scm}, we illustrate the SCM as a Directed Acyclic Graph (DAG), in line with standard practices in causal inference. In this representation, vertices correspond to random variables, while edges denote the causal relationships between them. \textbf{The SCM we construct treats individual elements in the graph—such as nodes and edges—as separate variables, enabling an analysis that strictly adheres to the principles of causal theory.}

In Figure \ref{fig:scm}, let ${U} = \{U_{i}\}_{i=1}^{|{U}|}$ denote the set of exogenous variables, ${X} = \{X_{i}\}_{i=1}^{|{X}|}$ represent all the smallest divisible variables included in the graph, and ${Y} = \{Y_{i}\}_{i=1}^{|{Y}|}$ indicate the set of label variables. The edges in the graph illustrate the causal relationships between the variables. For each variable $X_{i}$, there is a direct causal relationship with its corresponding exogenous variable $U_{i}$, and $U_{i}$ may also exert causal influences on other variables within $X$. These uncertain causal relationships are represented by dashed arrows. Furthermore, since we do not observe the values of the exogenous variables, our focus is solely on their causal influence on $X$. Consequently, all edges between $U$ and $X$ are directed from $U$ to $X$. Due to the large number of variables and edges in the graph, it is impractical to display each element individually. To address this, ellipses are employed to represent omitted variables and edges. 

For the set $X$, we divide it into three subsets. As illustrated in Figure \ref{fig:scm}, subset one consists of variables that do not have causal paths to the label set $Y$ and may act as confounders. We denote this subset as ${X}^{\text{cfd}}$. Subset two includes variables that do have paths to the labels ${Y} = \{Y_{i}\}_{i=1}^{|{Y}|}$ but are not part of $\bigcup_{i \in \{1,2,...,k\}} Pa(Y_{i})$, where $Pa(\cdot)$ represents the parent node set. Specifically, these variables are not parent nodes of any label \( Y_{i} \), but they are causally associated with \( Y \). We refer to this subset as \( X^{\text{asoc}} \). Subset three is $\bigcup_{i \in \{1,2,...,k\}} Pa(Y_{i})$, and we denote this subset as ${X}^{\text{caus}}$. As illustrated in the figure, the causal relationships among variables are highly complex and uncertain. Within ${X}^{\text{cfd}}$, $X^{\text{asoc}}$, and ${X}^{\text{caus}}$, variables may exhibit mutual causal associations. Moreover, variables in $X^{\text{asoc}}$ may have causal connections with those in ${X}^{\text{cfd}}$, while variables in $X^{\text{asoc}}$ and ${X}^{\text{caus}}$ may hold causal relationships in arbitrary directions. We demonstrate the validity of the SCM:

\begin{theorem}
The proposed SCM in Figure \ref{fig:scm} can characterize the general causal relationships between various variables in the graph representation learning scenario. Furthermore, such an SCM satisfies the Causal Markov Assumption and the Causal Faithfulness Assumption.
\label{thm:scm}
\end{theorem}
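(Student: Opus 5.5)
The argument has two parts: generality of the SCM, then the two assumptions. Both rest on the fact that the SCM is built on the smallest indivisible elements $X$ of the graph (nodes, edges, features). Such elements cannot be internally split into sub-variables with their own causal relations, so no aggregation as in Proposition~\ref{prp:1} occurs.

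\textbf{Generality.} Take an arbitrary data-generating process for a graph dataset $\mathcal{G}$ with labels $Y$. By the definition of an SCM, each element $X_i$ is determined by a structural equation $X_i = f_i(Pa(X_i), U_i)$. Each $Y_j = g_j(Pa(Y_j), U_{Y_j})$ with $Pa(Y_j)\subseteq X\cup U$. I would then show that the partition $X = X^{\text{cfd}} \cup X^{\text{asoc}} \cup X^{\text{caus}}$ is exhaustive and disjoint. Every $X_i$ either has no directed path to $Y$ (so it lies in $X^{\text{cfd}}$), or lies in $\bigcup_i Pa(Y_i)$ (so it lies in $X^{\text{caus}}$), or has a path to $Y$ without being a parent (so it lies in $X^{\text{asoc}}$).

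Next I would enumerate the possible edge types. These are $U\to X$, edges within each subset, edges between subsets, and $X^{\text{caus}}\to Y$. I would check that the dashed (arbitrary-direction) arrows in Figure~\ref{fig:scm} cover every edge not excluded by these definitions. For instance, an edge $X^{\text{cfd}}\to X^{\text{asoc}}$ is impossible, because its tail would then have a path to $Y$ and so would not be in $X^{\text{cfd}}$. Likewise $X^{\text{caus}}\to X^{\text{cfd}}$ is impossible, but $X^{\text{cfd}}$ may receive edges. It follows that any acyclic causal structure over these elements is a subgraph of the depicted template, which establishes generality.

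\textbf{Causal Markov.} Since $U$ contains all exogenous variables and the dashed $U\to X$ arrows allow shared exogenous causes to be represented explicitly, I treat the model as a (semi-)Markovian SCM. When the exogenous noises are jointly independent, the standard result of \citet{pearl2009causality} applies: a recursive SCM with independent disturbances induces a distribution satisfying the local Markov property with respect to its DAG. Any dependence among the $U_i$ is modeled as common $U$ parents, so the $U_i$ in the augmented DAG can be taken as mutually independent roots. Acyclicity holds because each $X_i$ is an atomic element and the model is a DAG.

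\textbf{Causal Faithfulness.} This is the main obstacle, because faithfulness is not implied by the structure and can fail through exact cancellation of path effects. I would argue in two steps.

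\begin{enumerate}
\item Because the variables are atomic, no conditional independence can arise artificially from merging. This is exactly the mechanism by which faithfulness was violated in the proof of Proposition~\ref{prp:1}.
\item For parametrized structural equations, the set of parameters yielding an unfaithful distribution has Lebesgue measure zero. This is the standard Spirtes--Meek result for linear Gaussian and discrete models. Hence faithfulness holds generically, and I would state it as holding for almost all parameterizations.
\end{enumerate}

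I expect this genericity step, and the need to phrase the claim as ``almost surely'' rather than unconditionally, to be the delicate point.
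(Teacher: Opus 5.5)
Your generality and Markov arguments are sound in outline, but the faithfulness clause is not delivered. The theorem asserts outright that the SCM satisfies faithfulness. Your step (2) gives only ``for almost all parameters'', and only within the linear-Gaussian and discrete families where the Spirtes--Meek measure-zero results hold; the mechanisms over nodes, edges and features here are arbitrary nonlinear maps, to which those results do not apply.

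More seriously, genericity needs nondegenerate noise, and the paper's own setting makes labels deterministic: in the proof of Proposition~\ref{prp:det}, $p(Y\mid G_i)$ is a point mass on the true label. Faithfulness then fails for every parameter value, not just on a null set. Take $X_1 \to X_2 \to Y$ with $X_1 \in X^{\text{asoc}}$, $X_2 \in X^{\text{caus}}$ and $Y = X_2$. Conditioning on $Y$ fixes $X_2$, so $X_1 \perp \!\!\! \perp X_2 \mid Y$, although $X_1$ and $X_2$ are adjacent. No better argument closes this gap, because the unconditional clause is false in general.

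Your step (1) does not help either: excluding merging-induced independencies leaves cancellation and determinism untouched. The paper's own argument (``for any $a \perp \!\!\! \perp b$, $Pa(a)$ can block all causal effects from $a$ to $b$'') fails too. It merely restates the Markov blocking property and never explains why every independence in the distribution must come from d-separation. The honest repair is to make faithfulness an explicit hypothesis rather than a conclusion, either outright or as generic faithfulness under nondegenerate noise in a stated parametric family.

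There are also smaller points in the generality part.
\begin{itemize}
\item You wrote that $X^{\text{caus}} \to X^{\text{cfd}}$ is impossible. That is precisely the permitted direction, since the head gains no path to $Y$. The forbidden direction is $X^{\text{cfd}} \to X^{\text{caus}}$, and the template orients every edge between $X^{\text{asoc}} \cup X^{\text{caus}}$ and $X^{\text{cfd}}$ into $X^{\text{cfd}}$.
\item You allow $Pa(Y_j) \subseteq X \cup U$. But a $U_i$ shared by some $X_k$ and $Y_j$, or an edge $Y_j \to X_k$, cannot be represented in the template and would break $X_i \perp \!\!\! \perp Y \mid X^{\text{caus}}$, on which the paper's Step~1 relies. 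You need, as the paper tacitly assumes, that each label is a sink with private noise.
\item Acyclicity is a modelling stipulation, not a consequence of atomicity.
\end{itemize}

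With these fixes, your route for the first two clauses is genuinely different from, and better than, the paper's. The paper rebuilds the template by running the adjacency and orientation phases of the PC algorithm, whose correctness already presupposes the Markov and faithfulness conditions the theorem is meant to establish. Your derivation instead works directly from the structural equations and the definitions of the three subsets, together with Pearl's theorem for Markovian models (keeping $U$ in the graph as independent roots). That avoids the circularity.
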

The proof can be found in \textbf{Appendix} \ref{prf:scm}.

\subsection{Further Discussion}

Based on the proposed SCM that strictly satisfies the premises of causal inference, we wonder what it would cost to achieve perfectly accurate causal modeling using a GNN model? Intuitively, this would require analyzing every individual data element within the graph and conducting interventions. To this end, we conducted a theoretical analysis and, for both atomic interventions (intervening on a single variable at a time) and non-atomic interventions (intervening on multiple variables simultaneously), we derived the corresponding lower bounds on the number of interventions required:

\begin{theorem} 
   Based on the SCM in Theorem \ref{thm:scm}, when utilizing GNN to model causal relationships, for atomic interventions, the lower bound of the number of interventions required is $\min_{\mathcal{M}^{\text{micro}}} \left( \left\lceil \frac{\frac{1}{\lambda}|\Big(\bigcup_{i=1}^{|\mathcal{G}|}G_{i}\Big)|+|{Y}|-r(\mathcal{M}^{\text{micro}})}{2} \right\rceil \right)$, where $\mathcal{M}^{\text{micro}}$ denotes any DAG that is equivalent to the graphical representation of the ground truth causal model, and the vertex set of $\mathcal{M}^{\text{micro}}$ = $\Big(\bigcup_{i=1}^{|\mathcal{G}|}G_{i}\Big)\cup{Y}$, $\lambda$ denotes the average times that each variable occurs among each of the samples within dataset $\mathcal{G}$, $r(\cdot)$ calculates the total number of maximal cliques. For non-atomic interventions, the number of interventions required exceeds $\mathcal{O} \left( \min_{k} \left(\frac{\frac{1}{\lambda}|\Big(\bigcup_{i=1}^{|\mathcal{G}|}G_{i}\Big)|+|{Y}|}{k}log\left(log\left(k\right)\right)\right)\right)$.
    \label{thm:upb}
\end{theorem}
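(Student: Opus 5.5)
The plan is to reduce the question to the standard problem of learning a causal DAG from interventions, and then plug in the known worst-case lower bounds on the number of interventions for that problem.

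\textbf{Step 1: identify the vertex set of the causal graph.} By Theorem~\ref{thm:scm}, the SCM built on the smallest divisible elements satisfies the Causal Markov and Faithfulness Assumptions. So observational data identify the ground-truth DAG only up to its Markov equivalence class. Any DAG $\mathcal{M}^{\text{micro}}$ in that class has vertex set $\big(\bigcup_{i=1}^{|\mathcal{G}|}G_i\big)\cup Y$. The elements of $\bigcup_i G_i$ are counted with multiplicity across samples. Each underlying variable occurs on average $\lambda$ times, so the number of distinct variables is $n=\frac{1}{\lambda}\big|\bigcup_{i=1}^{|\mathcal{G}|}G_i\big|+|Y|$. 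A GNN that models causal relationships exactly must recover the full orientation of $\mathcal{M}^{\text{micro}}$. That includes every edge within and between $X^{\text{cfd}}$, $X^{\text{asoc}}$, $X^{\text{caus}}$ and $Y$, since Theorem~\ref{thm:scm} allows arbitrary directions among these sets. Hence the number of interventions the GNN needs is at least the number needed by any intervention design that orients the essential graph.

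\textbf{Step 2: atomic interventions.} I would invoke the worst-case lower bound of Eberhardt et al.\ and its refinement by Hauser and B\"uhlmann for single-vertex interventions. The argument proceeds in three parts.
\begin{itemize}
\item In the worst case the undirected part of the essential graph decomposes into chordal components.
\item Within a clique of size $m$, each atomic intervention fully orients edges at essentially one vertex. Orienting a clique of size $m$ therefore requires $\lceil (m-1)/2\rceil$ interventions once Meek rules are used; intervening on alternating vertices lets each intervention resolve two vertices' relative orders.
\item Summing over maximal cliques and using $\sum_C (|C|-1) \ge n - r(\mathcal{M}^{\text{micro}})$ for a clique cover of a chordal graph gives $\lceil (n-r(\mathcal{M}^{\text{micro}}))/2\rceil$.
\end{itemize}
Because the true graph can be any member of the equivalence class, the bound is stated as the minimum over $\mathcal{M}^{\text{micro}}$.

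\textbf{Step 3: non-atomic interventions.} Here I would use the separating-system lower bound of Shanmugam et al.\ (NeurIPS 2015) for interventions of size at most $k$. A design with bounded intervention size must form a (randomized) separating system on the $n$ vertices. For adaptive or randomized designs, any such design needs $\Omega\!\big(\frac{n}{k}\log\log k\big)$ interventions in the worst case. Substituting $n$ from Step 1 and minimizing over the allowed intervention size $k$ gives the stated bound.

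\textbf{Main obstacle.} The hardest step is the justification in Step 1 that ``modeling causal relationships with a GNN'' is at least as hard as full structure identification. The difficulty is that a GNN only observes samples in which a variable appears about $\lambda$ times. I would formalize it as follows:
\begin{itemize}
\item Show that an intervention in the learning process corresponds to an intervention on the SCM's variables.
\item Show that repeated occurrences of the same variable across samples cannot be intervened on independently. By faithfulness they are the same causal node, which is why the count is divided by $\lambda$.
\end{itemize}
Once that reduction is accepted, the remaining steps follow directly from the cited lower bounds.
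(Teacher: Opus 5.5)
Your overall route matches the paper's. You count the finest-grained variables as $n=\frac{1}{\lambda}\big|\bigcup_i G_i\big|+|Y|$, feed $n$ into a known atomic lower bound $\lceil (n-r)/2\rceil$ minimized over the equivalence class, and feed it into the Shanmugam et al.\ bound $\frac{n}{k}\log\log k$ minimized over $k$. Steps 1 and 3 agree with the paper, which also does not formalize your ``main obstacle''; it only appeals to Pearl's microscopic-decomposition remark.

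The problem is how you justify the atomic bound in Step 2. Eberhardt et al.\ and Hauser--B\"uhlmann give worst-case \emph{search} counts, e.g.\ $m-1$ single-vertex interventions for an $m$-clique. The paper instead takes $\lceil (n-r)/2\rceil$ from Choo et al.'s \emph{verification} lower bound. That bound rests on the fact that any atomic intervention set orienting the DAG must contain a vertex cover of its covered edges. Your per-clique value $\lceil (m-1)/2\rceil$ is exactly that verification number, since the covered edges of a complete DAG form a Hamiltonian path. It does, however, contradict your own remark that each intervention orients edges at only one vertex.

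The step that actually fails is the summation over maximal cliques. Maximal cliques of a chordal component overlap, and one intervention on a shared vertex counts for every clique containing it. So $|\mathcal{I}|\ge\sum_C\lceil(|C|-1)/2\rceil$ is false. Take a star with center $u$ and $k$ leaves: it has $k$ maximal cliques $\{u,v_i\}$ and your sum is $k$, yet intervening on $u$ alone orients every edge. The true inequality $\sum_C(|C|-1)\ge n-r$ cannot rescue a false first inequality.

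To fix it, either cite the Choo et al.\ bound directly, as the paper does, or sum over a \emph{vertex-disjoint} structure instead of over cliques. In each chain component the DAG has no v-structures, so every $Pa(v)$ is a clique, and the following hold:
\begin{itemize}
\item Each vertex has at most one incoming covered edge, namely the one from its last parent.
\item If $t$ has no outgoing covered edge, then $Pa(t)\cup\{t\}$ is a maximal clique. Otherwise the earliest $y$ with $Pa(t)\cup\{t\}\subseteq Pa(y)$ satisfies $Pa(y)=Pa(t)\cup\{t\}$, which makes $t\to y$ covered.
\item Covered edges inside a chain component remain covered in the whole DAG, because adjacent vertices of a component share their outside parents.
\end{itemize}
Keeping one outgoing covered edge per vertex therefore splits the $n$ vertices into at most $r$ vertex-disjoint paths of covered edges, each ending at a vertex whose clique is a distinct maximal clique. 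A vertex cover must use at least $\lfloor \ell_P/2\rfloor\ge(\ell_P-1)/2$ vertices of a path with $\ell_P$ vertices. Summing over paths gives $|\mathcal{I}|\ge\lceil (n-r)/2\rceil$.
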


The proof can be found in \textbf{Appendix} \ref{prf:upb}. Theorem \ref{thm:upb} provides a lower bound on the number of interventions required. Based on this theorem, for graph datasets, achieving accurate causal modeling would necessitate an extremely large number of interventions—at least on the order of $\mathcal{O}\Big(\bigcup_{i=1}^{|\mathcal{G}|}G_{i}\Big)$. Take the Citeseer dataset \citep{caragea2014citeseer} as an example; the required number of interventions would amount to several thousand. Given that interventions themselves are highly costly—and sometimes even infeasible—this raises a critical question: is it possible to achieve accurate causal modeling without performing such an excessive number of interventions?

\begin{theorem} 
Assume there exists a GNN model that satisfies the infinite approximation theorem \citep{cybenko1989approximation}, and that interventions are applied to ensure the GNN models the causal relationships between the graph variables and the labels. In this case, when applying causal inference in graph representation learning, it is possible to merge some variables from the original set \( X \) to form a new set \( S \), where \( |S| < |X| \), while ensuring that the causal relationships between the graph data and the labels are accurately modeled. However, the following conditions must be met: 

\quad (1) Variable \( s \) in \( S \) that satisfies $s \in Pa(Y)$ cannot simultaneously contain both the parent and child nodes of another variable \( v \in X \). 

\quad (2) Variables within \( X^{\text{caus}} \) cannot be merged with those from other sets.
\label{thm:cm}
\end{theorem}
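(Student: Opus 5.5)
The proof has two parts. First, an existence claim: some merging with $|S|<|X|$ preserves the modeling of the causal relationship between the graph data and the labels. Second, a necessity claim: conditions (1) and (2) must hold for that preservation. Throughout, we work in the SCM of Theorem \ref{thm:scm}. Since that SCM satisfies the Causal Markov and Faithfulness Assumptions, d-separation in the DAG coincides exactly with conditional independence in the data. By assumption, the GNN approximates arbitrarily well any measurable function, and the interventions make it model the interventional mechanism $P(Y\mid do(X^{\text{caus}}))$. By the Markov property, this mechanism equals $P(Y\mid Pa(Y))$.

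\emph{Existence.} We merge only variables outside $X^{\text{caus}}$. The natural choice is to collapse all of $X^{\text{cfd}}$, or a subset of $X^{\text{asoc}}$ that forms an ancestral block with no child in the block's own parents, into one aggregate $s$. We leave every element of $X^{\text{caus}}$ as its own singleton in $S$. Then $Pa(Y)$ is unchanged in the coarsened model, and $Y$ depends on the remaining variables only through $X^{\text{caus}}$. We check that the coarsened graph is still a DAG; merging a set closed under the relevant ancestry does not create cycles. We then check that it still satisfies the Markov and Faithfulness Assumptions with respect to the pushforward distribution, using the \citet{spirtes2009variable} criterion for valid variable aggregation. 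As a result, the GNN, acting as a universal approximator, can represent $P(Y\mid X^{\text{caus}})$ exactly on $S$, and $|S|<|X|$ whenever the merged block has at least two elements.

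\emph{Necessity of (1).} Suppose some $s\in S$ with $s\in Pa(Y)$ contains both a parent $p$ and a child $c$ of some $v\in X$ with $v\notin s$. Then the coarsened graph has the edges $s\to v$ (via $p$) and $v\to s$ (via $c$), which is a cycle. We could instead drop one of these edges, or keep $v$ apart as a mediator. Doing so either breaks the Markov condition, because $v$ is no longer screened off correctly, or breaks Faithfulness, because the path $p\to v\to c$ becomes internal to $s$ while $v$ still carries dependence. This mirrors the construction in the proof of Proposition \ref{prp:1}. Because $s$ is a parent of $Y$, intervening on $s$ is then ill-defined: $do(s)$ fixes $c$ but not the $p\to v\to c$ route, so $P(Y\mid do(s))$ is not identified and the GNN cannot be trained to model it.

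\emph{Necessity of (2).} Suppose some $x\in X^{\text{caus}}$ is merged with some $w\notin X^{\text{caus}}$ into $s$. Then $do(s)$ also sets $w$. By Faithfulness we can choose parameters in which $w$ affects $Y$ only through a path that is not blocked in the same way as $x$'s direct edge (for $w\in X^{\text{asoc}}$), or in which $w$ is correlated with $Y$ through confounding (for $w\in X^{\text{cfd}}$). In either case, $P(Y\mid do(s))$ is not a function of $x$ alone and differs from $P(Y\mid do(x))$. Hence the learned mapping encodes spurious or indirect dependence, and the causal relationship is not accurately modeled.

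I expect the main obstacle to be making ``accurately modeled'' precise enough for these counterexamples to be rigorous. Conditions (1) and (2) are really necessary in the worst case over admissible SCMs rather than for every SCM, so I would first restate the claim in that form. I would then build explicit small linear-Gaussian counterexamples, in the style of Appendix \ref{prf:1}, for each condition.
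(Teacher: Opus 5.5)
Your existence half differs from the paper. The paper reads the statement as an equivalence and argues that \emph{every} merge obeying (1) and (2) preserves the relevant interventional distributions, through its identities $P(Y\mid do(X\setminus X^{\text{caus}}))=P(Y\mid do(S\setminus S^{\text{caus}}))$ and $P(Y\mid do(X^{\text{caus}}))=P(Y\mid do(S^{\text{caus}}))$ and their per-block versions. You instead exhibit a single admissible merge. That is all the literal ``it is possible'' asks for, and your witness is sound: no variable of $X^{\text{cfd}}$ has an edge into $X^{\text{asoc}}\cup X^{\text{caus}}$, since it would then have a directed path to $Y$. So collapsing $X^{\text{cfd}}$ yields an acyclic quotient with $Pa(Y)$ untouched. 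What you give up is any statement about other merges satisfying (1) and (2). Drop the claim that the quotient stays faithful, though. If $A\to m_1$, $B\to m_2$ and $m_1,m_2\in X^{\text{cfd}}$ are merged into $M$, the quotient has the collider $A\to M\leftarrow B$ even when $A\perp\!\!\!\perp B\mid (m_1,m_2)$. You do not need faithfulness there anyway.

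The genuine gap is in the necessity half, where the justifications you give for both conditions are incorrect.

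For (1), $p$ and $c$ are both components of $s$, so $do(s)$ sets both and thereby cuts $v\to c$. Then $P(Y\mid do(s))=P(Y\mid do(p,c,\dots))$ is a well-defined quantity of the fine-grained SCM, directly estimable from the interventions the hypothesis grants, so ``not identified'' is false. Only your first observation survives: the cycle $s\to v\to s$ in the quotient, which is exactly the paper's ``confounding arc.'' It yields necessity only under a structural reading of ``accurately modeled,'' namely that the merged variables must carry an acyclic model satisfying the Markov condition.

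For (2), the case $w\in X^{\text{cfd}}$ is simply false. Since $w$ is not an ancestor of $Y$, $P(Y\mid do(x,w))=P(Y\mid do(x))$ by the third rule of the do-calculus. Intervening removes exactly the confounding association you appeal to, so no linear-Gaussian counterexample exists. For $w\in X^{\text{asoc}}$, $w$ reaches $Y$ only through parents of $Y$. Once the remaining blocks of $S^{\text{caus}}$ are intervened on as well, $w$ is inert and $P(Y\mid do(S^{\text{caus}}))=P(Y\mid do(X^{\text{caus}}))$.

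So with the vector-valued merging you implicitly use (``$do(s)$ also sets $w$''), your argument does not establish (2). It would need an extra premise you have not stated, e.g.\ that merging is a non-injective aggregation, so that $do(s=s_0)$ is an ambiguous manipulation whose preimages $(x,w)$ induce different laws of $Y$. Your repair of (1) is structural while that of (2) needs lossy aggregation, so fix one precise setting in which both arguments run before building counterexamples. The paper's own necessity step for (2), that ``confounding effects cannot be eliminated through intervention,'' rests on the same intuition and so does not fill this gap.
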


The proof can be found in \textbf{Appendix} \ref{prf:cm}. Theorem \ref{thm:cm}, in fact, provides a simplified solution from the perspective of variable merging. However, this solution is subject to conditions and still requires partial knowledge of the underlying causal relationships. Nevertheless, the theorem offers a principled approach to precise causal modeling and serves as a theoretical foundation for it.

\subsection{Experimental Analysis}

\subsubsection{RWG Dataset}


\begin{table}[ht]\tiny
    \setlength{\extrarowheight}{0pt} 
	\setlength{\tabcolsep}{5pt}
 	\caption{Comparison between our proposed RWG dataset and existing benchmark datasets.}
	\label{tab:nb}
 \vskip -0.3in
	\begin{center}
		\begin{tabular}{lcccccccc}
            \toprule
			\multirow{2}*{Dataset} & Adjustable    & Known & Adjustable & Adjustable & Adjustable & Adjustable & Real-world \\
            &  Elements ($\uparrow$) & Causality & Motif & Node Feature & Edge Connection & Assemble Mode & Grounded Data   \\
			\hline\rule{-2pt}{7pt}

            \text{Citeseer} \citep{giles1998citeseer}  & Fixed & \textcolor{red}{$\times$} & \textcolor{red}{$\times$} & \textcolor{red}{$\times$} & \textcolor{red}{$\times$} & \textcolor{red}{$\times$}& \textcolor{green}{\checkmark}  \\

			\text{PROTEINS} \citep{DBLP:journals/corr/abs-2007-08663} & Fixed & \textcolor{red}{$\times$} & \textcolor{red}{$\times$} & \textcolor{red}{$\times$} & \textcolor{red}{$\times$} & \textcolor{red}{$\times$}& \textcolor{green}{\checkmark}  \\
   
			\text{Synthetic Graph} \citep{DBLP:conf/nips/YingBYZL19} & 5 & \textcolor{green}{\checkmark} & \textcolor{green}{\checkmark} & \textcolor{red}{$\times$} & \textcolor{red}{$\times$} & \textcolor{red}{$\times$}& \textcolor{red}{$\times$}  \\
   
   		\text{Spurious-Motif } \citep{DBLP:conf/iclr/WuWZ0C22}  & 6 & \textcolor{green}{\checkmark} & \textcolor{green}{\checkmark} & \textcolor{red}{$\times$} & \textcolor{red}{$\times$} & \textcolor{red}{$\times$}& \textcolor{red}{$\times$}   \\

             \text{CRCG }\citep{DBLP:conf/aaai/GaoYLSJWZ024}  & 54 & \textcolor{green}{\checkmark} & \textcolor{green}{\checkmark} & \textcolor{green}{\checkmark} & \textcolor{green}{\checkmark} & \textcolor{red}{$\times$}& \textcolor{red}{$\times$}  \\
            \hline \rule{-2pt}{7pt}
			RWG  & 90 & \textcolor{green}{\checkmark} & \textcolor{green}{\checkmark} & \textcolor{green}{\checkmark} & \textcolor{green}{\checkmark} & \textcolor{green}{\checkmark} & \textcolor{green}{\checkmark}  \\
			\bottomrule 
		\end{tabular}
	\end{center}
    \vskip -0.18in
\end{table}

To further investigate the research problem while maintaining close ties to real-world scenarios, we introduce the \textit{\textbf{R}eal-\textbf{W}orld knowledge-based synthesized \textbf{G}raph} (RWG) dataset for empirical analysis. This dataset is grounded in real-world knowledge and rules, leveraging chemical and citation networks to construct synthetic graph samples for testing graph classification and node classification tasks. Specifically, the RWG dataset generates graph samples that closely resemble real-world chemical molecules by integrating various chemical motifs, connecting modules, and controllable parameters, ensuring clear and modelable internal causal relationships. It also simulates node features from real-world citation networks, constructing graph samples that approximate real-world citation structures, with known internal causal relationships. Table \ref{tab:nb} compares the RWG dataset with other related datasets. For additional details, please refer to \textbf{Appendix} \ref{sec:dataset}.

Next, we conduct experiments using the RWG dataset we have constructed to cross-validate with the previous theoretical analysis.

\subsubsection{Causal Modeling Capability}
\label{sec:causal modeling ability}

\begin{figure}[htbp]
    \centering
    \subfigure[Without Confounder]{
        \begin{minipage}{0.31\textwidth}
            \centering
            \includegraphics[width=\linewidth]{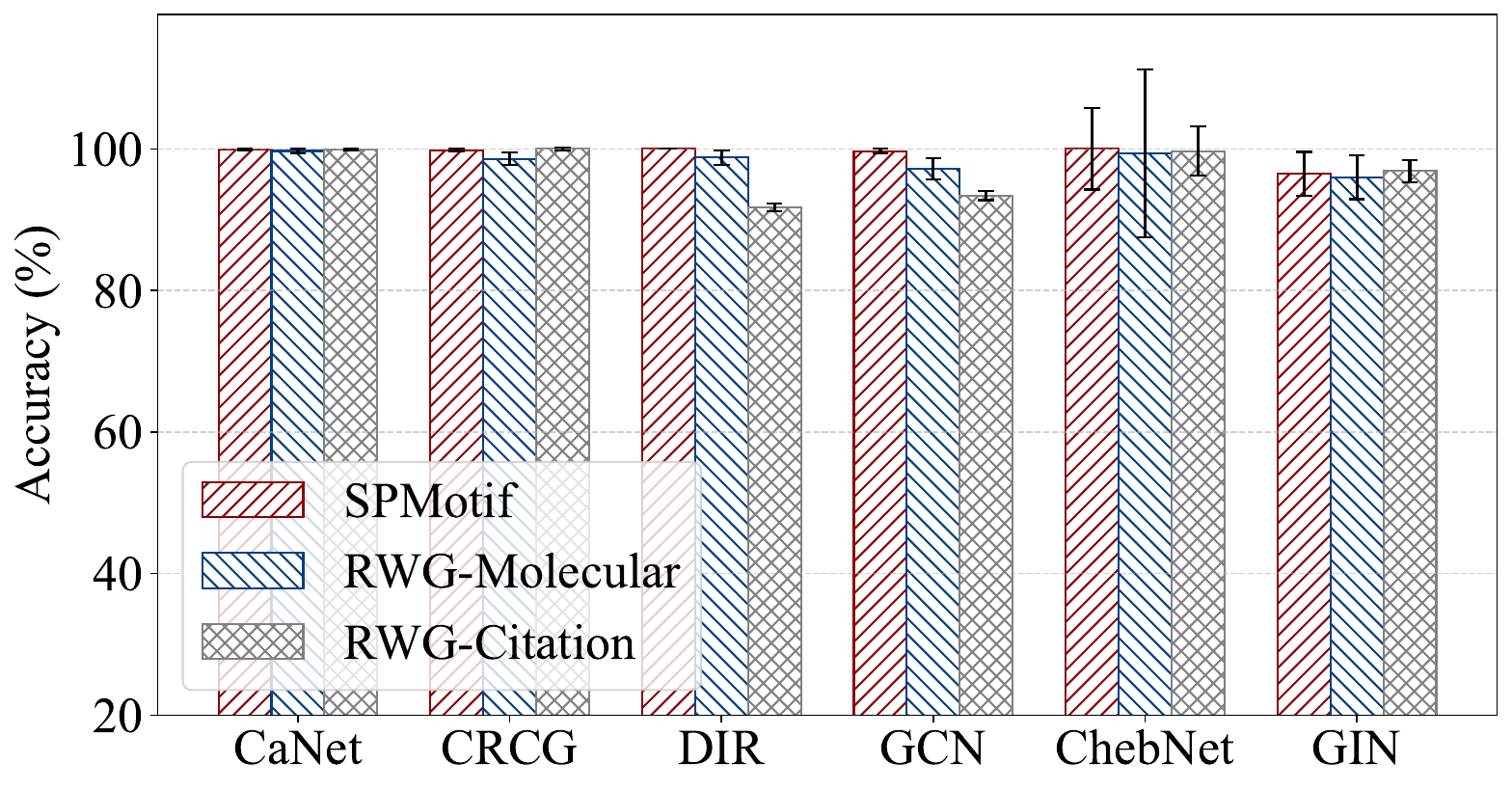}
            \vskip -0.005in
        \end{minipage}
    }
    \subfigure[With Confounder]{
        \begin{minipage}{0.31\textwidth}
            \centering
            \includegraphics[width=\linewidth]{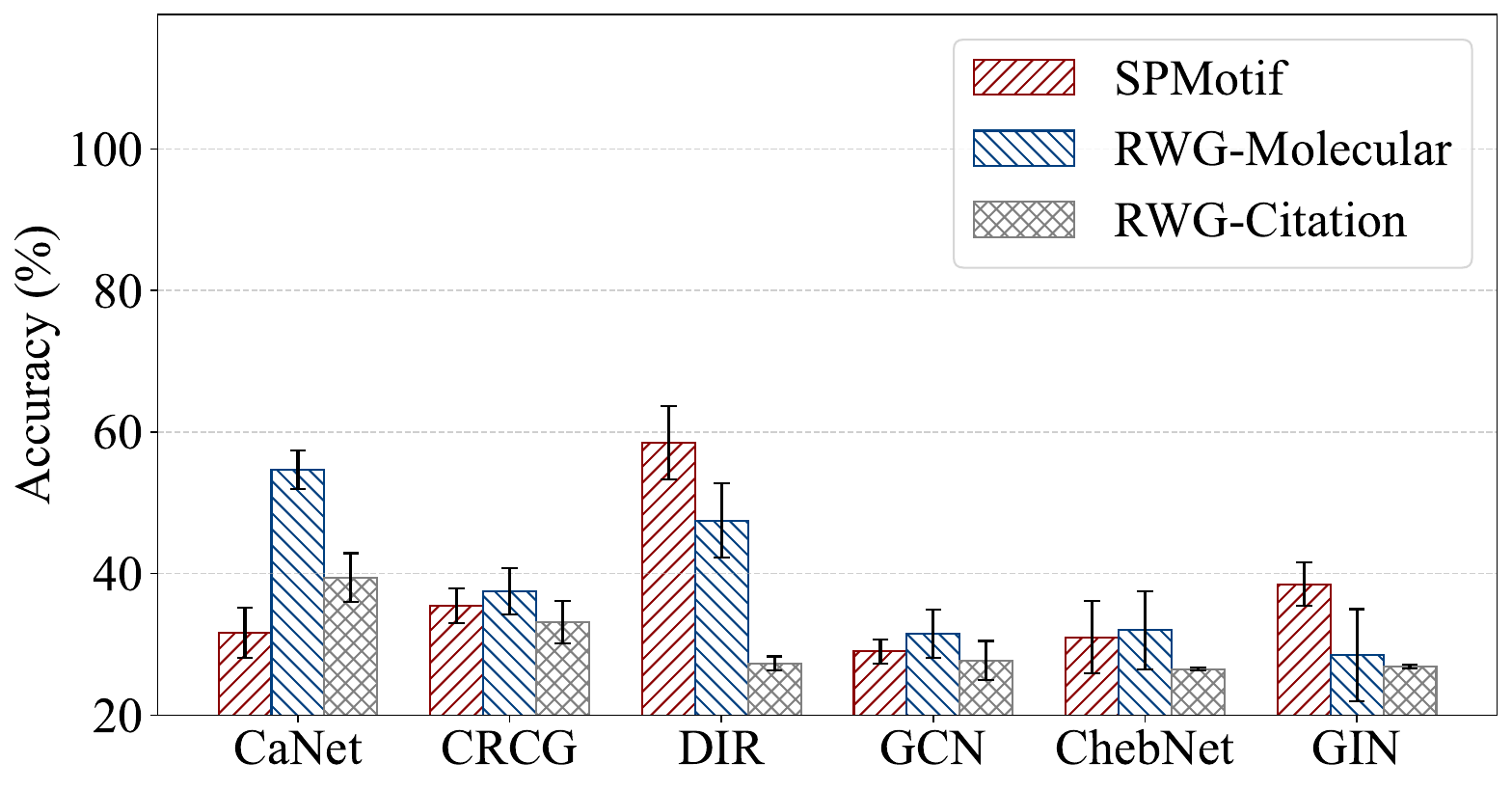}
            \vskip -0.005in
        \end{minipage}
    }
    \subfigure[With Confounder and Intervention]{
        \begin{minipage}{0.31\textwidth}
            \centering
            \includegraphics[width=\linewidth]{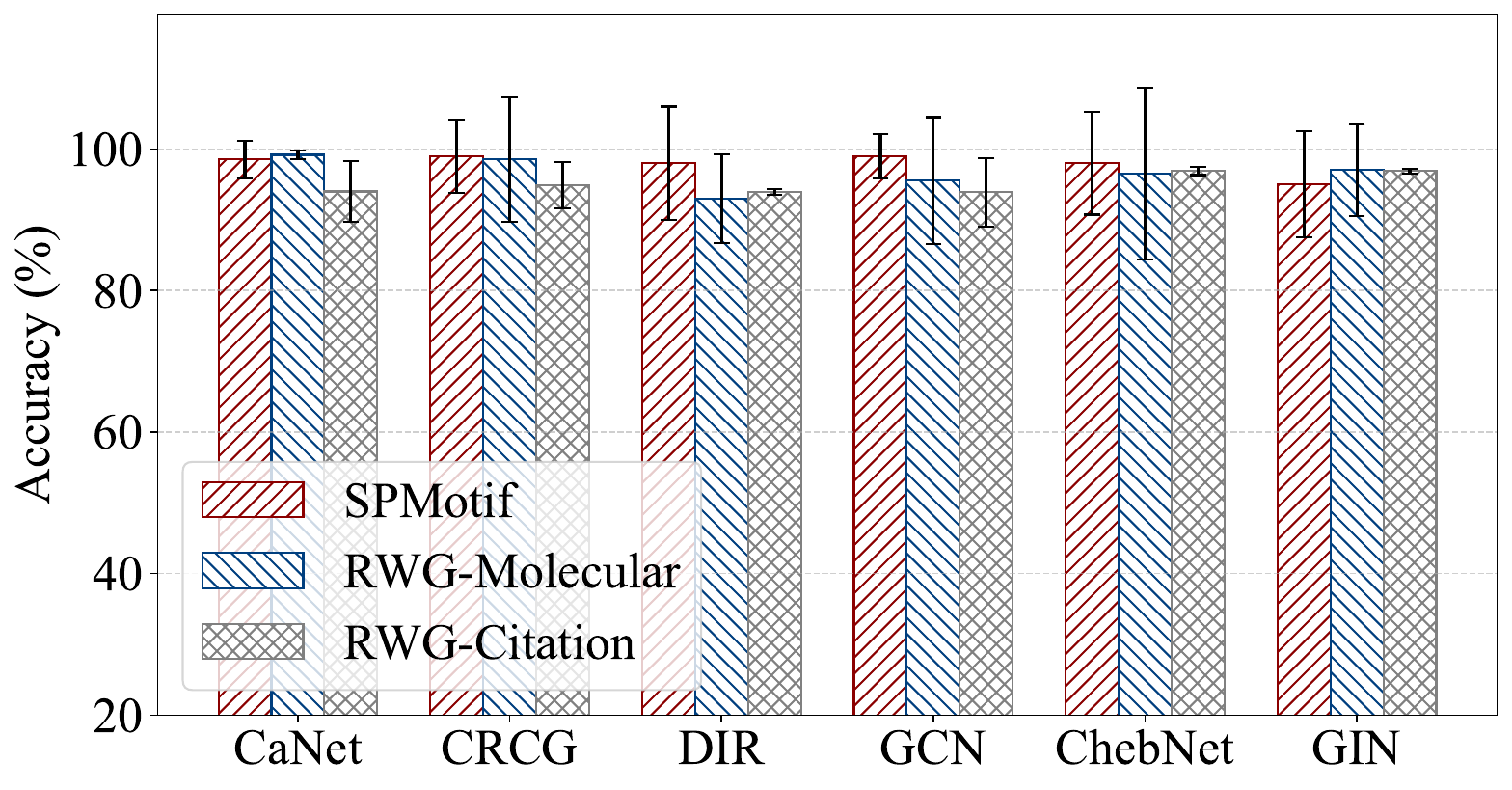}
            \vskip -0.005in
        \end{minipage}
    }
    \vskip -0.1in
    \caption{Test accuracy comparison across three different scenarios.}
    \label{fig:three-sen-int}
    \vskip -0.1in
\end{figure}

We first analyze the causal modeling ability of multiple GNN baselines when dealing with different datasets and the impact of intervention operations on this ability. Three datasets are used: SPMotif \citep{DBLP:conf/iclr/WuWZ0C22}, RWG-Molecular, and RWG-Citation. Among them, RWG-Molecular and RWG-Citation are graph classification and node classification datasets based on RWG, while SPMotif is a widely used artificially synthesized graph dataset. The causal relationships in these datasets can be formally modeled and precisely controlled.

Six GNN baselines are used for experimental analysis, including three causal relationship modeling-enhanced GNN baselines: CaNet \citep{DBLP:conf/www/WuNYBY24}, CRCG \citep{DBLP:conf/aaai/GaoYLSJWZ024}, DIR \citep{DBLP:conf/iclr/WuWZ0C22}, and three general GNN baselines: GCN \citep{DBLP:conf/iclr/KipfW17}, ChebNet \citep{DBLP:conf/nips/DefferrardBV16}, and GIN \citep{DBLP:conf/iclr/XuHLJ19}. We first make the causal relationships in the dataset explicit, ensuring that there is no interference from confounders, so that the probabilistic associations in the dataset are equivalent to the causal associations. In other words, all the elements we use to construct the dataset are associated with the labels. At the same time, by reducing the problem's difficulty and conducting multiple rounds of training, we make the GNN modeling performance approach 100\% test accuracy. Then, we introduce confounders into the dataset and apply interventions to observe the effects. When applying interventions, we follow the approach used by other causal graph representation learning methods \citep{DBLP:conf/iclr/WuWZ0C22,DBLP:conf/aaai/GaoYLSJWZ024}, treating the confounder as a whole for the intervention. However, since we have complete knowledge of the internal causal relationships within the data, we can ensure that the division of causal variables complies with Theorem \ref{thm:cm}. Specifically, we fix the confounder as specific, invariant graph data to eliminate interference and perform the intervention. Please refer to \textbf{Appendix} \ref{apx:expdtl} for baselines and dataset details.


Figure \ref{fig:three-sen-int} shows that introducing confounders degrades model performance, while interventions significantly improve accuracy, almost fully restore the no-confounder level. This result supports Theorem \ref{thm:cm}, demonstrating that intervention-based causal inference remains effective under reasonable variable merging. At the same time, it can be observed that there is still some inevitable performance degradation in real-world scenarios. This is related to the inherent limitations of GNN models and the inability of intervention methods applied to graph data to completely eliminate interference.

\subsubsection{Intervention Analysis}

In this section, we analyze intervention effects under varying conditions using the same datasets and baselines as in Section \ref{sec:causal modeling ability}. We simulate errors in variable merging by reassigning parts of $X^{\text{cfd}}$ to $X^{\text{caus}}$, thereby violating Theorem \ref{thm:cm}. Given that our dataset is artificially synthesized with controllable causal relationships, this operation merely entails treating the confounders as non-intervened components when applying causal interventions. Results are shown in Figure \ref{fig:curves}. As the violation increases, model performance degrades, indirectly validating Theorem \ref{thm:cm}. Moreover, SPMotif shows smaller fluctuations than RWG-Molecular and RWG-Citation, due to its simpler structure. This highlights the importance of RWG datasets, which better approximate real-world complexity and yield more reliable experimental outcomes.
\begin{figure}[htbp]
    \centering
        \subfigure[SPMotif]{
        \begin{minipage}{0.318\textwidth}
            \centering
            \includegraphics[width=\linewidth]{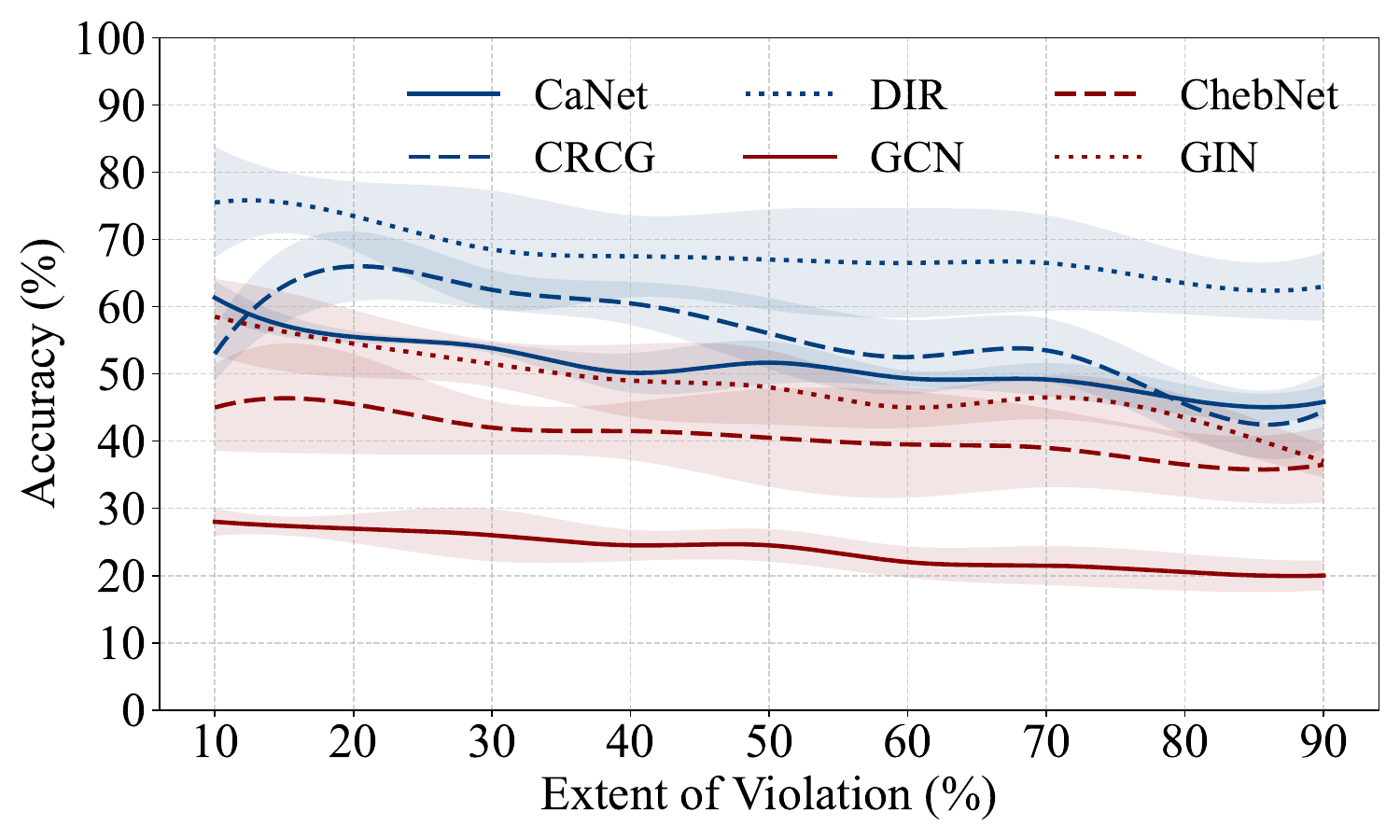}
            \vskip -0.005in
        \end{minipage}
    }
    \subfigure[RWG-Molecular]{
        \begin{minipage}{0.318\textwidth}
            \centering
            \includegraphics[width=\linewidth]{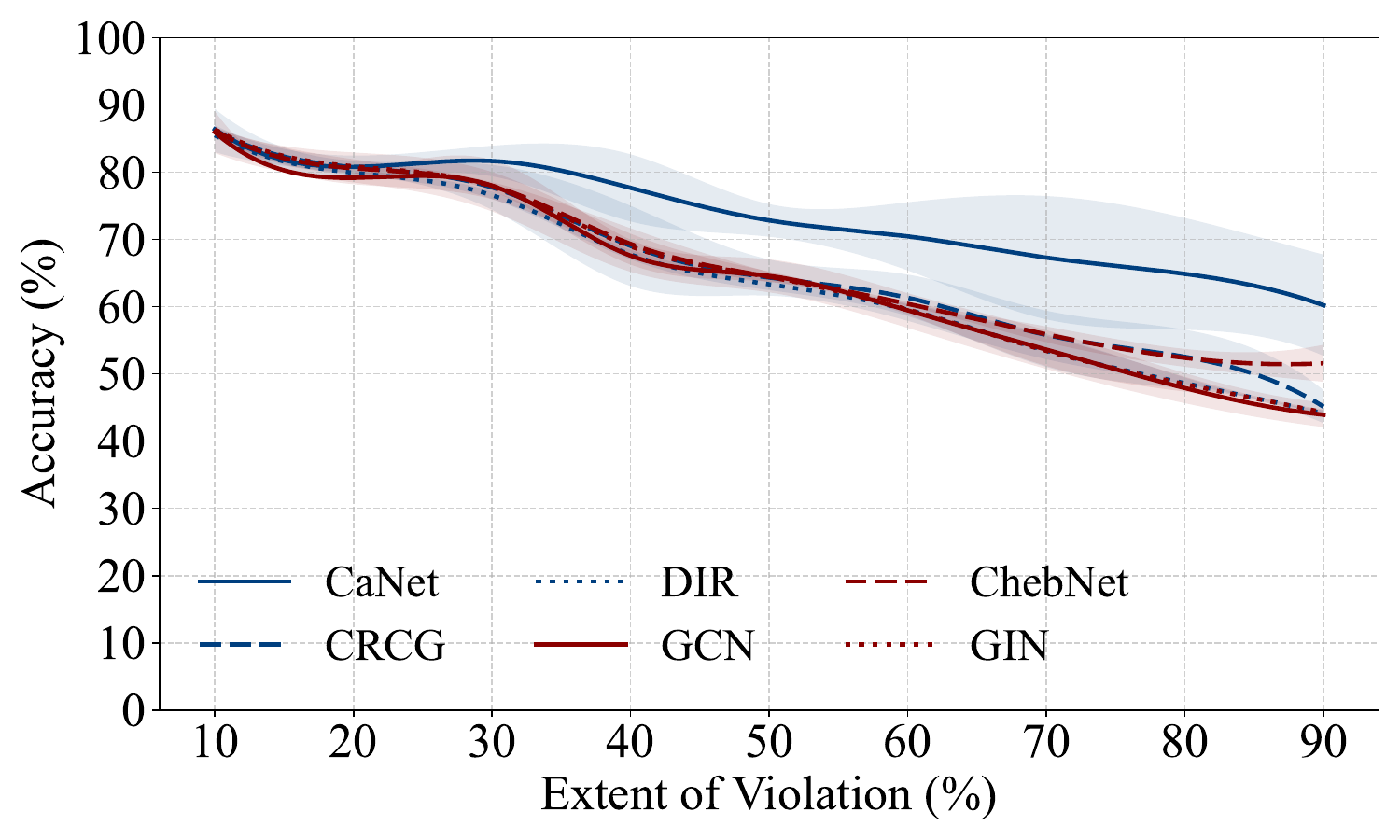}
            \vskip -0.005in
        \end{minipage}
    }
    \subfigure[RWG-Citation]{
        \begin{minipage}{0.318\textwidth}
            \centering
            \includegraphics[width=\linewidth]{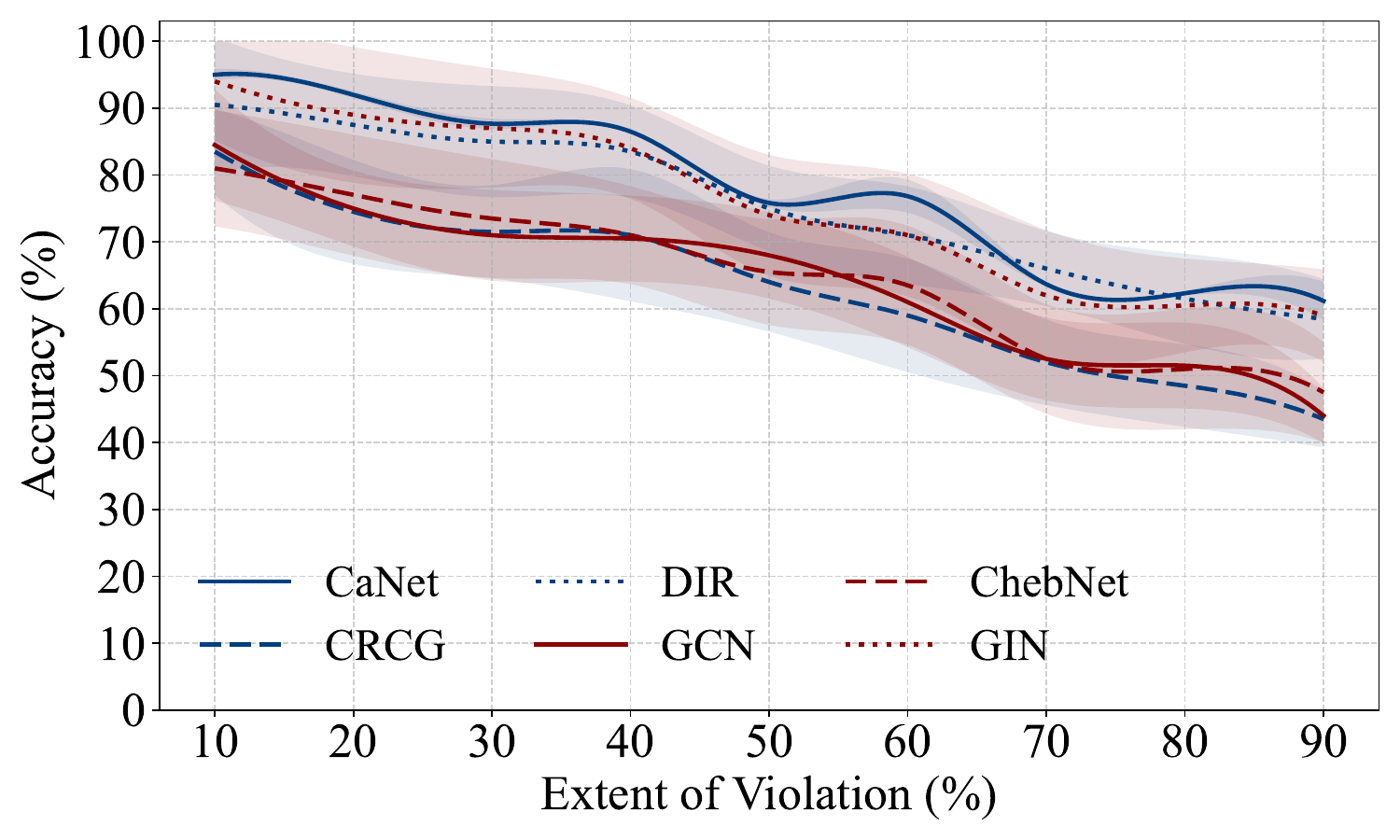}
            \vskip -0.005in
        \end{minipage}
    }
    \vskip -0.1in
    \caption{Performance of the methods when Theorem \ref{thm:cm} is violated to varying degrees. The horizontal axis represents the percentage of data in $X^{\text{cfd}}$ that is erroneously merged into $X^{\text{caus}}$.}
    \vskip -0.1in
    \label{fig:curves}
\end{figure}
\begin{figure}[htbp]
    \centering
        \subfigure[RWG-Molecular with single confounder.]{
        \begin{minipage}{0.23\textwidth}
            \centering
            \includegraphics[width=\linewidth]{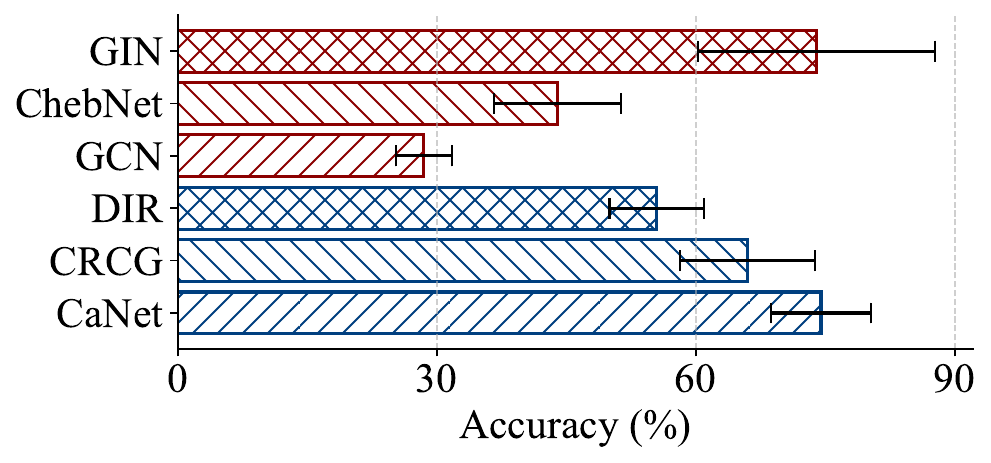}
            \vskip -0.005in
        \end{minipage}
    }
    \subfigure[RWG-Molecular with multiple confounders.]{
        \begin{minipage}{0.23\textwidth}
            \centering
            \includegraphics[width=\linewidth]{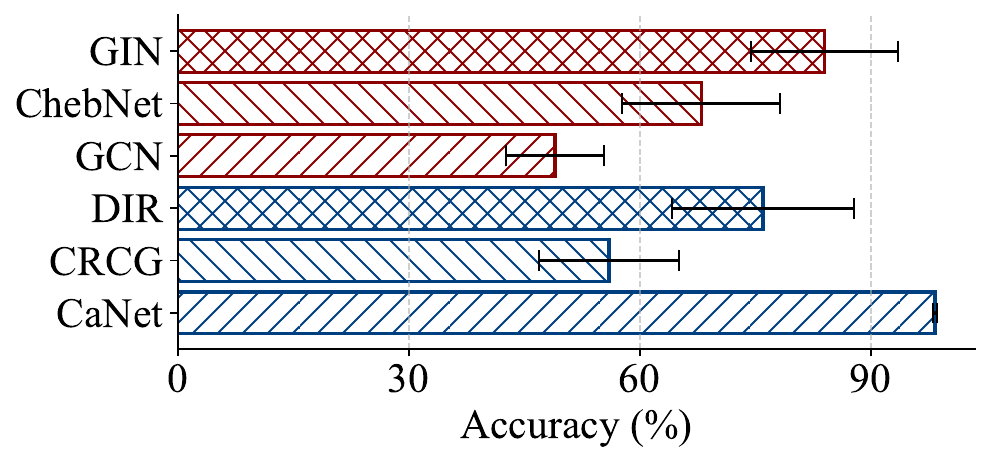}
            \vskip -0.005in
        \end{minipage}
    }
    \subfigure[RWG-Citation with confounders consisting of nodes.]{
        \begin{minipage}{0.23\textwidth}
            \centering
            \includegraphics[width=\linewidth]{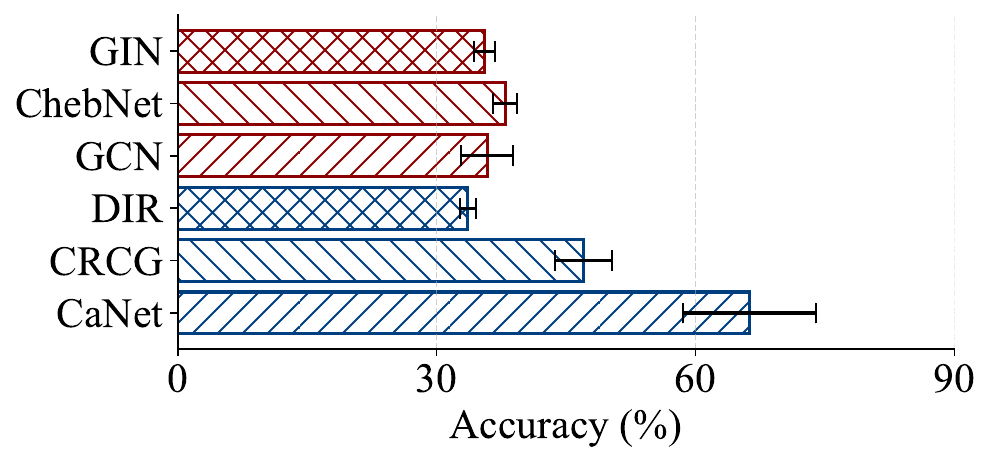}
            \vskip -0.005in
        \end{minipage}
    }
        \subfigure[RWG-Citation with confounders consisting of nodes and edges.]{
        \begin{minipage}{0.23\textwidth}
            \centering
            \includegraphics[width=\linewidth]{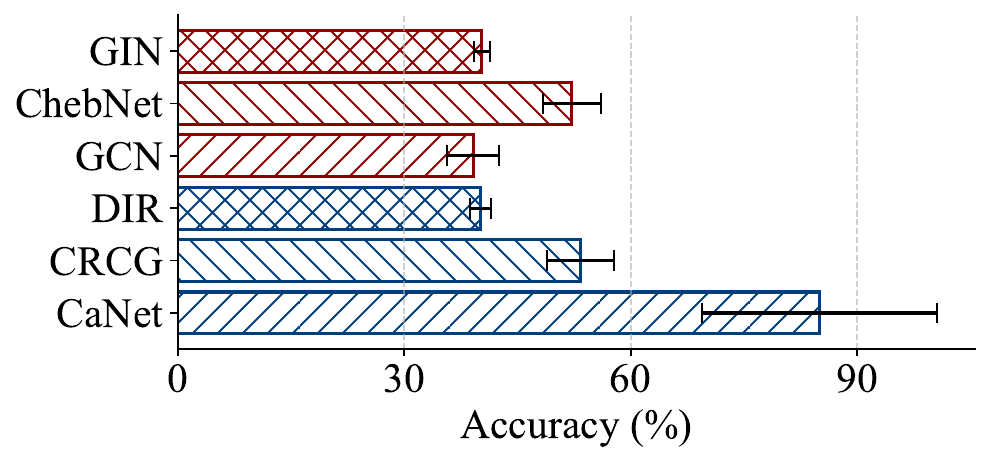}
            \vskip -0.005in
        \end{minipage}
    }
    \vskip -0.1in
    \caption{Performance comparison across different scenarios.}
    \vskip -0.2in
    \label{fig:kinds}
\end{figure}
\begin{figure}
    \centering
        \subfigure[Test upon causal data only.]{
        \begin{minipage}{0.3\textwidth}
            \centering
            \includegraphics[width=\linewidth]{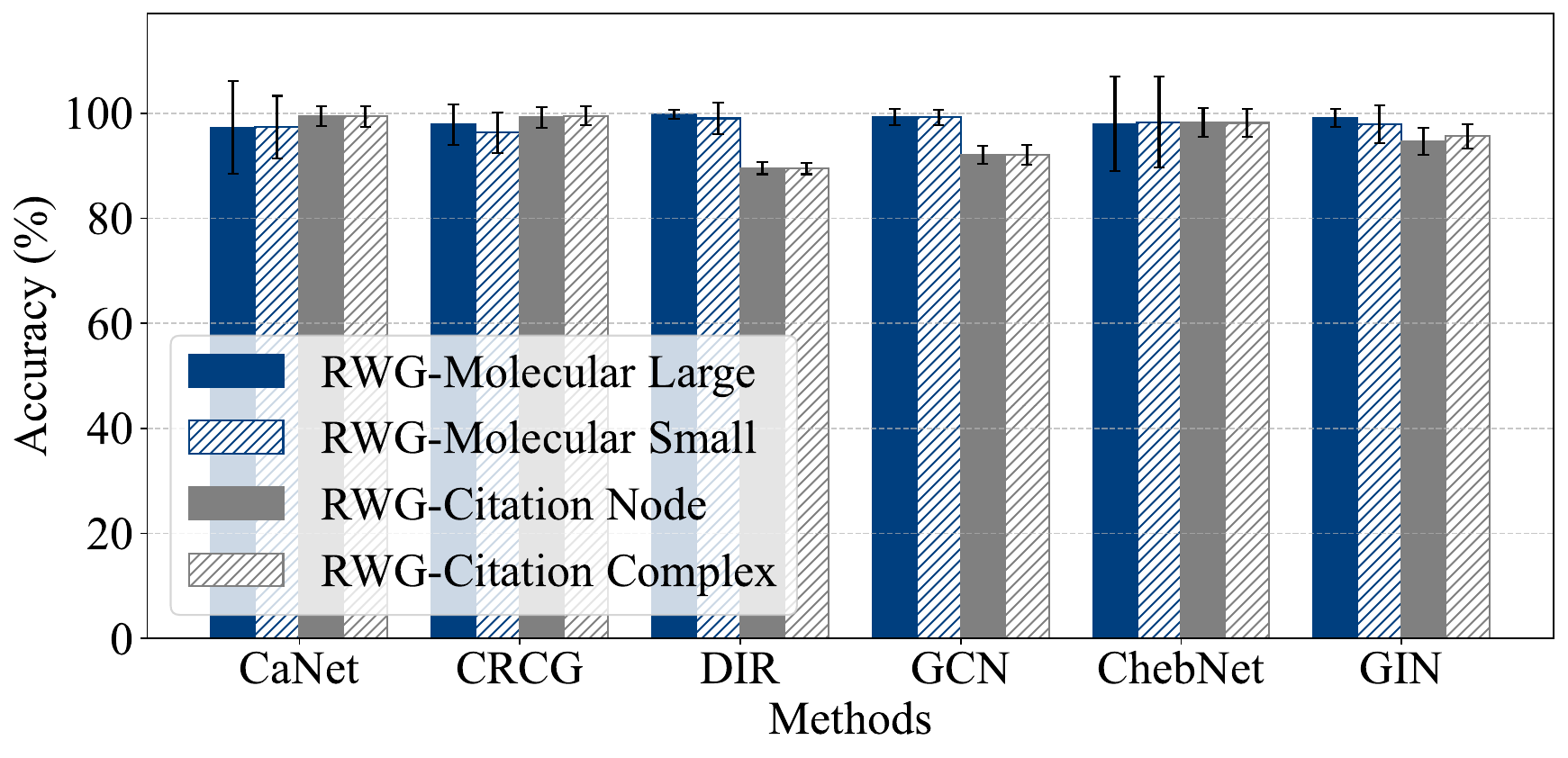}
            \vskip -0.005in
            \label{fig:pure causal a}
        \end{minipage}
    }
        \subfigure[Test upon extra causally irrelevant data.]{
        \begin{minipage}{0.3\textwidth}
            \centering
            \includegraphics[width=\linewidth]{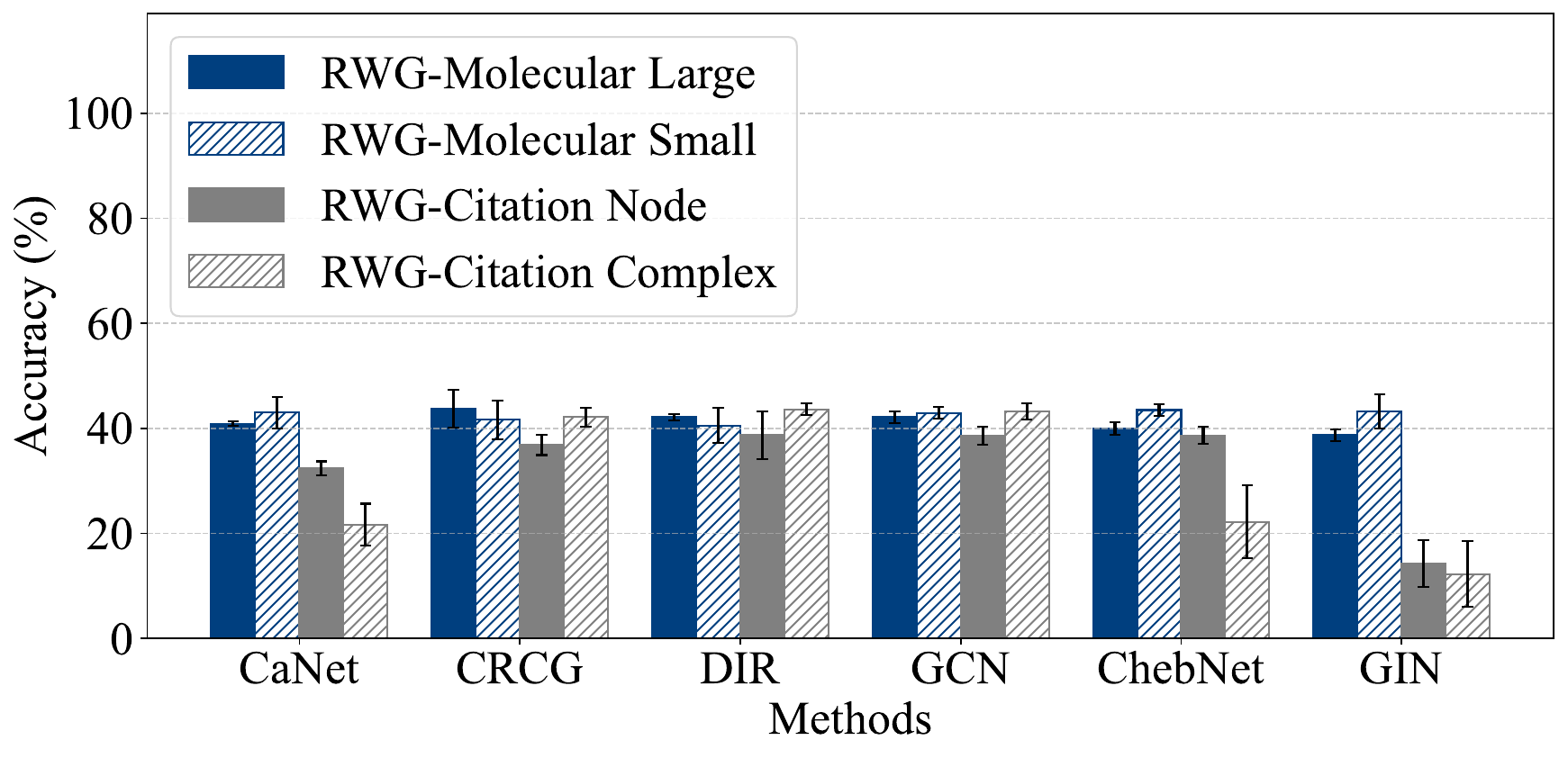}
            \vskip -0.005in
            \label{fig:pure causal b}
        \end{minipage}
    }
    \caption{Performance of models trained solely with causal data.}
    \label{fig:pure causal}
\end{figure}

We also investigated how different graph elements affect the causal modeling capability of GNNs. Using RWG-Molecular, we compared a single large confounder subgraph with multiple smaller ones; using RWG-Citation, we added confounders consisting only of nodes versus those involving both nodes and edges. Results in Figure \ref{fig:kinds} show notable performance gaps between RWG-Molecular and RWG-Citation, but only minor differences within the same dataset type. This suggests that merged elements cannot be completely treated as general causal variables, as their effects remain dataset- and scenario-dependent.

Furthermore, we evaluate the generalization of GNNs trained solely on causal relationships using four datasets: RWG-Molecular Large (fewer samples, larger motif), RWG-Molecular Small (more samples, smaller motif), RWG-Citation Node (node-only relations), and RWG-Citation Complex (nodes, edges, and interactions). Results in Figure \ref{fig:pure causal a} show good performance on purely causal test data, but adding 70\% confounding data (Figure \ref{fig:pure causal b}) causes a sharp decline. This experiment strongly demonstrates that, even when trained solely on causally associated data, models in complex graph representation learning scenarios remain effective only within the original data distribution. Their performance deteriorates substantially when exposed to extraneous data. Further experimental results can be found in \textbf{Appendix} \ref{sec:exp2} and \ref{sec:exp3}, the setting and dataset details can be found in \textbf{Appendix} \ref{apx:expdtl}.

\section{Method}


\begin{figure}[h]
\begin{center}
\includegraphics[width=0.8\columnwidth]{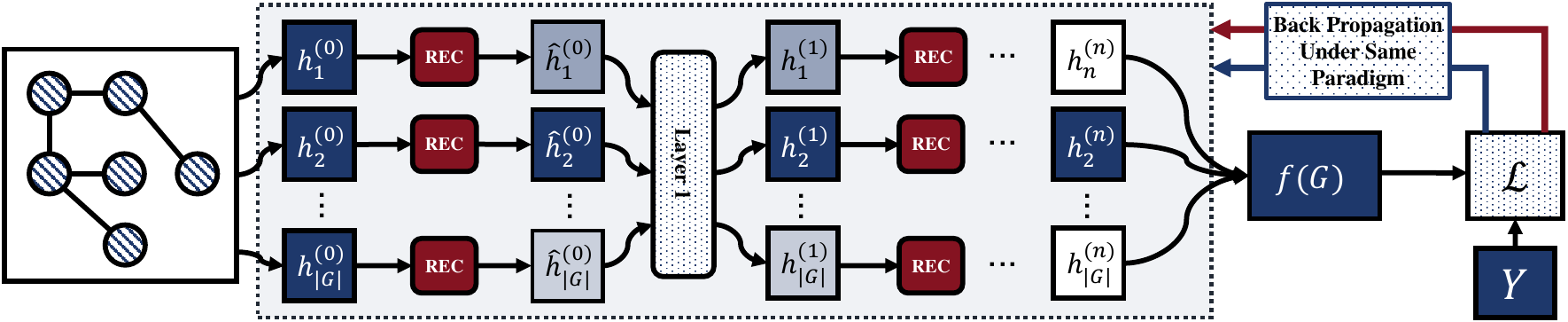} 
\end{center}
\vskip -0.1in
\caption{The illustration of how REC works.}
\vskip -0.1in
\label{fig:rec}
\end{figure}

Based on the above discussion, within the current graph representation learning framework, achieving strictly accurate causal relationship modeling is nearly impossible. Reviewing our entire analysis, we identify the inherent complexity of graph data as the fundamental obstacle to causal modeling in graph representation learning. Therefore, we consider whether reducing this complexity could enable GNNs to better approximate the true underlying causal model. To begin, we propose the following proposition:
\begin{proposition}
When the GNN $f(\cdot)$ precisely models the causal mechanism, the cross-entropy loss $\mathcal{L}$ between predictions $f(G)$ and labels $Y$ is minimized. Moreover, $\mathcal{L}$ equals the conditional KL divergence between the predictive distribution of $f(\cdot)$ and the background causal model, given each input graph $G$.
\label{prp:det}
\end{proposition}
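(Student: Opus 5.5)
The plan is to formalize the background causal model as a conditional distribution $P^{*}(Y\mid G)$: the distribution of the label induced by the SCM of Theorem \ref{thm:scm}. By definition of $X^{\text{caus}}$, this depends on $G$ only through $Pa(Y)=X^{\text{caus}}$. Write the GNN's predictive distribution as $Q_f(Y\mid G)$. Saying that $f(\cdot)$ precisely models the causal mechanism will mean $Q_f(\cdot\mid G)=P^{*}(\cdot\mid G)$ for every input graph $G$ in the support of the data distribution. The expected cross-entropy loss is
\begin{equation*}
\mathcal{L}(f)=\mathbb{E}_{G}\,\mathbb{E}_{Y\sim P^{*}(\cdot\mid G)}\big[-\log Q_f(Y\mid G)\big].
\end{equation*}
I am assuming here, as in the setting of Theorem \ref{thm:cm}, that interventions have removed confounding. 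That assumption is what makes the observed label distribution given $G$ equal to the causal one, $P^{*}(Y\mid G)=P(Y\mid do(G))$.

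The central step is the standard decomposition of cross-entropy. For each fixed $G$,
\begin{equation*}
\mathbb{E}_{Y\sim P^{*}}\big[-\log Q_f(Y\mid G)\big]=H\big(P^{*}(\cdot\mid G)\big)+D_{\mathrm{KL}}\big(P^{*}(\cdot\mid G)\,\|\,Q_f(\cdot\mid G)\big),
\end{equation*}
so that
\begin{equation*}
\mathcal{L}(f)=H(Y\mid G)+\mathbb{E}_G\, D_{\mathrm{KL}}\big(P^{*}(\cdot\mid G)\,\|\,Q_f(\cdot\mid G)\big).
\end{equation*}
The first term does not depend on $f$. By Gibbs' inequality the KL term is nonnegative and vanishes exactly when $Q_f=P^{*}$ almost surely. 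So $\mathcal{L}$ is minimized precisely by a causally exact $f$. The infinite-approximation assumption used in Theorem \ref{thm:cm} ensures this minimizer is actually attained within the GNN class.

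For the second claim, I will first note that in the setting of the paper the causal mechanism is deterministic. Causal links are described as carrying \emph{deterministic certainty}, and $Y$ is a fixed function of $X^{\text{caus}}$ once exogenous noise is absorbed. This makes $P^{*}(\cdot\mid G)$ a point mass, so $H(Y\mid G)=0$. The decomposition then collapses to $\mathcal{L}(f)=\mathbb{E}_G\, D_{\mathrm{KL}}(P^{*}(\cdot\mid G)\,\|\,Q_f(\cdot\mid G))$, which is the conditional KL divergence given each input graph. If labels are instead read as one-hot targets per sample, the same identity holds sample-wise, since the cross-entropy against a one-hot vector equals the KL against that point mass.

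The main obstacle is justifying the identification of the empirical label distribution with the background causal model, that is, that the conditional entropy term is zero and the target distribution is causal rather than merely observational. I will handle this by invoking the intervention assumption together with Theorem \ref{thm:scm}. The Markov assumption there gives $Y\perp\!\!\!\perp X^{\text{cfd}},X^{\text{asoc}}\mid X^{\text{caus}}$, so conditioning on the whole $G$ adds nothing beyond $Pa(Y)$. If determinism cannot be assumed, I will state the second claim as equality of $\mathcal{L}$ with the conditional KL up to the $f$-independent constant $H(Y\mid G)$.
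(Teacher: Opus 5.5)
Your proposal is correct and rests on the same two facts as the paper's proof. First, under the (deterministic) background mechanism the label distribution given $G$ is a point mass, so cross-entropy coincides with the conditional KL divergence. Second, nonnegativity gives minimality.

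The paper argues directly on the empirical loss $\mathcal{L}=\frac{1}{n}\sum_i \log\frac{1}{\tau(f(G_i))}$. Exact modeling forces $\tau(f(G_i))=1$, so $\mathcal{L}=0$, while $\tau\le 1$ gives $\mathcal{L}\ge 0$. It then expands the conditional KL with $p(G_i)=1/n$ and a point-mass $p(Y\mid G_i)$, and recovers the same sum.

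Your population-level decomposition $\mathcal{L}=H(Y\mid G)+\mathbb{E}_G D_{\mathrm{KL}}(P^{*}\|Q_f)$ with Gibbs' inequality buys a bit more. Minimality, together with its converse, holds even for a stochastic mechanism, whereas the paper's $\tau=1$ step already presupposes determinism. It also isolates exactly where determinism is needed for the second claim, namely to make $H(Y\mid G)$ vanish.

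On the other hand, once determinism is granted, the labels simply are the output of the causal model, and that is all the paper uses. Your detour through interventions and the Markov property, to identify $P(Y\mid G)$ with the causal conditional, is therefore unnecessary.

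Finally, drop the remark that the approximation assumption guarantees the minimizer is attained. Exact modeling is a hypothesis of the proposition. Universal approximation only gives arbitrarily close fits, and a softmax output never reaches an exact point mass.
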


The proof can be found in \textbf{Appendix} \ref{prf:det}. Given that $\mathcal{L}$ represents the conditional KL divergence between the trained GNN and the causal model, we argue that reducing data complexity, while keeping $\mathcal{L}$ close to optimal, will make it easier to approximate the background causal model and reduce the likelihood of interference. Moreover, this complexity-reduction approach can be implemented as a plug-and-play module, improving both GNN backbones and causal enhanced graph learning methods.  

In light of this, we propose a \textbf{R}edundancy \textbf{E}limination method for \textbf{C}ausal graph representation Learning (REC) to eliminate as many redundant variables as possible in $X^{\text{cfd}}$ and $X^{\text{asoc}}$, thereby simplifying the causal modeling process. The REC extracts the feature $h_{v}^{(0)}$ of each node $v$ in the graph data $G$, along with the feature $h_{v}^{(l)}$ after processing through the $l$-th layer, and performs variable removing as follows:
\begin{equation}
    \tilde{h}^{(0)}_{v} = \text{REC}(h_{v}^{(0)}) = \text{sigmoid}\Big(\gamma + \delta^{(0)}\big(h_{v}^{(0)}\big)\Big) \cdot h_{v}^{(0)},
\end{equation}
similarly:
\begin{equation}
    \quad \tilde{h}^{(l)}_{v} = \text{REC}(h_{v}^{(l)}) =  \text{sigmoid}\Big(\gamma +\delta^{(l)}\big(h_{v}^{(l)}\big)\Big) \cdot h_{v}^{(l)}.
\end{equation}
Here, $\delta^{(l)}(\cdot)$ is a multilayer perceptron (MLP) with an output dimension of 1, and all $\delta^{(l)}$ within the same layer share the same parameters. $\text{sigmoid}(\cdot)$ denotes the sigmoid function, which serves as a masking operator for node features. Depending on the value of $\Big((\gamma + \delta^{(l)}\big(h_{v}^{(l)}\big)\Big)$, the sigmoid function suppresses certain feature values toward zero, thereby excluding them from the forward propagation process and effectively removing the corresponding variables.
$\gamma$ is a value that gradually decreases during the training process. It is designed to eliminate fewer variables at the beginning, allowing the GNN to first model relationships and then eliminate more variables in later stages. This enables the GNN to remove more redundant variables based on accumulated knowledge. Formally, we have:
\begin{equation}
    \gamma = \max \left( \gamma_{\text{init}} \cdot (1-\epsilon)^t,\ \gamma_{\text{min}} \right),
\end{equation}
where $t$ denotes the number of current round. $\gamma_{\text{init}}$ and $\gamma_{\text{min}}$ are hyperparameters that set the initial and minimum values, respectively. $\epsilon$, also a hyperparameter, is a small value greater than zero that controls the rate of decrease. Then, the GNN layer that applies REC can be formulated as:
\begin{equation}
    h_v^{(l+1)} = 
    \text{Aggregate} \left( \mathbf{W}^{(l)} \text{REC}(h_v^{(l)}), \left\{\mathbf{W}^{(l)}\text{REC}(h^{(l)}_{u}), \forall u \in \mathcal{N}(v)\right\} \right),
\end{equation}
where $\mathbf{W}^{(l)}$ denotes the weight matrix for the $l$-th layer, $\mathcal{N}(v)$ denotes the neighboring nodes of node $v$, $\text{Aggregate}(\cdot)$ denotes the aggregation process of GNN. REC can be applied to any GNN encoder in causal graph representation learning methods or GNN backbones to enhance the algorithm's causal modeling capabilities. Parameters within REC are updated along with those of the GNN. Figure \ref{fig:rec} offers an illustration of applying REC. To validate its effectiveness, we conducted extensive experiments on multiple datasets. Besides our own proposed datasets, we utilized the artificially generated dataset SPMotif, and real-world datasets CiteSeer \citep{caragea2014citeseer} and ENZYMES \citep{DBLP:conf/aaai/RossiA15}. Furthermore, we merged RWG's link construction paradigm with the SPMotif dataset to construct SPMotif-M, a graph dataset containing more diverse types of graph structural linkages. Simultaneously, we integrated RWG motifs approximating real-world molecular structures with SPMotif to create SPMotif-C.

\begin{table}[htbp]
\centering
\tiny
\setlength{\tabcolsep}{8pt}
\begin{tabular}{@{}lccccccccc@{}}
\toprule
\textbf{Method} & \textbf{RWG-Molecular} & \textbf{Spmotif-M} & \textbf{Spmotif-C} & \textbf{RWG--Citation} & \textbf{CRCG} & \textbf{CiteSeer} & \textbf{ENZYMES} \\
\midrule
CaNet      & 52.17 ± 2.02 & 32.40 ± 1.30 & 45.00 ± 1.32 & 59.33 ± 1.04 & 32.77 ± 1.27 & 83.87 ± 0.67 & 17.00 ± 6.78 \\
CaNet+REC       & 56.50 ± 2.65 & 34.17 ± 1.35 & 46.83 ± 3.06 & 61.83 ± 0.76 & 36.43 ± 1.17 & 84.90 ± 0.12 & 18.33 ± 6.94 \\
Improvement & +4.33 & +1.77 & +1.83 & +2.50 & +3.66 & +1.03 & +1.33 \\
\midrule
CRCG       & 45.50 ± 3.53 & 36.80 ± 1.89 & 44.50 ± 2.91 & 45.50 ± 6.29 & 29.70 ± 4.62 & 42.78 ± 3.76 & 34.67 ± 7.10 \\
CRCG+REC       & 45.50 ± 4.39 & 38.17 ± 3.83 & 50.50 ± 3.05 & 47.50 ± 6.26 & 33.10 ± 5.29 & 44.07 ± 3.30 & 40.33 ± 3.24 \\
Improvement & +0.00 & +1.37 & +6.00 & +2.00 & +3.40 & +1.29 & +5.66 \\
\midrule
DIR        & 49.00 ± 5.26 & 38.67 ± 4.57 & 63.00 ± 6.55 & 52.50 ± 5.67 & 31.80 ± 4.76 & 66.53 ± 1.42 & 42.67 ± 6.01 \\
DIR+REC       & 52.00 ± 5.68 & 39.97 ± 3.12 & 67.00 ± 4.79 & 57.50 ± 6.24 & 36.10 ± 4.14 & 67.70 ± 1.74 & 48.00 ± 2.21 \\
Improvement & +3.00 & +1.30 & +4.00 & +5.00 & +4.30 & +1.17 & +5.33 \\
\midrule
GCN        & 40.00 ± 5.56 & 38.60 ± 1.71 & 19.50 ± 2.36 & 43.50 ± 6.96 & 17.22 ± 1.26 & 71.08 ± 0.48 & 24.67 ± 1.94 \\
GCN+REC       & 42.35 ± 4.10 & 40.21 ± 0.96 & 26.36 ± 2.76 & 52.29 ± 4.41 & 26.30 ± 1.67 & 71.93 ± 0.33 & 28.33 ± 2.83 \\
Improvement & +2.35 & +1.61 & +6.86 & +8.79 & +9.08 & +0.85 & +3.66 \\
\midrule
ChebNet    & 41.00 ± 4.45 & 38.63 ± 1.61 & 33.50 ± 4.90 & 55.50 ± 7.23 & 33.75 ± 1.89 & 55.39 ± 2.44 & 26.33 ± 3.09 \\
ChebNet+REC       & 50.18 ± 6.77 & 40.40 ± 0.98 & 37.95 ± 5.21 & 57.40 ± 6.19 & 36.02 ± 1.91 & 57.27 ± 1.72 & 30.33 ± 1.63 \\
Improvement & +9.18 & +1.77 & +4.45 & +1.90 & +2.27 & +1.88 & +4.00 \\
\midrule
GIN  & 50.50 ± 8.44 & 14.27 ± 4.43 & 36.50 ± 3.43 & 46.50 ± 4.56 & 28.02 ± 0.82 & 52.80 ± 3.53 & 27.00 ± 4.14 \\
GIN+REC & 55.90 ± 1.74 & 38.60 ± 4.60 & 45.00 ± 3.98 & 53.10 ± 1.38 & 33.64 ± 2.39 & 54.57 ± 2.74 & 33.67 ± 1.87 \\
Improvement & +5.40 & +24.33 & +8.50 & +6.60 & +5.62 & +1.77 & +6.67 \\
\bottomrule
\end{tabular}
\caption{Performance comparison of different methods with and without REC enhancement on various datasets. The improvement row shows the absolute performance gain achieved by REC.}
\label{tab:results_c_exp}
\end{table}

Experimental results, as shown in Table \ref{tab:results_c_exp}, demonstrate that our method achieves improvements across all baselines, with significant enhancements in certain scenarios. This not only validates the effectiveness of REC but also provides supporting evidence for Proposition \ref{prp:det}. Detailed settings can be found in Appendix \ref{apx:expdtl}.

\section{Conclusion}
This paper approaches causal modeling in graph representation learning from a theoretical perspective, developing a theoretical model that strictly adheres to the fundamental assumptions of causal inference. Building on this foundation, we conduct in-depth analyses combined with experimental cross-validation and further propose an improved enhancement module.

\section*{Reproducibility statement}
All of our theoretical results have been rigorously proven, and the corresponding proofs are provided in Appendix \ref{sec:prf}. Additionally, our experiments and methods include data and code for reproducibility. The code for generating datasets is available in the \texttt{/gen\_datasets} directory of the supplementary materials. The generated datasets are provided in the \texttt{/data} directory. The code for loading the datasets and training the models is available in the \texttt{/models} directory. For more details and environment setup, please refer to the README.md in the supplementary materials.

\bibliography{iclr2024_conference}
\bibliographystyle{iclr2026_conference}

\clearpage
\appendix
\section{Usage of Large Language Model}
In our paper, we used LLMs to assist with polishing the writing, including correcting grammatical errors and making the sentences more consistent with academic English writing conventions.

\section{Extended Related Works}

\subsection{Causal Learning}

Causal learning is a learning method based on identifying and modeling causal relationships \citep{Pearl2000}. Unlike traditional correlation learning, causal learning focuses on the causal effects between variables, i.e., how a change in one variable causes a change in another, rather than merely observing the statistical correlation between them. Causal learning aims to extract causal relationships between variables from data and use these relationships for prediction, reasoning, or decision-making. It typically involves methods such as causal graph models \citep{DBLP:conf/nips/KocaogluJSB19}, causal inference \citep{Pearl2018}, and causal discovery \citep{DBLP:conf/nips/ZhengARX18}, to infer causal structures from observational data. In recent years, with the rise of deep learning, research on causal learning has gradually shifted to the field of neural networks, particularly how to incorporate causal inference into the training and inference processes of neural networks.

Causal learning in neural networks is primarily reflected in the integration of causal inference with deep learning models to improve the performance of neural networks. On the one hand, neural networks mainly model data by learning the relationships between variables, but since they cannot directly understand the causal relationships between variables, this limits their performance on more complex problems. Therefore, recent research has attempted to incorporate causal inference into the training process of neural networks in order to enhance model interpretability and generalization ability \citep{DBLP:conf/icml/ChattopadhyayMS19, DBLP:conf/nips/0005ZL20}. \citet{DBLP:conf/cvpr/Zhang0XZ0S21} removes dependencies between features by learning weights for training samples, thus allowing deep learning models to avoid spurious correlations and focus more on the true relationships between features and labels. \citet{DBLP:conf/iclr/YaoXLMTMKL24} analyzes and understands the causal relationships between latent variables in the data, identifying more fine-grained representations under the generally milder assumption of partial observability. \citet{DBLP:conf/iclr/HongW0YH0YGL24} introduces causal models to understand and advance Non-transferable learning by modeling content and style as two latent factors, decoupling them and using them as guides to learn non-transferable representations with inherent causal relationships. These methods enhance the model's reasoning ability by introducing causal graph structures or causal analysis mechanisms into neural networks.

Moreover, causal deep generative models are also an important research direction in causal learning within neural networks in recent years \citep{10121327, DBLP:conf/icml/BagiGS023, cheng2024gu}. For example, \cite{DBLP:conf/iclr/Sauer021} proposes decomposing the image generation process into independent causal mechanisms and training them without direct supervision. By utilizing appropriate inductive biases, these mechanisms disentangle object shape, object texture, and background, thus enabling the generation of counterfactual images. In the field of reinforcement learning, causal inference has also started to integrate with deep reinforcement learning methods \citep{DBLP:conf/ijcai/Zhu0Z23, DBLP:conf/nips/YuFPQZS24, DBLP:journals/tnn/ZengCSHH25}. For example, \cite{DBLP:conf/icml/0002LYL24} adopts a guided updating mechanism to learn a stable causal origin representation. By leveraging this representation, the learned policy demonstrates significant robustness to nonstationarity.

\subsection{Causal Relationships Modeling with GNNs}



This paper primarily explores how to enhance the causal relationship modeling in Graph Neural Networks (GNNs). Causality is crucial in graph representation learning \citep{DBLP:conf/iclr/LippeMLACG23,DBLP:conf/kdd/SuiWWL0C22}, as the complexity and variability of graph data, unlike images and text, require stronger causal relationship modeling capabilities to ensure generalization and robustness. Moreover, several application areas of graph representation learning, including finance \citep{wang2022causalgnn}, medicine \citep{shang1pre}, and biology \citep{zitnik2018modeling}, have significant demands for the causal relationships being modeled.

There are currently many related studies addressing this issue, which can be categorized into two technical approaches: one focusing on modeling causal relationship subgraphs and the other on eliminating the impact of confounding factors. Regarding the first approach, \cite{10268633} proposed that spurious correlations exist within subgraph-level units and analyzed the degeneration of GNNs from a causal perspective. Based on this causal analysis, a general causal representation framework was proposed to build stable GNNs. \cite{DBLP:conf/iclr/WuWZ0C22} introduced a new discovering invariant rationale (DIR) strategy to construct inherently interpretable GNNs and enhance their causal relationship modeling ability. \cite{DBLP:conf/nips/0002ZB00XL0C22} proposed a new framework called Causal-Inspired Invariant Graph Learning (CIGA) to capture the invariances in graphs, ensuring out-of-distribution (OOD) generalization under various distribution changes.

Regarding the second approach, \cite{DBLP:conf/nips/Fan0MST22} proposed a general decoupled GNN framework, learning causal substructures and bias substructures separately. \cite{DBLP:conf/aaai/GaoYLSJWZ024} developed a lightweight optimization module based on the relationship between causal key modeling and confounding factors. \cite{DBLP:conf/nips/Fan0MST22} also introduced a general decoupled GNN framework to separately learn causal and bias substructures, ensuring that the final model can debias. \cite{DBLP:conf/www/WuNYBY24} employed a new learning objective inspired by causal inference, which coordinates an environment estimator with an expert mixed GNN predictor. This new method overcomes the confounding biases in training data and promotes the learning of widely adaptable predictive relationships.










\section{Theoretical Proofs}
\label{sec:prf}

\subsection{Proof of Proposition \ref{prp:1}}
\label{prf:1}
\textbf{proposition \ref{prp:1}.}
\textit{When the variables within graph dataset $\mathcal{G}$ are merged to form a new and smaller variable set $S$, in certain cases, it becomes impossible to construct a causal model based on $S$ while still satisfying the two key prerequisites for applying causal inference methods—namely, the Causal Markov Assumption and the Causal Faithfulness Assumption.}

\begin{proof}
To illustrate the proposition, we provide a corresponding counterexample. To ensure the clarity of the proof, we first present the detailed formulations of the Causal Markov Assumption and Causal Faithfulness Assumption.

\textbf{Causal Markov Assumption} \citep{spirtes2009variable} \textbf{:}  \textit{For a set of variables in which there are no hidden common causes, variables are independent of their non-effects conditional on their immediate causes.}  

\textbf{Causal Faithfulness Assumption} \citep{spirtes2009variable}\textbf{:} \textit{There are no independencies other than those entailed by the Causal Markov Assumption.} 




\begin{figure}[h]
\begin{center}
\includegraphics[width=0.3\columnwidth]{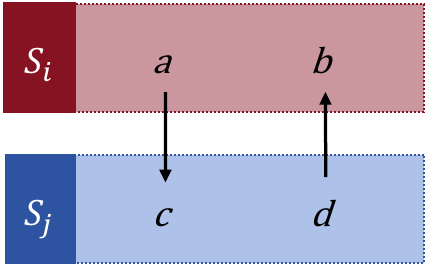} 
\end{center}
\caption{The graphical illustration of the causal relationships between variables $a$, $b$, $c$, and $d$.}
\label{fig:abcd}
\end{figure}

For the Causal Markov Assumption, given variables $S_i$ and $S_j$ within $S$, we assume that $\{a, b\} \subset S_i$ and $\{c, d\} \subset S_j$, and that the ground-truth causal relationships among $a$, $b$, $c$, and $d$ are as illustrated in Figure \ref{fig:abcd}. In this case, $S_i$ contains causes of $S_j$, and $S_j$ also contains causes of $S_i$. Suppose we designate $S_i$ as the cause and construct a causal pathway $S_i \rightarrow S_j$. Then, for another variable $S_k \in S$, if $S_k$ also contains causes of $S_j$, and $S_j$ contains causes of $S_k$, while $S_k$ is similarly designated as the cause, it is possible that $S_i$ and $S_k$ are dependent even when conditioned on all their common causes. Moreover, this phenomenon persists regardless of how the causal direction is assigned within the constructed causal model. Thus, the Causal Markov Assumption no longer holds.

For the Causal Faithfulness Assumption, we just need to assume that $a \perp \!\!\! \perp b $; in that case, such an independence would be regarded as arising from factors other than those implied by the Causal Markov Assumption. The proposition is proved.

\end{proof}
\label{prf:scm}
\subsection{Proof of Theorem \ref{thm:scm}}

\begin{figure}
	\centering
	\subfigure[Result of step one.]{
		\begin{minipage}{1\textwidth} 
			\includegraphics[width=\textwidth]{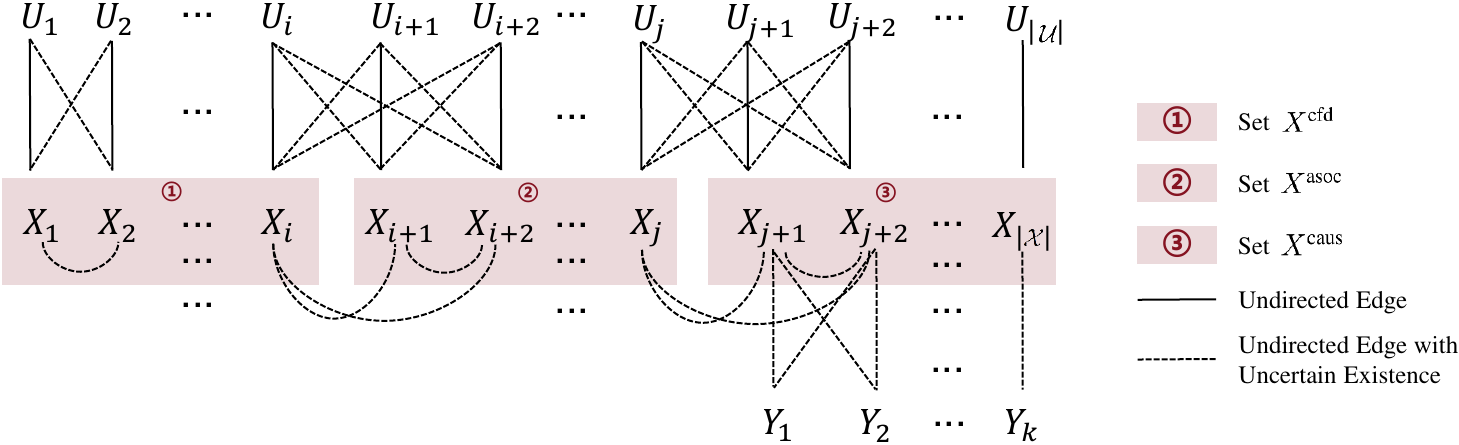} \\
                \label{fig:pc1}
		\end{minipage}
    }
	\subfigure[Result of step two.]{
		\begin{minipage}{1\textwidth} 
			\includegraphics[width=\textwidth]{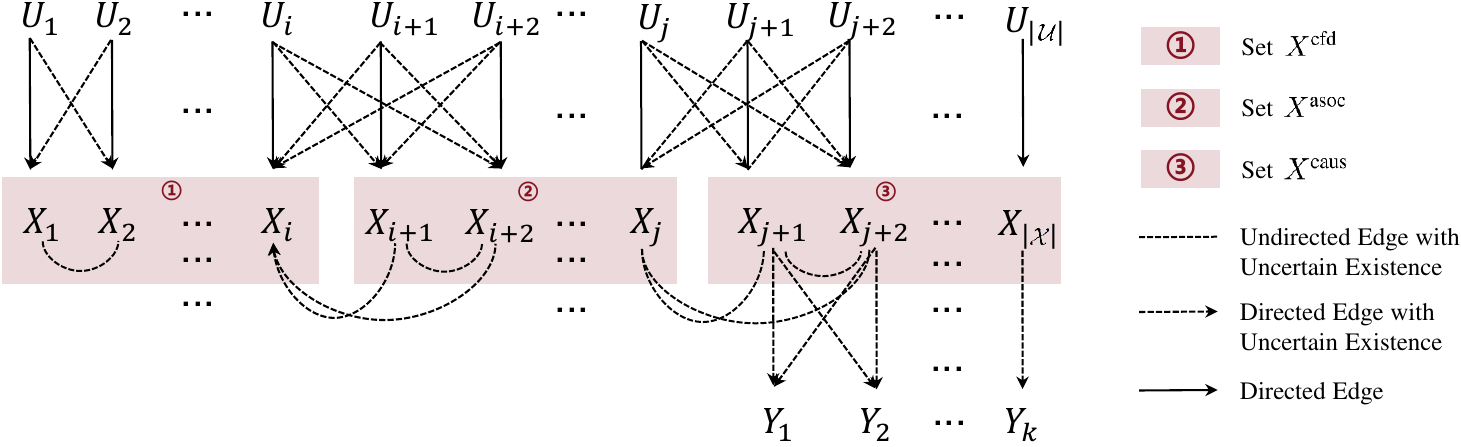} \\
                \label{fig:pc2}
		\end{minipage}
   }
	\subfigure[Result of step three.]{
		\begin{minipage}{1\textwidth} 
			\includegraphics[width=\textwidth]{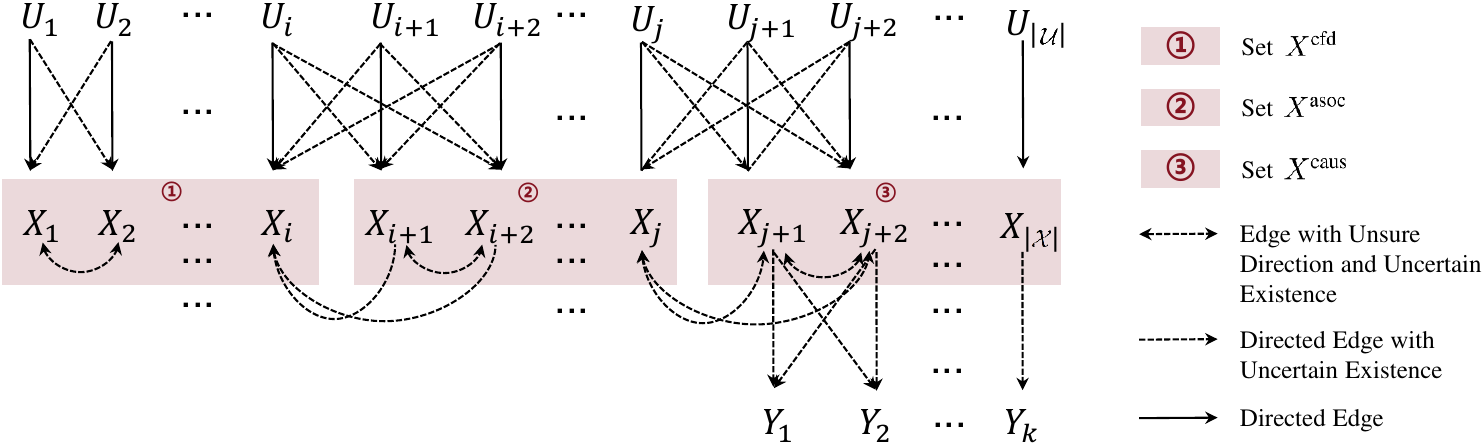} \\
                \label{fig:pc3}
		\end{minipage}
   }

	\caption{Results of the PC algorithm for SCM reconstruction.}
	\label{fig:hyp}
\end{figure}

\textbf{Theorem \ref{thm:scm}}
\textit{The SCM in Figure \ref{fig:scm} can characterize the general causal relationships between various variables in the graph representation learning scenario. Furthermore, such an SCM satisfies the Causal Markov Assumption and the Causal Faithfulness Assumption.}

\begin{proof}
To demonstrate the theorem, we follow the PC algorithm \citep{doi:10.1177/089443939100900106}, a method used to infer causal relationships from observational data and reconstruct the SCM depicted in Figure \ref{fig:scm} from scratch. The entire process can be divided into three steps, which are detailed below.
    
    \textbf{Step 1.} As in the current scenario, for any $i \in \{1, 2, ..., m\}$, there does not exist a set $B$ such that the conditional independence $U_{i} \perp \!\!\! \perp X_{i} | B$ holds. According to \cite{doi:10.1177/089443939100900106}, we connect each element in $U$ with its corresponding element in $X$. Since it cannot be determined whether there exists a $B$ such that $U_{i} \perp \!\!\! \perp X_{j} | B$ holds for $i \neq j, i \in \{1,2,...,m\}, j \in \{1,2,...,m\}$, we use dashed lines to connect these elements. For the same reasons, we connect all elements in $X$. Additionally, since the following holds:
    \begin{equation}
        {X}^{\text{caus}} = \bigcup_{i \in {1,2,...,k}}Pa(Y_{i}),
        \label{eq:pa}
    \end{equation}
    we have:
    \begin{equation}
        X_{i} \perp \!\!\! \perp Y \ | \left( \  \bigcup_{i \in {1,2,...,k}}Pa(Y_{i}) \right), \forall X_{i} \in {X}^{\text{cfd}} \cup {X}^{\text{asoc}}.
    \end{equation}
    i.e.:
    \begin{equation}
        X_{i} \perp \!\!\! \perp Y \ | {X}^{\text{caus}}, \forall X_{i} \in {X}^{\text{cfd}} \cup {X}^{\text{asoc}}.
    \end{equation}    
    Therefore, we only link the elements within ${X}^{\text{caus}}$ with $Y$ using dashed lines. The result of step 1 is demonstrated in Figure \ref{fig:pc1}.

    \textbf{Step 2.} Since we do not study the values of exogenous variables, we only consider their influence on $X$, hence all edges from $U$ to $X$ as directed downwards. Based on Equation \ref{eq:pa}, we direct the edges between ${X}^{\text{caus}}$ and $Y$ towards $Y$. As elements within ${X}^{\text{cfd}}$ holds none causal path towards $Y$, we direct edges between ${X}^{\text{asoc}} \cup {X}^{\text{caus}}$ and ${X}^{\text{cfd}}$ towards ${X}^{\text{cfd}}$. The result of step 2 is demonstrated in Figure \ref{fig:pc2}.

    \textbf{Step 3.} The remaining edges cannot be oriented, thus they are represented using bidirectional arrows. The result of step 3 is demonstrated in Figure \ref{fig:pc3}.

We can see that the final result obtained, as shown in Figure \ref{fig:pc3}, is consistent with Figure \ref{fig:scm}. 



Furthermore, since the variables used in the SCM are the smallest divisible variables in $G$, when there are no common causes between any two variables $a$ and $b$ in the SCM, $Pa(a)$ can block all causal effects from $a$ to $b$. Therefore, the Causal Markov Assumption holds.

At the same time, for any $a \perp \!\!\! \perp b$ in this SCM, $Pa(a)$ can block all causal effects from $a$ to $b$. Thus, the Causal Faithfulness Assumption also holds. The theorem is thereby proven.
\end{proof}



\subsection{Proof of Theorem \ref{thm:upb}.}
\label{prf:upb}
\textbf{Theorem \ref{thm:upb}}
\textit{
      Based on the SCM in Theorem \ref{thm:scm}, When utilizing GNN to model causal relationship, for atomic interventions, the lower bound of the number of interventions required is $\min_{\mathcal{M}^{\text{micro}}} \left( \left\lceil \frac{\frac{1}{\lambda}|\Big(\bigcup_{i=1}^{|\mathcal{G}|}G_{i}\Big)|+|{Y}|-r(\mathcal{M}^{\text{micro}})}{2} \right\rceil \right)$, where $\mathcal{M}^{\text{micro}}$ denotes any DAG that is equivalent to the graphical representation of the ground truth causal model, and the vertex set of $\mathcal{M}^{\text{micro}}$ = $\Big(\bigcup_{i=1}^{|\mathcal{G}|}G_{i}\Big)\cup{Y}$, $\lambda$ denotes the average times that each variable occurs among each of the samples within dataset $\mathcal{G}$, $r(\cdot)$ calculates the total number of maximal cliques. For non-atomic interventions, the number of interventions required exceeds $\mathcal{O} \left( \min_{k} \left(\frac{\frac{1}{\lambda}|\Big(\bigcup_{i=1}^{|\mathcal{G}|}G_{i}\Big)|+|{Y}|}{k}log\left(log\left(k\right)\right)\right)\right)$.
      }

\begin{proof}
To conduct the proof, we perform the analysis within the SCM framework shown in Figure \ref{fig:scm}. Our focus is on the causal relationship between $X$ and $Y$, and thus, we concentrate on the variable relationships between these two sets. As we do not yet have a clear partition of the variables within $X$, the elements within ${X}$ remain unknown to us. Following the analysis of causal inference and variable definitions presented in \cite{spirtes2009variable}, it is crucial to ensure that the defined variables satisfy both the Markov assumption and the faithfulness assumption to facilitate accurate causal reasoning.

As noted in \cite{pearl2009causality}, \textit{starting from the deterministic case, all variables can be explained by microscopic details, ensuring the Markov assumption holds}. Without a clear partition of variables in advance, we need to follow the approach in \cite{pearl2009causality} by decomposing all variables to the finest granularity to ensure the Markov assumption holds. Assuming each node in the graph data corresponds to a single-dimensional attribute, every node is treated as an individual variable, ensuring minimal data partitioning. As $\lambda$ denotes the average times that each variable occur among each of the samples within dataset $\mathcal{G}$, the dataset contains $\frac{1}{\lambda}|\Big(\bigcup_{i=1}^{|\mathcal{G}|}G_{i}\Big)|+|{Y}|$ variables. 

We denote such variable set as ${X}^{\text{micro}}$, we have:
\begin{gather}
    |{X}^{\text{micro}}| = \frac{1}{\lambda}|\Big(\bigcup_{i=1}^{|\mathcal{G}|}G_{i}\Big)|+|{Y}|
\end{gather}
The proposed theorem by \cite{choo2022verification} provides the lower bound of the number of atomic interventions required for modeling causal relationships among $\mathcal{M}^{\text{micro}}$, where $\mathcal{M}^{\text{micro}}$ denotes any Markov equivalence class corresponding to $\mathcal{M}^{\text{micro}*}$. $\mathcal{M}^{\text{micro}*}$ is the unknown causal model's graphical representation of all variables within ${X}^{\text{micro}}$. 
Therefore, we have that:
\begin{align}
    |\mathcal{I}| &\geq \min_{\mathcal{M}^{\text{micro}}} \left( \left\lceil \frac{|{X}^{\text{micro}}|- r(\mathcal{M}^{\text{micro}})}{2} \right\rceil \right)
    \nonumber\\
    &\geq \min_{\mathcal{M}^{\text{micro}}} \left( \left\lceil \frac{\frac{1}{\lambda}|\Big(\bigcup_{i=1}^{|\mathcal{G}|}G_{i}\Big)|+|{Y}|-r(\mathcal{M}^{\text{micro}})}{2} \right\rceil \right), 
    \label{eq:lb1}
\end{align}
where $\mathcal{I}$ denotes the set of utilized interventions. $r(\mathcal{M}^{\text{micro}})$ denotes the total number of maximal cliques in the chordal chain components of $\mathcal{M}^{\text{micro}}$. \cite{choo2022verification} propose that $\mathcal{M}^{\text{micro}}$ is equivalent to $\mathcal{M}^{\text{micro}*}$ if they cannot be distinguished with statistical independency. 

If the model we use satisfies the universal approximation theorem \citep{DBLP:journals/nn/HornikSW89} and the data in the dataset is sufficient, then we can model $\mathcal{M}^{\text{micro}}$ based on the statistical information from the dataset. Otherwise, more computational effort is needed to solve the structure of $\mathcal{M}^{\text{micro}}$ first. In either case, the lower bound of Equation \ref{eq:lb1} holds.

For non-atomic interventions, research is still ongoing, and providing an exact calculation of the precise lower bound remains challenging. However, \cite{DBLP:conf/nips/ShanmugamKDV15} has provided an approximate estimate of its lower bound, as follows:
\begin{align}
    |\mathcal{I}| &\geq \mathcal{O} \left(\frac{n}{k}log\left(log\left(k\right)\right)\right),
\end{align}
where $n$ denotes the number of the studied variables, $k$ denotes the number of variables under interventions. Therefore, we acquire the lower of non-atomic interventions as follows:
\begin{align}
    |\mathcal{I}| &\geq \mathcal{O} \left(\frac{\frac{1}{\lambda}|\Big(\bigcup_{i=1}^{|\mathcal{G}|}G_{i}\Big)|+|{Y}|}{k}log\left(log\left(k\right)\right)\right),
\end{align}
We refine the lower bound for any $k$ as follows:
\begin{align}
    |\mathcal{I}| &\geq \mathcal{O} \left( \min_{k} \left(\frac{\frac{1}{\lambda}|\Big(\bigcup_{i=1}^{|\mathcal{G}|}G_{i}\Big)|+|{Y}|}{k}log\left(log\left(k\right)\right)\right)\right).
\end{align}

Based on the above results, we have proved that for atomic interventions, the number of interventions required to fully discern all the causal relationships between the variables in ${X}$ and ${Y}$ exceeds $\min_{\mathcal{M}^{\text{micro}}} \left( \left\lceil \frac{\frac{1}{\lambda}|\Big(\bigcup_{i=1}^{|\mathcal{G}|}G_{i}\Big)|+|{Y}|-r(\mathcal{M}^{\text{micro}})}{2} \right\rceil \right)$. For non-atomic interventions, the number of interventions required exceeds $\mathcal{O} \left( \min_{k} \left(\frac{\frac{1}{\lambda}|\Big(\bigcup_{i=1}^{|\mathcal{G}|}G_{i}\Big)|+|{Y}|}{k}log\left(log\left(k\right)\right)\right)\right)$. Since our goal is to model the causal relationships between $G$ and $Y$, variables that are independent of $Y$ in the graphical data do not need to be analyzed. Therefore, we have:
\begin{align}
    |\mathcal{I}| &\geq  
    \min_{\mathcal{M}^{\text{micro}}} \Bigg( \Bigg\lceil \frac{\frac{1}{\lambda}|\Big(\bigcup_{i=1}^{|\mathcal{G}|}G_{i}\Big)|+|{Y}| - \frac{1}{\sigma}|\Big(\bigcup_{i=1}^{|\mathcal{G}|}D_{i}\Big)| - r(\mathcal{M}^{\text{micro}})}{2} \Bigg\rceil \Bigg),
\end{align}
where $D \in G$ denotes the data that independence with $Y$, $\sigma$ denotes the average times that each variable within $\bigcup_{i=1}^{|\mathcal{G}|}D_{i}$ occurs. We have:
\begin{align}
    |\mathcal{I}| &\geq  
    \min_{\mathcal{M}^{\text{micro}}} \Bigg( \Bigg\lceil \frac{\frac{1}{\lambda}|\Big(\bigcup_{i=1}^{|\mathcal{G}|}G_{i}\Big)|+|{Y}| - r(\mathcal{M}^{\text{micro}})}{2} \Bigg\rceil \Bigg).
\end{align}
For non-atomic interventions, we have:
\begin{align}
    |\mathcal{I}| &\geq \mathcal{O} \left( \min_{k} \left(\frac{\frac{1}{\lambda}|\Big(\bigcup_{i=1}^{|\mathcal{G}|}G_{i}\Big)|+|{Y}|}{k}log\left(log\left(k\right)\right)\right)\right).
\end{align}
The theorem is proved.
\end{proof}

\subsection{Proof of Theorem \ref{thm:cm}}
\label{prf:cm}
\textbf{Theorem \ref{thm:cm}}
\textit{
Assume there exists a GNN model that satisfies the infinite approximation theorem \citep{cybenko1989approximation}, and that interventions are applied to ensure the GNN models the causal relationships between the graph variables and the labels. In this case, when applying causal inference in graph representation learning, it is possible to merge some variables from the original set \( X \) to form a new set \( S \), where \( |S| < |X| \), while ensuring that the causal relationships between the graph data and the labels are accurately modeled. However, the following conditions must be met:  }

\textit{
\quad (1) Variable \( s \) in \( S \) that satisfy $s \in Pa(Y)$ cannot simultaneously contain both the parent and child nodes of another variable \( v \in X \).} 

\textit{
\quad (2) Variables within \( X^{\text{caus}} \) cannot be merged with those of other sets.}

\begin{proof}
To prove the theorem, it suffices to demonstrate that the two conditions (1) and (2) proposed in the theorem are both necessary and sufficient for transforming \( X \) into \( S \), while ensuring the accurate modeling of causal relationships.

We first prove sufficiency. Based on Theorem \ref{thm:scm}, since conditions (1) and (2) hold, we have the following:
\begin{equation}
    P\big(Y \mid do(X \setminus X^{\text{caus}} = x)\big) = P\big(Y \mid do(S \setminus S^{\text{caus}} = x)\big),
    \label{eq:do}
\end{equation}
where \( x \) is a specific value of \( X \setminus X^{\text{caus}} \), and \( S^{\text{caus}} \) denotes the set of variables within \( S \) generated by merging \( X^{\text{caus}} \). Here, \( do(\cdot) \) represents the intervention operation.

Equation \ref{eq:do} demonstrates that conducting the same intervention on \( X \setminus X^{\text{caus}} \) and \( S \setminus S^{\text{caus}} \) yields identical outcomes. This implies that, with an appropriate set of interventions, even if the variables are merged into set \( S \), it remains possible to apply interventions that isolate and sever the influence of variables not involved in the causal component.

We can also conclude the following:
\begin{equation}
    P\big(Y \mid do(X^{\text{caus}} = x)\big) = P\big(Y \mid do(S^{\text{caus}} = x)\big).
    \label{eq:do2}
\end{equation}
Furthermore, for each \( S_{i}^{\text{caus}} \in S^{\text{caus}} \), we have:
\begin{equation}
    P\big(Y \mid do(O_{i}^{\text{caus}} = s_{i})\big) = P\big(Y \mid do(S_{i}^{\text{caus}} = s_{i})\big),
    \label{eq:do3}
\end{equation}
and
\begin{equation}
    P\big(Y \mid do(X^{\text{caus}} \setminus O_{i}^{\text{caus}} = s_{i})\big) = P\big(Y \mid do(S^{\text{caus}} \setminus S_{i}^{\text{caus}} = s_{i})\big),
    \label{eq:do4}
\end{equation}
where \( O_{i}^{\text{caus}} \) is the subset of elements within \( X \) that, when merged, create \( S_{i} \).

Based on condition 2 of the proposed theorem, we can model the causal relationship between each \( S_{i}^{\text{caus}} \) and \( Y \) via the intervention operation. Therefore, we can apply a suitable GNN as described in the theorem to model the causal relationship between \( S^{\text{caus}} \) and \( Y \), i.e., the causal relationship between \( G \) and \( Y \). Thus, sufficiency is proven.

Next, we prove the necessity. First, consider the case where condition (1) does not hold. In this situation, it is possible for two variables, \( s \) and \( v \), to exist such that \( s \) is both a parent and a child of \( v \) within the background SCM. This creates a confounding arc \citep{pearl2009causality} and cannot be removed via intervention. As a result, the causal relationship becomes unpresentable. Thus, condition (1) must hold.

Now, consider the case where condition (2) is violated. In this case, a variable \( S^{\text{caus}}_{j} \) may include components unrelated to the outcome \( Y \). When examining \( S^{\text{caus}}_{j} \), the confounding effects cannot be eliminated through intervention, rendering the causal relationship unfeasible. Therefore, condition 2 must also be satisfied. The necessity is proved. Therefore, the theorem is proved.
\end{proof}

\subsection{Proof of Proposition \ref{prp:det}}
\label{prf:det}
\textbf{Proposition \ref{prp:det}}
\textit{
When the GNN $f(\cdot)$ precisely models the causal mechanism, the cross-entropy loss $\mathcal{L}$ between predictions $f(G)$ and labels $Y$ is minimized. Moreover, $\mathcal{L}$ equals the conditional KL divergence between the predictive distribution of $f(\cdot)$ and the background causal model, given each input graph $G$.
}

\begin{proof}
The proof of Proposition \ref{prp:det} is straightforward. The key observation is that the ground-truth labels $Y$ are, under all circumstances, the same as output of the background causal model; the remaining steps follow from standard properties of the cross-entropy loss. For completeness and rigor, we still provide a detailed proof below.

We begin by proving the first conclusion of the proposition. According to the definition of $\mathcal{L}$, we have:
\begin{align}
    \mathcal{L} &= \frac{1}{n}\sum_{i=1}^{n}log\frac{1}{\tau(f(G_{i}))}.
    \label{eq:detd}
\end{align}
As $\tau(\cdot)$ extracts the output probability of the ground truth labels, we have $\tau(f(G_{i})) = 1$ when the background causal structure is precisely modeled. We denote $\mathcal{L}^{*}$ as the value of $\mathcal{L}$ under the former condition. Based on Equation \ref{eq:detd}, the $\mathcal{L}^{*}$ can be represented as:
\begin{align}
    \mathcal{L}^{*} &= \frac{1}{n}\sum_{i=1}^{n}log\frac{1}{1} = 0.
\end{align}
As $\tau(f(G_{i})) \leq 1$, thus $log\frac{1}{\tau(f(G_{i}))} \geq 0$, and $\mathcal{L} = \frac{1}{n}\sum_{i=1}^{n}log\frac{1}{\tau(f(G_{i}))} \geq 0$. Therefore, $\mathcal{L}^{*}$ reaches the minimal value. The first conclusion of the proposition is proved.

Next, we proof the second conclusion. The Conditional KL divergence between the output of $f(\cdot)$ and the ground truth label given different inputs can be formulated as:
\begin{align}
    D_{\text{KL}}(p(Y|G)||q(Y|G)) &=  \sum_{i=1}^{n} p(G_{i}) \sum_{Y} p(Y|G_{i}) log \frac{p(Y|G_{i})}{q(Y|G_{i})}
    \nonumber\\
    &=  \sum_{i=1}^{n} \frac{1}{n} \sum_{Y} p(Y|G_{i}) log \frac{p(Y|G_{i})}{q(Y|G_{i})}.
\end{align}
As $p(Y|G_{i}) = 0$ if $Y$ is not the ground truth label, therefore:
\begin{align}
    D_{\text{KL}}(p(Y|G)||q(Y|G)) =  \frac{1}{n} \sum_{i=1}^{n}  1 \cdot log \frac{1}{q(Y^{*}|G_{i})} = \frac{1}{n} \sum_{i=1}^{n}   log \frac{1}{\tau(f(G_{i}))},
\end{align}
where $Y^{*}$ denotes the ground truth label. The proposition is proved.
\end{proof}

\section{RWG Dataset}
\label{sec:dataset}

\subsection{Data Constructed Based on Real-world Chemical Information}

We constructed the RWG graph classification data based on extensive real-world chemical knowledge, specifically focusing on molecular structures commonly encountered in the field of chemistry. The construction process began by collecting a total of 26 well-known molecular graph motifs, which serve as the fundamental building blocks for the graphs we aim to analyze. These motifs represent common substructures found in a wide variety of molecules, and they play a critical role in capturing the structural diversity that exists within molecular graphs. Table \ref{tab:molem} provides a detailed list and description of these motifs.

Each motif is defined by its own unique arrangement of atoms and bonds, and these motifs can also undergo slight variations. The variations are achieved by adding or removing edges between atoms, which allows us to generate a range of related but distinct molecular structures. This flexibility in modifying the motifs ensures that the graph models are not only diverse but also closely aligned with the variability found in real chemical data.

In addition to these molecular motifs, we also constructed 15 connector modules that are based on common chemical molecular architectures. These modules include well-established structural elements, such as ring structures, chain structures, and various hybrid forms that combine these basic components. These connector modules facilitate the composition of the aforementioned molecular graph motifs into larger, more complex molecular graphs, enabling the representation of a broad spectrum of chemical compounds. 

Each connector module is implemented through a corresponding function that allows for customization in terms of size and branching. The \texttt{size} parameter enables the adjustment of the module's scale, making it possible to control the overall size of the connected structure. The \texttt{branch} parameter, on the other hand, allows for modification of the number of branches that extend from the core structure, providing further flexibility in defining how the motifs are interconnected. By adjusting these parameters, we can create a wide variety of complex molecular graphs that reflect the structural diversity of real-world chemical networks.

\subsection{Data Constructed Based on Real-world Citation Network Information}

We constructed a citation network based on real-world citation network data, consisting of a total of 25 citation relationships. These relationships were carefully selected to represent a diverse range of connections within the network, capturing the complexity of academic citation patterns across various fields. The citation relationships reflect the way in which research papers influence one another through references, and the resulting network serves as a model for understanding the dynamics of knowledge dissemination and academic collaboration. Table \ref{tab:citation_rules} presents a detailed overview of these citation relationships, showcasing the various connections between the papers and their corresponding citation patterns.

In parallel with the citation relationships, we developed a total of 24 node feature generation methods, each based on different statistical distributions and mathematical sequences. These methods were designed to generate meaningful node features that reflect both the structural and contextual aspects of the citation network. The details of these feature generation methods, including the specific distributions and sequences used, are presented in Table \ref{tab:nf}. The distributions encompass a wide range of statistical models, such as normal, uniform, exponential, and lognormal distributions, among others, while the sequences include arithmetic, geometric, Fibonacci, and prime number sequences, providing a rich variety of feature generation options.

To generate the node features for the dataset, we set parameters for each of the distributions and sequences based on the specific characteristics of the citation network. For instance, the parameters were chosen to align with the real-world distribution of citation frequencies, as well as the structural properties of the citation relationships, such as the number of citations a paper typically receives and how those citations are distributed across the network. Once the node features were generated, we incorporated the citation relationships to guide the construction of the entire graph, ensuring that the generated features were appropriately aligned with the network's structure and that the graph accurately represented the interplay between the different academic papers. Once completed, we constructed a dataset with clear internal causal relationships, closely aligned with real-world scenarios, and it can serve as a foundation for adding confounders and other elements required for experiments.

\subsection{Causal and confounding data generation}
For our data generation, we establish causality through precise programmatic control over the dataset's construction, enabling us to explicitly define causal factors and introduce confounders. Specifically, we create a causal relationship by making a graph's label directly dependent on designated graph elements, while a confounder is created by introducing a spurious correlation that exists exclusively within the training set and is broken in the validation and test sets. The primary graph elements we manipulate for these purposes are our constructed motifs, node features, and relational edges. 

For instance, in our chemical graph dataset built from 26 molecular motifs, a graph's label can be causally determined by the presence of a "benzene ring" motif. To introduce a confounder, a different motif, like a "chain structure," could be made highly correlated with the label in the training data, a correlation that is removed in the test set to make it a misleading shortcut. 

Similarly, using our 24-node feature generation methods for the citation network, we can establish a causal link where the label is determined by a statistical property, such as the average value of a feature generated from a Fibonacci sequence. As a confounder, a separate feature from a uniform statistical distribution could be artificially correlated with the label only during training, a pattern that would not hold during evaluation. 

Finally, relational edges, representing 25 types of citation relationships, are also precisely controlled. A causal factor could be the existence of a double bond within the presence of a "self-citation" relationship in the citation network. A confounding relationship could be introduced where a specific type of citation link is frequently associated with a positive label in the training data, but this pattern is randomized in the test set to ensure it's a non-causal artifact.
\begin{table}[h!]
\scriptsize
\centering
\begin{tabular}{l|l|l|l}
\hline
\textbf{Molecule}                  & \textbf{Molecular Formula} & \textbf{Node Count} & \textbf{Edge Count} \\ \hline
Acetic Acid                         & C\(_2\)H\(_4\)O\(_2\)     & 3                    & 2                    \\ 
Adrenaline                          & C\(_9\)H\(_{13}\)NO\(_3\)   & 5                    & 6                    \\ 
Ammonia                             & NH\(_3\)                  & 2                    & 3                    \\ 
Anthracene                          & C\(_{14}\)H\(_{10}\)          & 24                   & 12                   \\ 
Benzene Ring                        & C\(_6\)H\(_6\)            & 6                    & 6                    \\ 
Benzoic Acid                        & C\(_7\)H\(_6\)O\(_2\)     & 9                    & 8                    \\ 
Ethane                              & C\(_2\)H\(_6\)            & 2                    & 1                    \\ 
Ethanol                             & C\(_2\)H\(_6\)O           & 3                    & 2                    \\ 
Fullerenes                          & C\(_{60}\)                  & 60                   & 90                   \\ 
Glucose                             & C\(_6\)H\(_{12}\)O\(_6\)    & 24                   & 12                   \\ 
Hexamethylbenzene                   & C\(_9\)H\(_{12}\)           & 21                   & 15                   \\ 
Hydrated Sulfuric Acid              & H\(_2\)SO\(_4\)·H\(_2\)O  & 4                    & 5                    \\ 
Imidazole                           & C\(_3\)H\(_4\)N\(_2\)     & 9                    & 6                    \\ 
Indole                              & C\(_8\)H\(_7\)N           & 15                   & 9                    \\ 
Methane                             & CH\(_4\)                  & 1                    & 1                    \\ 
Methyl Anthranilate                 & C\(_8\)H\(_9\)NO\(_2\)    & 18                   & 12                   \\ 
Nitrobenzene                        & C\(_6\)H\(_5\)NO\(_2\)    & 9                    & 9                    \\ 
Nitrophenol                         & C\(_6\)H\(_5\)NO\(_3\)    & 10                   & 10                   \\ 
Porphyrin                           & C\(_{20}\)H\(_{12}\)N\(_4\)   & 24                   & 23                   \\ 
Pyridine                            & C\(_5\)H\(_5\)N           & 6                    & 5                    \\ 
Pyrimidine                          & C\(_4\)H\(_4\)N\(_2\)     & 8                    & 5                    \\ 
Pyrrole                             & C\(_4\)H\(_5\)N           & 6                    & 5                    \\ 
Simplified Dopamine                 & C\(_8\)H\(_{11}\)NO\(_2\)   & 11                   & 11                   \\ 
Thiazole                            & C\(_3\)H\(_3\)NS     & 7                    & 5                    \\ 
Thioether                           & C\(_4\)H\(_8\)S           & 12                   & 7                    \\ 
Vitamin C                           & C\(_6\)H\(_8\)O\(_6\)     & 20                   & 10                   \\ \hline
\end{tabular}
\caption{Fundamental molecular motifs.}
\label{tab:molem}
\end{table}

\begin{table}[htbp]
\centering
\scriptsize
\begin{tabular}{m{3cm}|m{4cm}|m{6cm}}
\hline
\textbf{Motif Name} & \textbf{Construction Method} & \textbf{Functionality Description} \\
\hline
Star Motif & Central node connected to all others & Generates a star structure with one center node connected to all peripheral nodes. \\
\hline
Path Motif & Nodes connected in sequence & Constructs a linear path where each node connects to the next in sequence. \\
\hline
Fan Motif & Central node with multiple branch nodes & Creates a fan-like shape with a central hub and several outward branches, possibly interconnected. \\
\hline
Cusped Polygon Motif & Polygon with potential branches & Builds a polygon structure with pointed (cusped) corners and optional branching substructures. \\
\hline
Random Bipartite Motif & Bipartite graph with random connections & Generates a bipartite graph where two partitions are randomly interconnected. \\
\hline
Tree Motif & Hierarchical branching structure & Constructs a tree graph where each node may connect to multiple child nodes. \\
\hline
Trident Motif & Central node with two side branches & Creates a trident-shaped structure, with a central node connected to two others, repeated for multiple tridents. \\
\hline
Conical Connection Motif & Backbone and branches in conical form & Forms a cone-like motif where a backbone and branches are merged into a sandglass-shaped structure. \\
\hline
Chain Bypass Motif & Chain with branching bypasses & Builds a chain structure with additional side branches that bypass parts of the chain. \\
\hline
Partial Polygon Motif & Incomplete polygon with extensions & Forms a partial polygon with potential branch-based extensions. \\
\hline
Complete Graph Motif & All nodes interconnected & Constructs a complete graph where every node is connected to every other node. \\
\hline
Grid/Net Motif & Nodes arranged in a grid & Creates a net or grid shape where nodes are placed in a matrix and connected to adjacent nodes. \\
\hline
Cycle Motif & Nodes forming a ring & Forms a cycle where each node links to the next in a loop. \\
\hline
Dual Ring Motif & Two connected ring structures & Builds two separate ring structures that may be interconnected. \\
\hline
Triangle Motif & Nodes forming triangles & Creates triangle-based motifs where nodes are connected in three-node cycles. \\
\hline
\end{tabular}
\label{tab:connector2}
\caption{Connector modules.}
\end{table}

\begin{table}[htbp]
\centering
\scriptsize
\begin{tabular}{m{4cm}|m{10cm}}
\hline
\textbf{Link Rule} & \textbf{Description} \\
\hline
Random Citation Generation & Each paper randomly cites a set of papers; the number of citations follows a Poisson distribution. \\
\hline
Citation by High Citation Count & Each paper cites papers with higher citation counts to increase connectivity among highly cited papers. \\
\hline
Co-Author Based Citation & Papers by authors who have collaborated with the current paper’s authors are preferentially cited. \\
\hline
Propagation-Based Citation & Citation links are simulated using an information diffusion model (e.g., Independent Cascade). \\
\hline
Topic Similarity-Based Citation & Papers with high topic similarity (e.g., via cosine similarity on keywords/abstracts) are cited. \\
\hline
Temporal Citation & Older papers are preferentially cited to simulate time-evolving citation behavior. \\
\hline
Author Influence-Based Citation & Papers by more influential authors are more likely to be cited. \\
\hline
Co-Citation Frequency-Based Citation & Papers that are frequently co-cited with the current paper are selected as citation targets. \\
\hline
Citation Density-Based Citation & Papers with higher citation density (degree) are more likely to be cited. \\
\hline
Network Topology-Based Citation & Papers cite one of their neighbor nodes; if no neighbors exist, no citation is made. \\
\hline
Author Expertise-Based Citation & Papers from authors in the same or related research domains are preferred as citations. \\
\hline
Citation Centrality-Based Citation & Papers with higher centrality (e.g., degree, betweenness) in the citation graph are favored. \\
\hline
Geographic Proximity-Based Citation & Authors are more likely to cite papers from geographically proximate researchers. \\
\hline
Research Team Size-Based Citation & Papers from authors with similarly sized research teams are favored. \\
\hline
Citation Credibility-Based Citation & Papers with higher credibility (e.g., journal impact, author reputation) are more likely to be cited. \\
\hline
Academic Lineage-Based Citation & Papers authored by academic mentors or descendants are favored. \\
\hline
Citation Structure-Based Citation & Triangular citation patterns (e.g., A→B→C→A) are promoted to reflect structural motifs. \\
\hline
Citation Distance-Based Citation & Papers with fewer intermediate citation steps (shorter path length) are more likely to be cited. \\
\hline
Knowledge Flow-Based Citation & Knowledge flows from frontier to traditional areas guide citation directionality. \\
\hline
Citation Chain Length-Based Citation & Longer citation chains increase the likelihood of being cited. \\
\hline
Diversity-Based Citation & Papers with more diverse citation sources (across fields or topics) are more likely to be cited. \\
\hline
Reference Count-Based Citation & Papers with more references may appear more informative and are thus more likely to be cited. \\
\hline
Research Object-Based Citation & Papers focusing on attractive or high-interest research objects are more likely to be cited. \\
\hline
Venue Reputation-Based Citation & Papers published in high-impact journals/conferences are preferentially cited. \\
\hline
Open Access-Based Citation & Open access papers are more accessible and thus more likely to be cited. \\
\hline
\end{tabular}
\caption{Citation Link Rules in Citation Network Construction}
\label{tab:citation_rules}
\end{table}

\begin{table}[!htbp]
\centering
\scriptsize
\caption{Node Feature Generation Methods based on Statistical Distributions and Mathematical Sequences}
\label{tab:nf}
\begin{tabular}{m{6cm}|m{7cm}}
\hline
\textbf{Type} & \textbf{Description} \\
\hline
{Normal Distribution} & Generates node features based on a normal distribution with specified mean and standard deviation. \\
\hline
{Uniform Distribution} & Generates node features based on a uniform distribution over a specified range. \\
\hline
{Exponential Distribution} & Generates node features based on an exponential distribution. \\
\hline
{Lognormal Distribution} & Generates node features based on a lognormal distribution. \\
\hline
{Gamma Distribution} & Generates node features based on a gamma distribution. \\
\hline
{Beta Distribution} & Generates node features based on a beta distribution. \\
\hline
{Weibull Distribution} & Generates node features based on a Weibull distribution. \\
\hline
{Laplace Distribution} & Generates node features based on a Laplace distribution. \\
\hline
{Logistic Distribution} & Generates node features based on a logistic distribution. \\
\hline
{Rayleigh Distribution} & Generates node features based on a Rayleigh distribution. \\
\hline
{Pareto Distribution} & Generates node features based on a Pareto distribution. \\
\hline
{Cauchy Distribution} & Generates node features based on a Cauchy distribution. \\
\hline
{Negative Binomial Distribution} & Generates node features based on a negative binomial distribution. \\
\hline
{Gumbel Distribution} & Generates node features based on a Gumbel distribution. \\
\hline
{Gompertz Distribution} & Generates node features based on a Gompertz distribution. \\
\hline
{Arithmetic Sequence} & Generates node features based on an arithmetic sequence with a specified step size. \\
\hline
{Geometric Sequence} & Generates node features based on a geometric sequence. \\
\hline
{Fibonacci Sequence} & Generates node features based on the Fibonacci sequence. \\
\hline
{Square Sequence} & Generates node features based on a sequence of square numbers. \\
\hline
{Cube Sequence} & Generates node features based on a sequence of cube numbers. \\
\hline
{Prime Sequence} & Generates node features based on a sequence of prime numbers. \\
\hline
{Triangular Sequence} & Generates node features based on a triangular number sequence. \\
\hline
{Rectangular Sequence} & Generates node features based on a rectangular number sequence. \\
\hline
{Binomial Coefficient Sequence} & Generates node features based on a binomial coefficient sequence. \\
\hline
{Hamiltonian Sequence} & Generates node features based on a Hamiltonian sequence. \\
\hline
\end{tabular}
\end{table}

\begin{figure}[!htbp]
    \centering
    \subfigure[Confounder with 10\% bias.]{
        \begin{minipage}{0.4\textwidth}
            \centering
            \includegraphics[width=\linewidth]{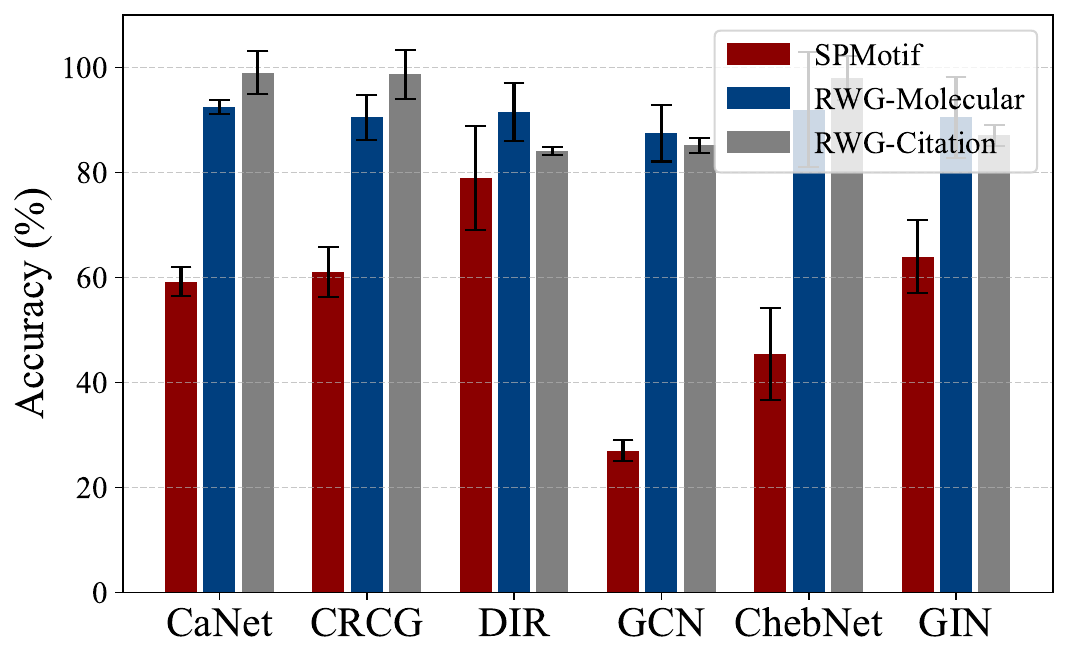}
        \end{minipage}
    }
    \subfigure[Confounder with 20\% bias.]{
        \begin{minipage}{0.4\textwidth}
            \centering
            \includegraphics[width=\linewidth]{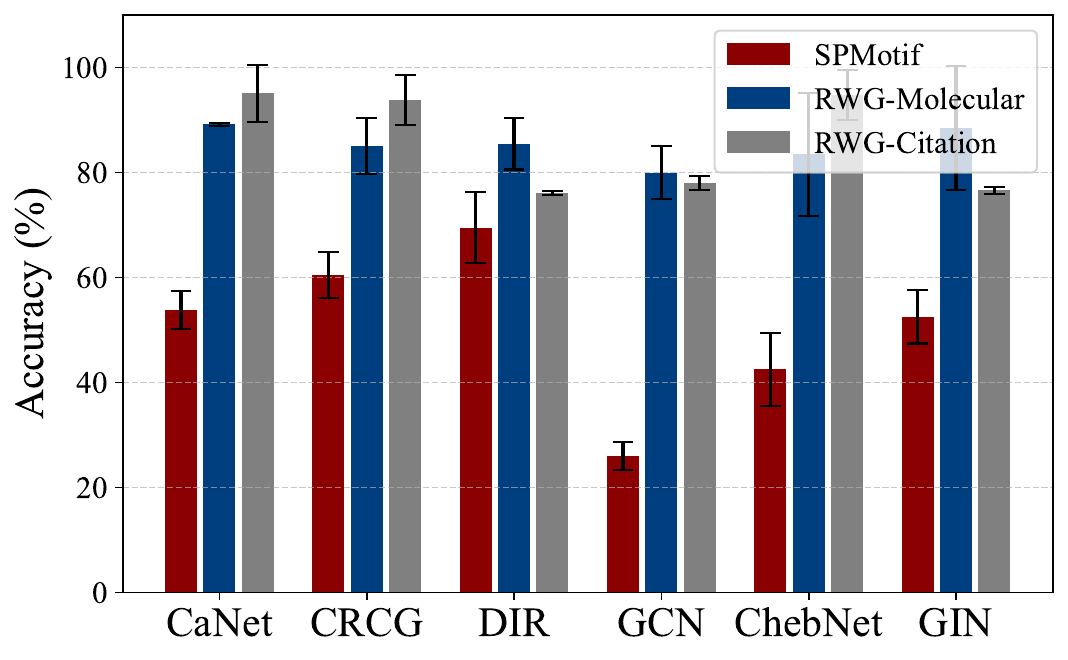}
        \end{minipage}
    }
    
    \subfigure[Confounder with 30\% bias.]{
        \begin{minipage}{0.4\textwidth}
            \centering
            \includegraphics[width=\linewidth]{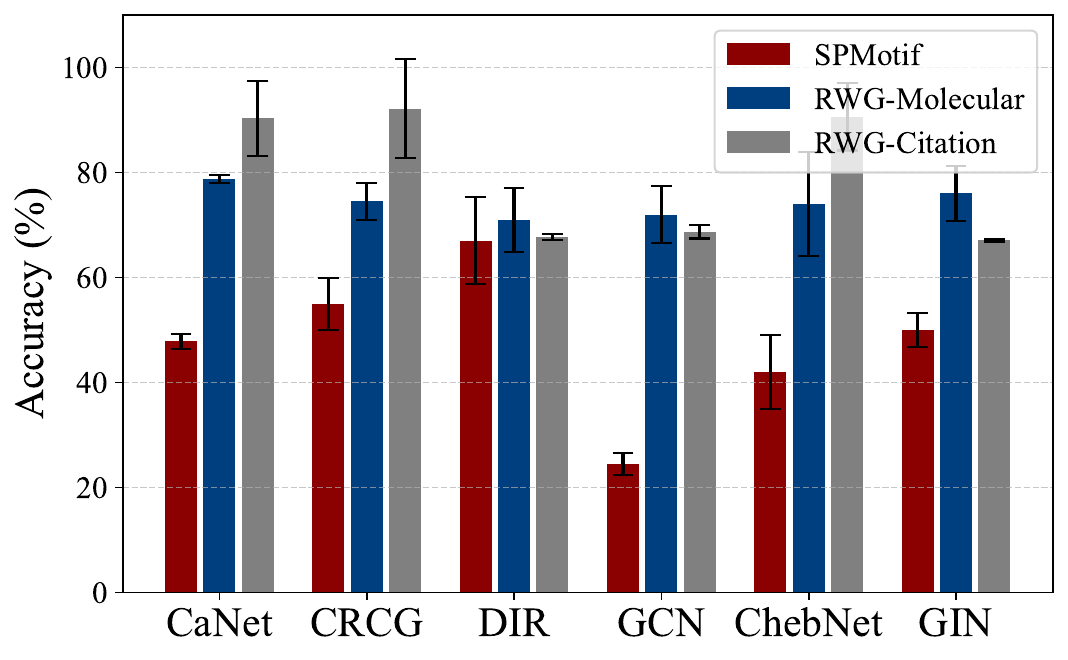}
        \end{minipage}
    }
        \subfigure[Confounder with 40\% bias.]{
        \begin{minipage}{0.4\textwidth}
            \centering
            \includegraphics[width=\linewidth]{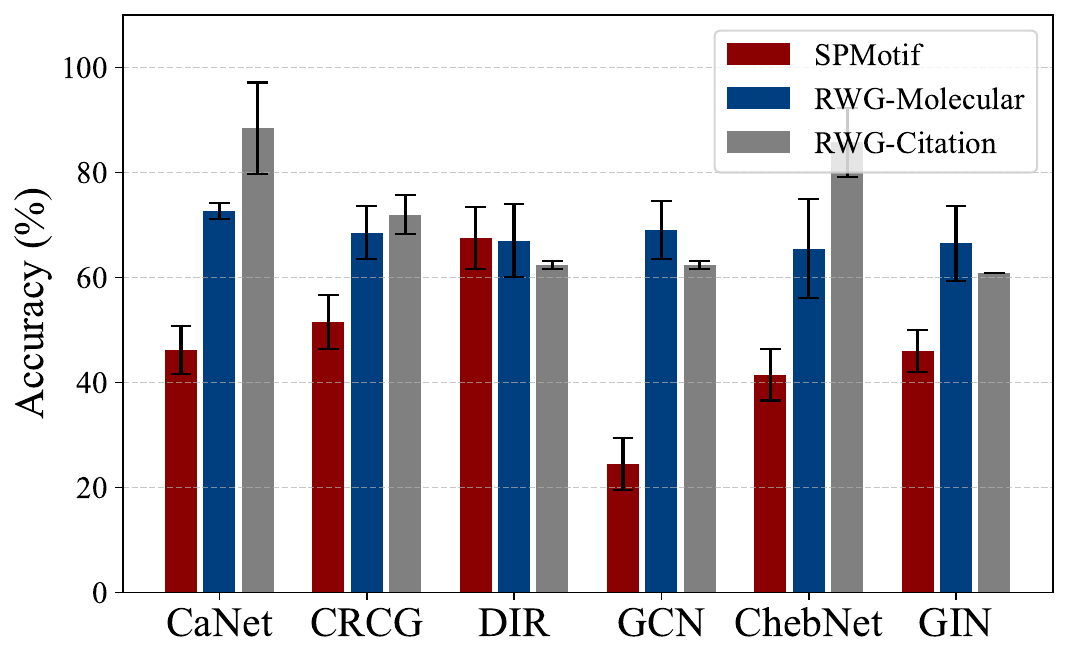}
        \end{minipage}
    }
    
        \subfigure[Confounder with 50\% bias.]{
        \begin{minipage}{0.4\textwidth}
            \centering
            \includegraphics[width=\linewidth]{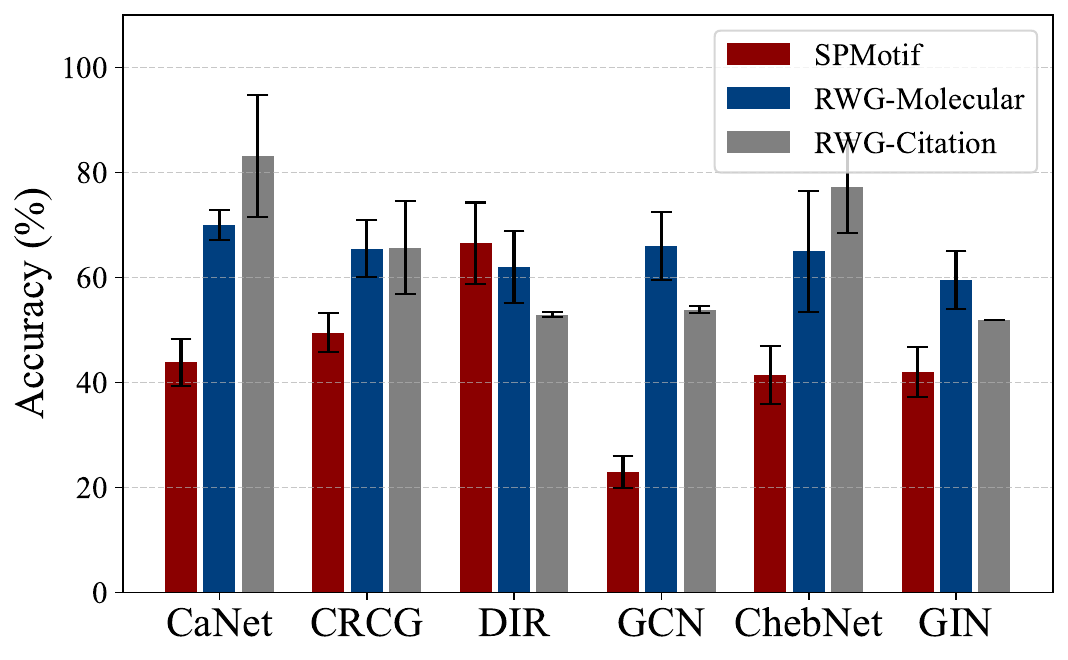}
        \end{minipage}
    }
        \subfigure[Confounder with 60\% bias.]{
        \begin{minipage}{0.4\textwidth}
            \centering
            \includegraphics[width=\linewidth]{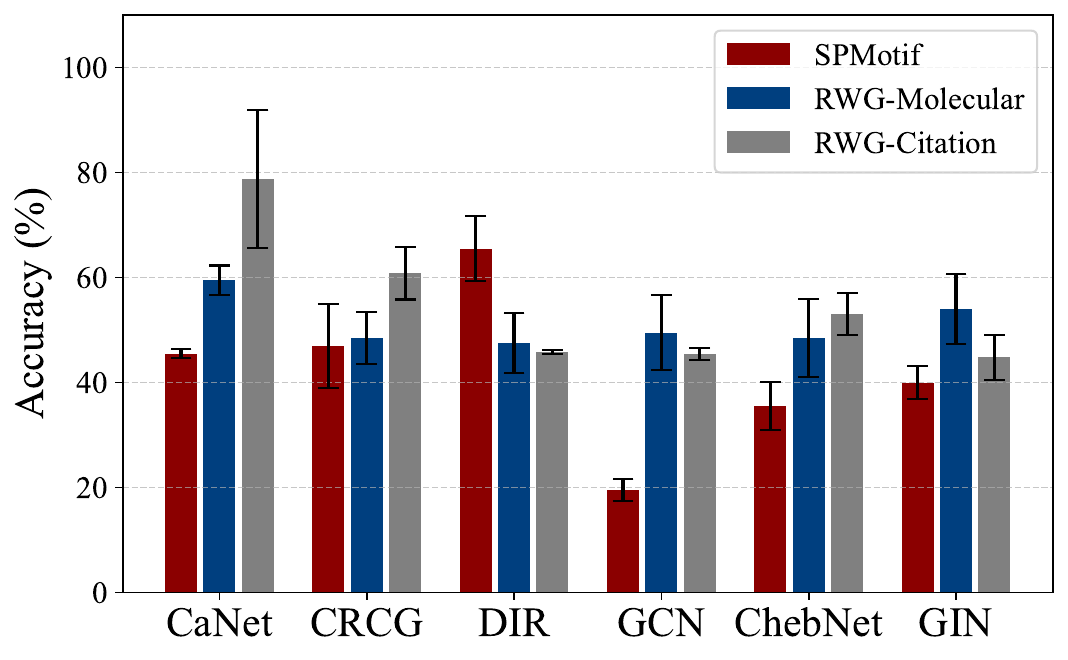}
        \end{minipage}
    }

            \subfigure[Confounder with 70\% bias.]{
        \begin{minipage}{0.4\textwidth}
            \centering
            \includegraphics[width=\linewidth]{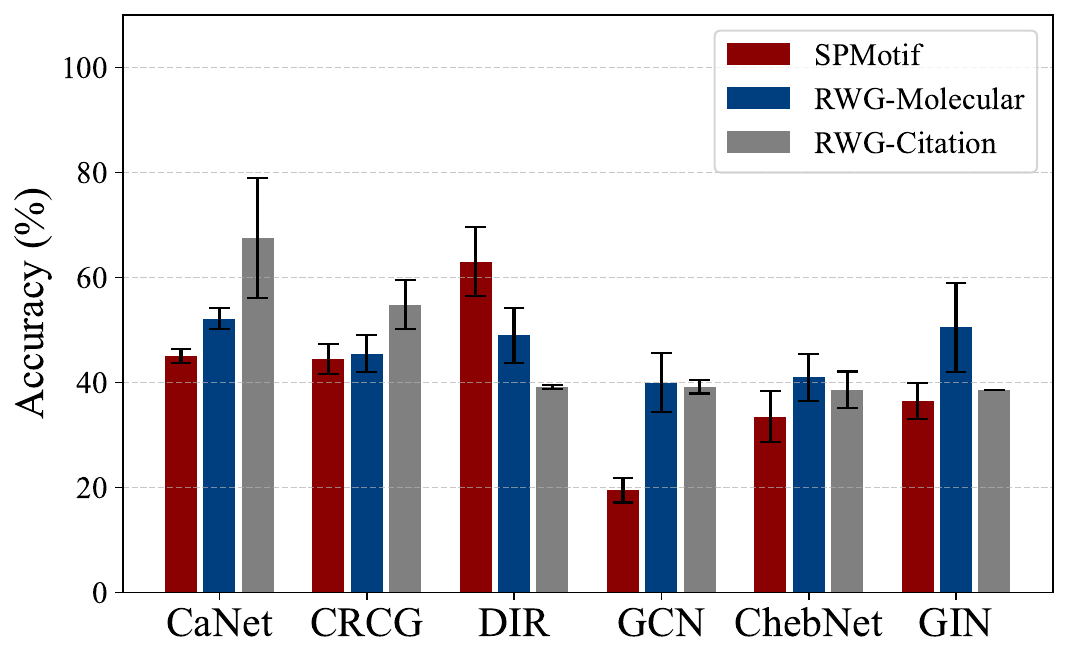}
        \end{minipage}
    }
        \subfigure[Confounder with 80\% bias.]{
        \begin{minipage}{0.4\textwidth}
            \centering
            \includegraphics[width=\linewidth]{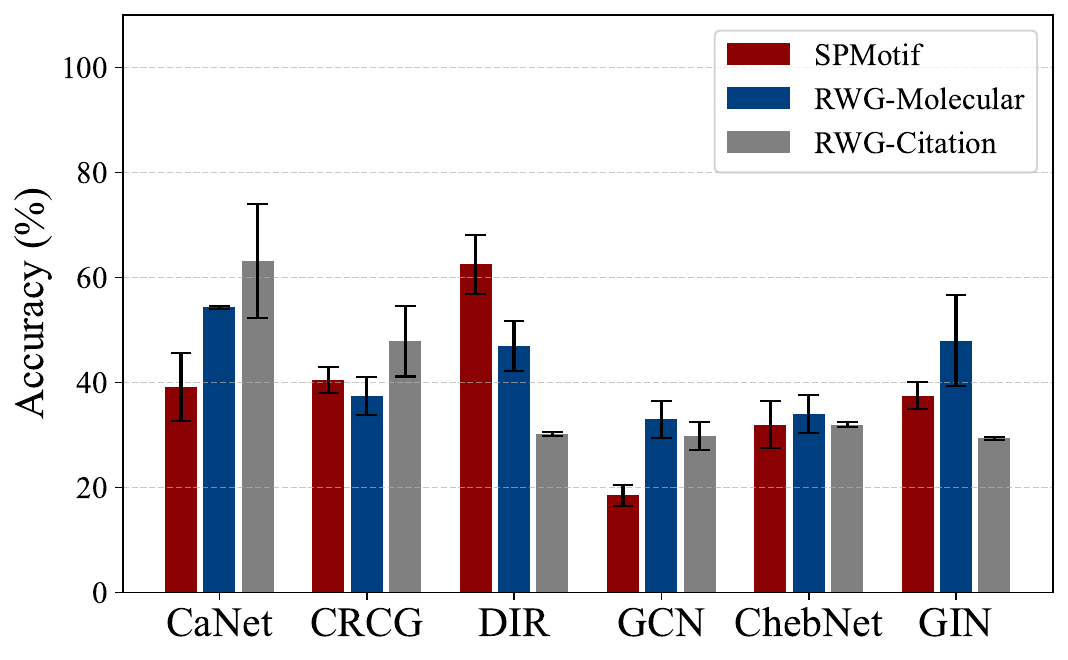}
        \end{minipage}
    }
    
    \caption{Test Accuracy comparison with different bias.}
    \vskip -0.1in
    \label{fig:differen bias}
\end{figure}


\section{Extra Experiments}
In this section, we present additional experimental results to facilitate a more thorough and in-depth analysis. These results provide further insights into the behavior and performance studied methods, enabling a better understanding of the underlying properties.

\subsection{Experimental results across different levels of confounder influence.}

\label{sec:exp1}

Based on the provided results in Figure \ref{fig:differen bias}, the experimental results of different models under varying confounder bias proportions can be analyzed. Causal enhancement methods such as CaNet, CRCG, and DIR generally outperform standard GNN frameworks (GCN, ChebNet, GIN) as the bias increases, indicating that these methods exhibit stronger robustness when faced with biases or noise in the data. 

As the confounder bias increases from 10\% to 80\%, the performance of models like GCN, ChebNet, and GIN declines significantly, suggesting that these models struggle more to recognize underlying patterns when strong biases are present, making them more susceptible to the influence of confounders. Among the causal enhancement methods, DIR shows a slight advantage at higher bias levels (e.g., 50\%, 60\% and above), indicating that DIR may be more effective in handling and mitigating the impact of confounders compared to other causal methods. Overall, as the confounder bias increases, the performance of all models declines, but the rate of decline varies across different models. 

Causal enhancement models exhibit relatively stable performance, especially CaNet and CRCG, which maintain higher accuracy across various bias levels. However, when the confounder bias reaches 90\%, even these models experience a significant drop in performance. The accuracy in the ``Paper'' column (representing some baseline or paper-defined method) consistently remains low, suggesting that traditional methods without causal modeling perform worse when bias is introduced. 

In conclusion, causal enhancement methods like CaNet, CRCG, and DIR are more robust to the influence of confounders compared to standard GNN models such as GCN, ChebNet, and GIN, with their advantage being more pronounced at higher bias levels. However, even these causal enhancement methods experience performance degradation under strong biases, indicating that while causal modeling helps mitigate the impact of confounders, it is not immune to them.

\begin{figure}[htbp]
    \centering
        \subfigure[RWG-Molecular with single confounder.]{
        \begin{minipage}{0.47\textwidth}
            \centering
            \includegraphics[width=\linewidth]{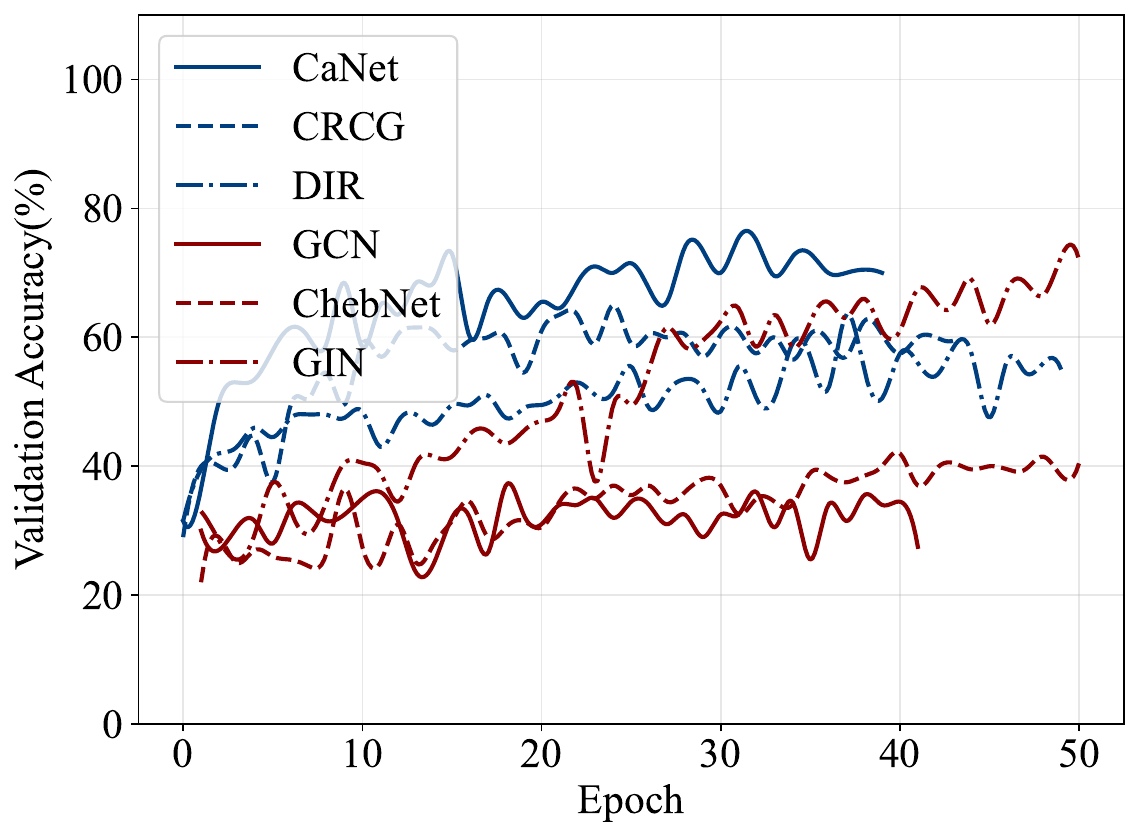}
        \end{minipage}
    }
    \vspace{2mm}
    \subfigure[RWG-Molecular with multiple confounders.]{
        \begin{minipage}{0.47\textwidth}
            \centering
            \includegraphics[width=\linewidth]{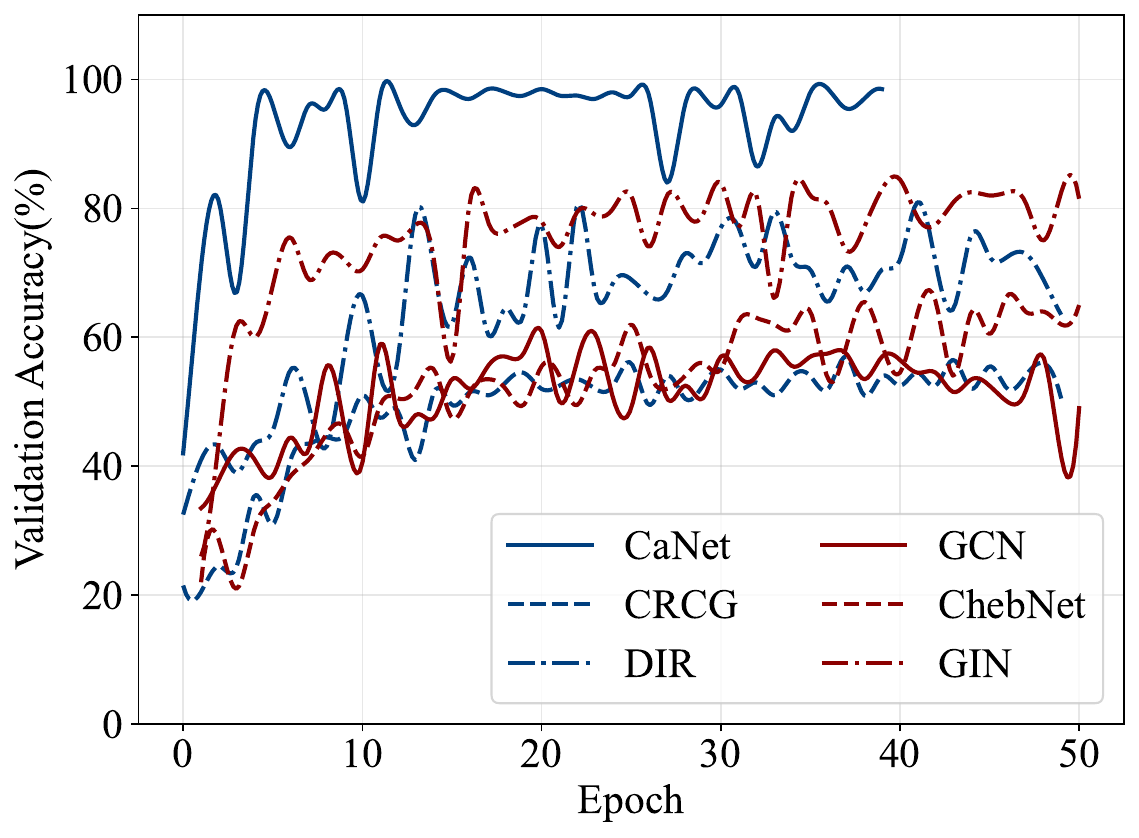}
        \end{minipage}
    }
    
    \subfigure[RWG-Citation with confounders consist of nodes.]{
        \begin{minipage}{0.47\textwidth}
            \centering
            \includegraphics[width=\linewidth]{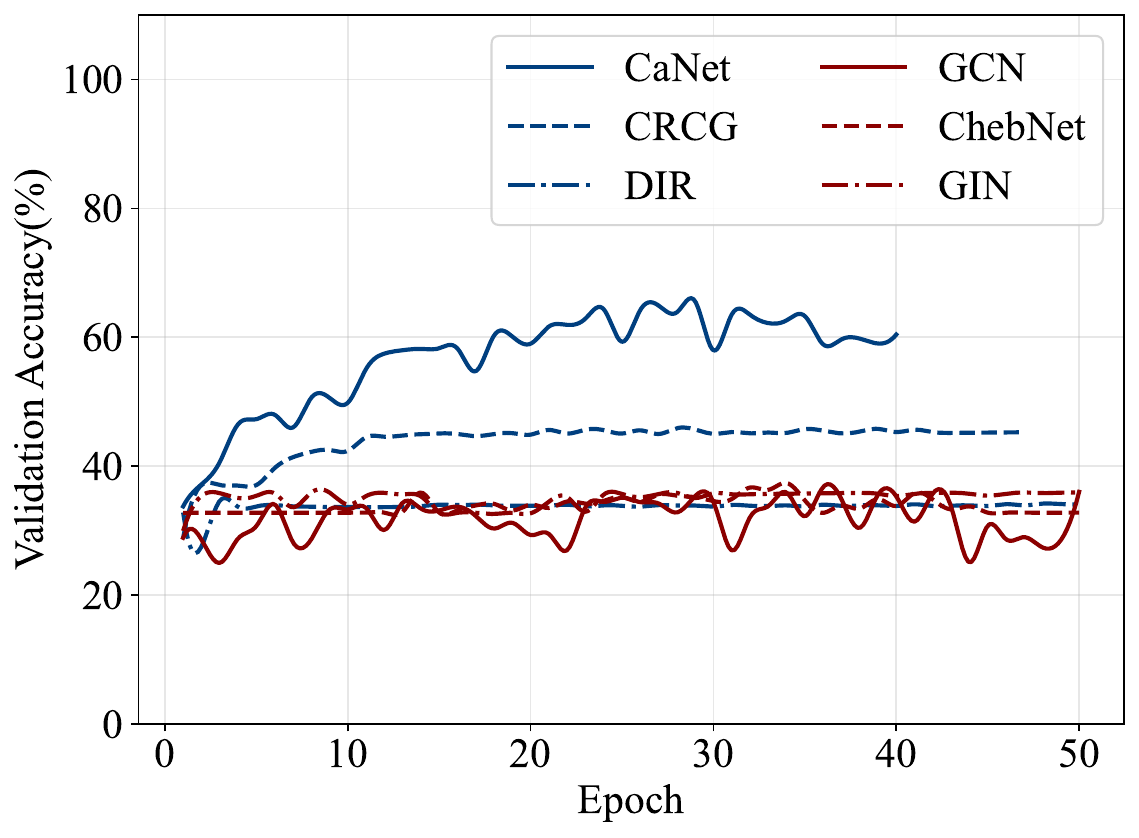}
        \end{minipage}
    }
    \vspace{2mm}
        \subfigure[RWG-Citation with confounders consist of nodes and edges.]{
        \begin{minipage}{0.47\textwidth}
            \centering
            \includegraphics[width=\linewidth]{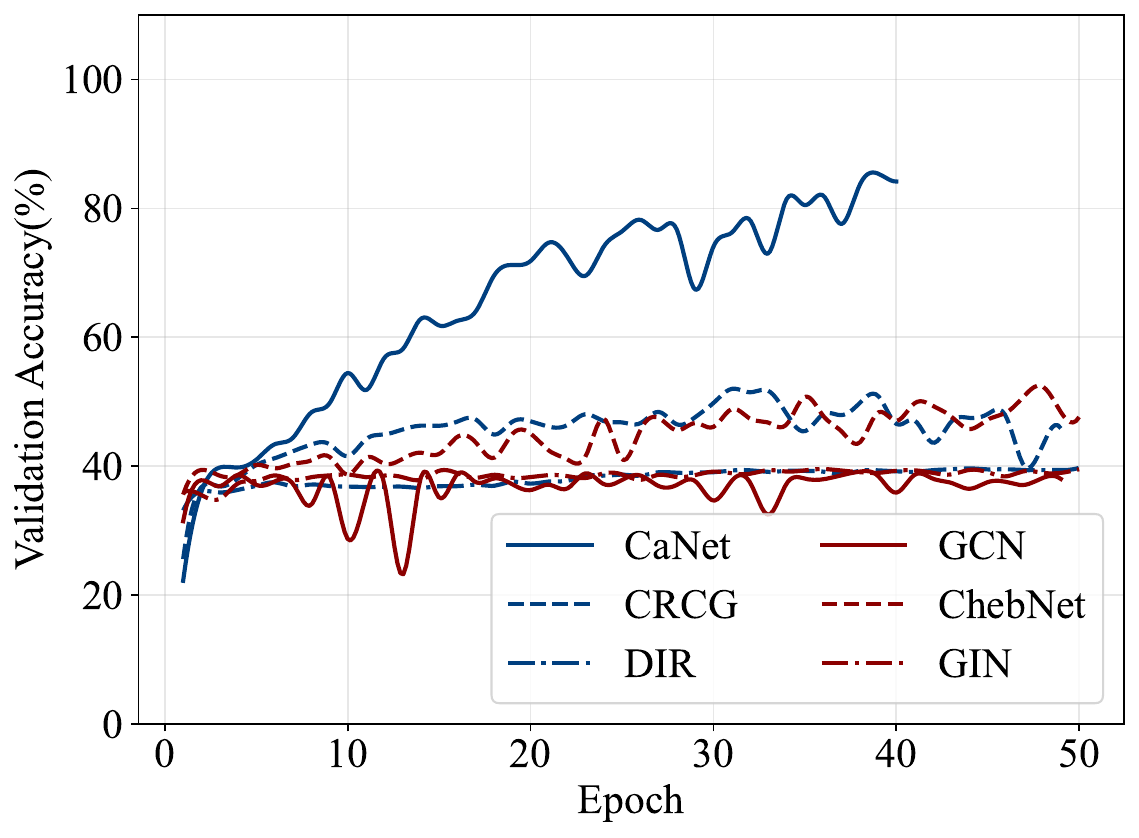}
        \end{minipage}
    }

    \caption{Validation accuracy upon training procedure.}
    \label{fig:three-datasets}
\end{figure}

\begin{figure}[h]
    \centering
    \subfigure[Star]{
        \begin{minipage}{0.3\textwidth}
            \centering
            \includegraphics[width=\linewidth]{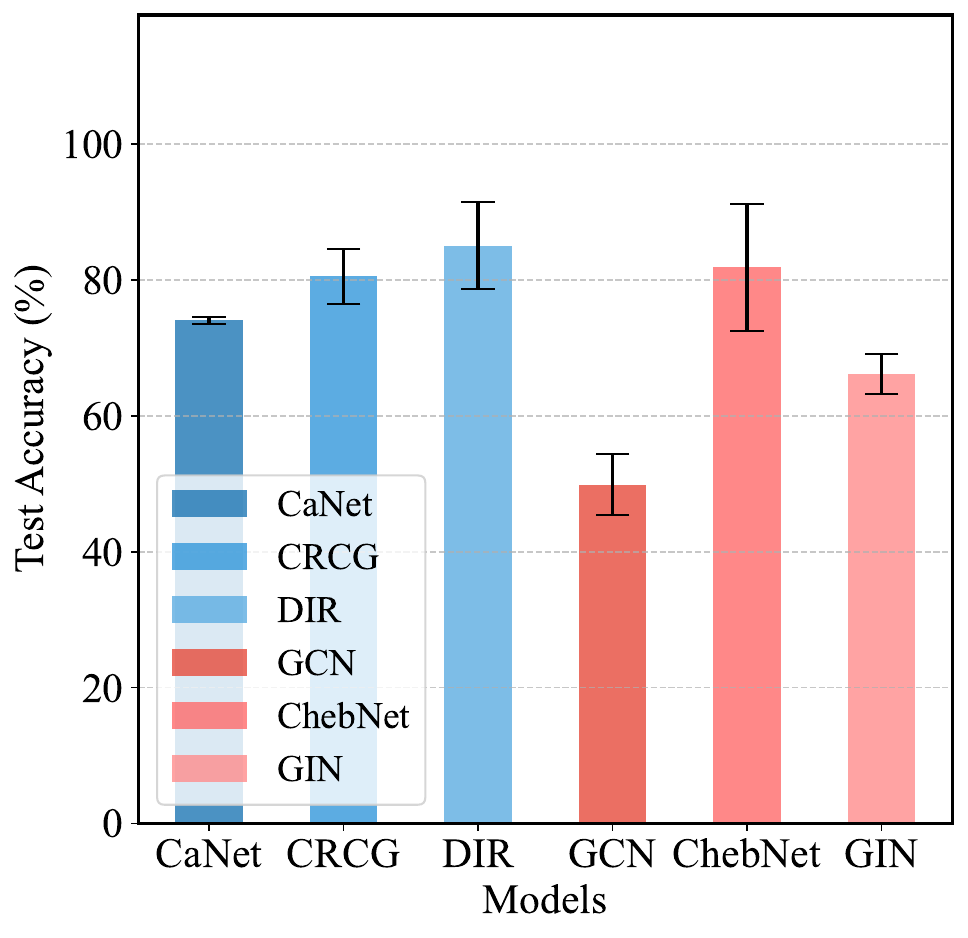}
            \label{fig:star}
        \end{minipage}
    }
    \subfigure[Path]{
        \begin{minipage}{0.3\textwidth}
            \centering
            \includegraphics[width=\linewidth]{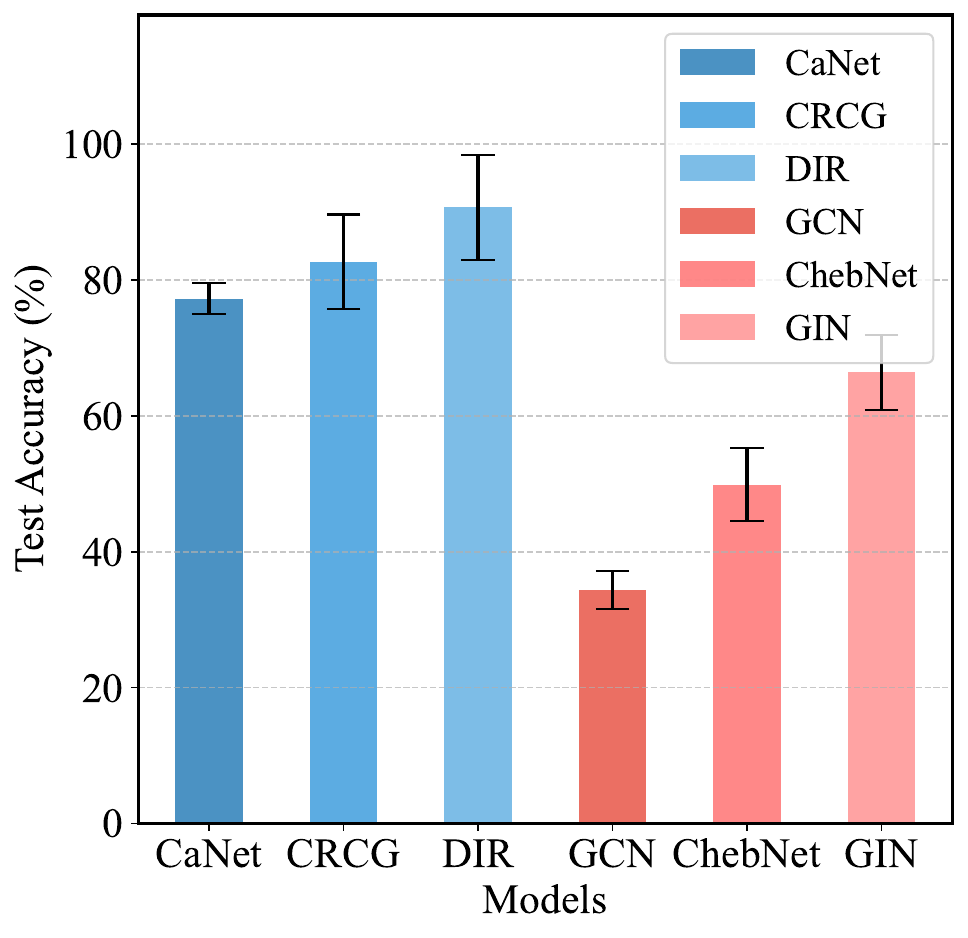}
            \label{fig:path}
        \end{minipage}
    }
    \subfigure[Fan]{
        \begin{minipage}{0.3\textwidth}
            \centering
            \includegraphics[width=\linewidth]{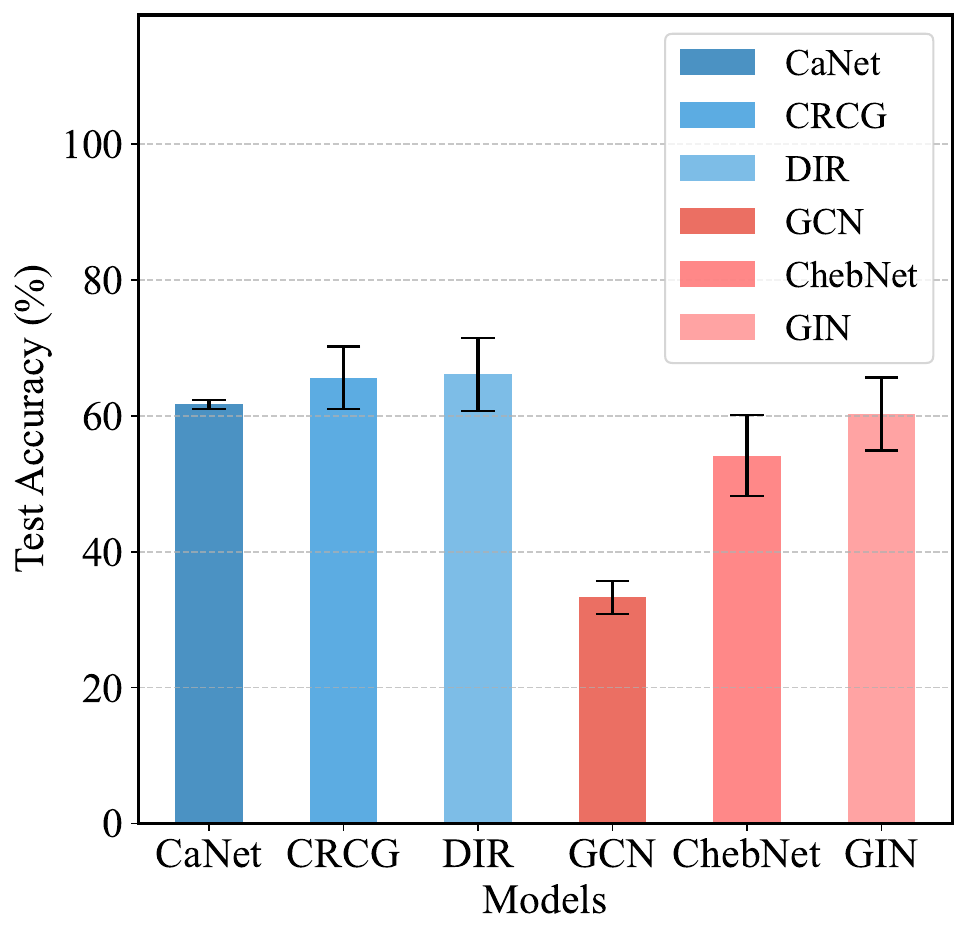}
            \label{fig:fan}
        \end{minipage}
    }
    \subfigure[Spiked Polygon]{
        \begin{minipage}{0.3\textwidth}
            \centering
            \includegraphics[width=\linewidth]{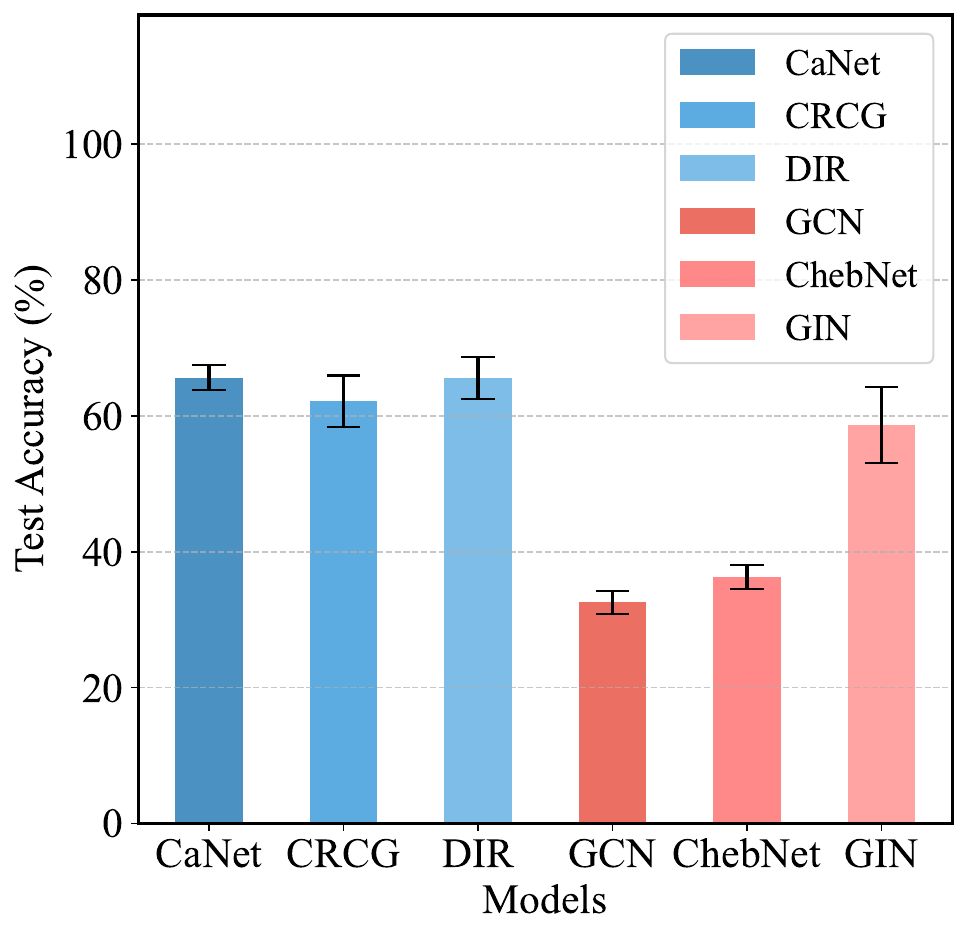}
            \label{fig:spiked}
        \end{minipage}
    }
    \subfigure[Random Bipartite]{
        \begin{minipage}{0.3\textwidth}
            \centering
            \includegraphics[width=\linewidth]{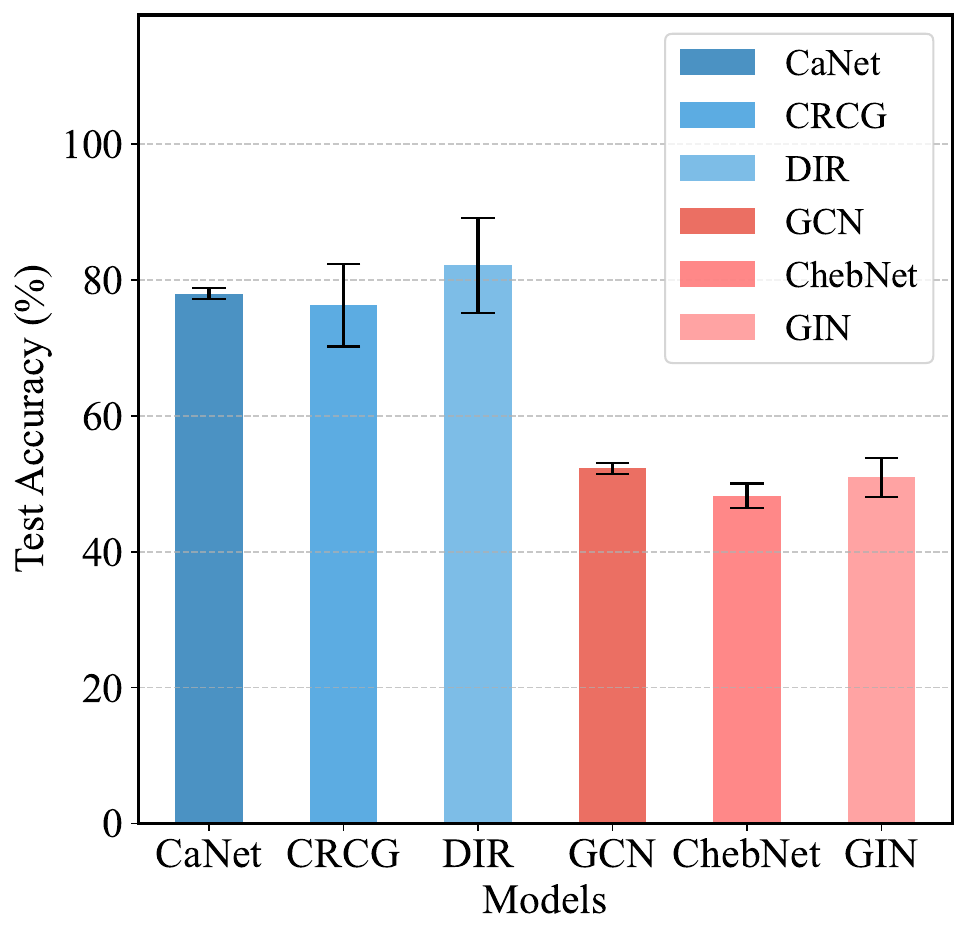}
            \label{fig:random}
        \end{minipage}
    }
    \caption{Motif Confounders}
    \label{fig:motif_results}
\end{figure}

\begin{figure}[h]
    \centering
    \subfigure[Benzene Ring]{
        \begin{minipage}{0.3\textwidth}
            \centering
            \includegraphics[width=\linewidth]{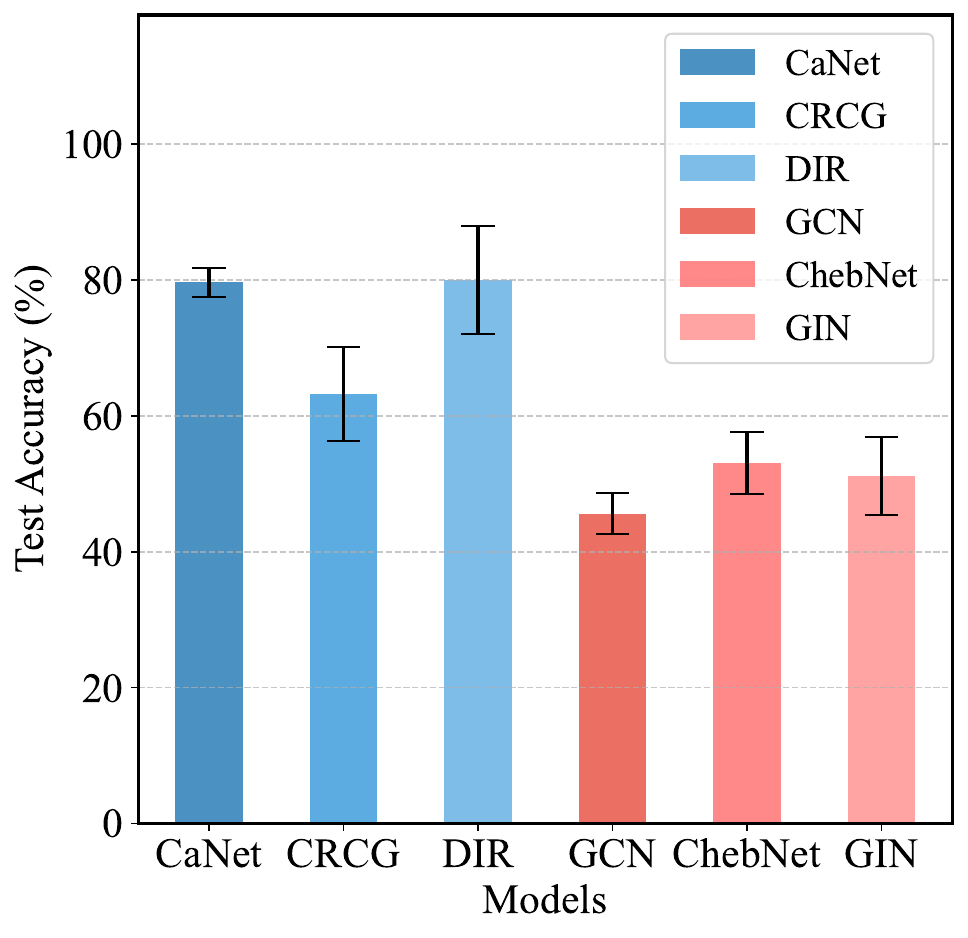}
            \label{fig:benzene}
        \end{minipage}
    }
    \subfigure[Methane]{
        \begin{minipage}{0.3\textwidth}
            \centering
            \includegraphics[width=\linewidth]{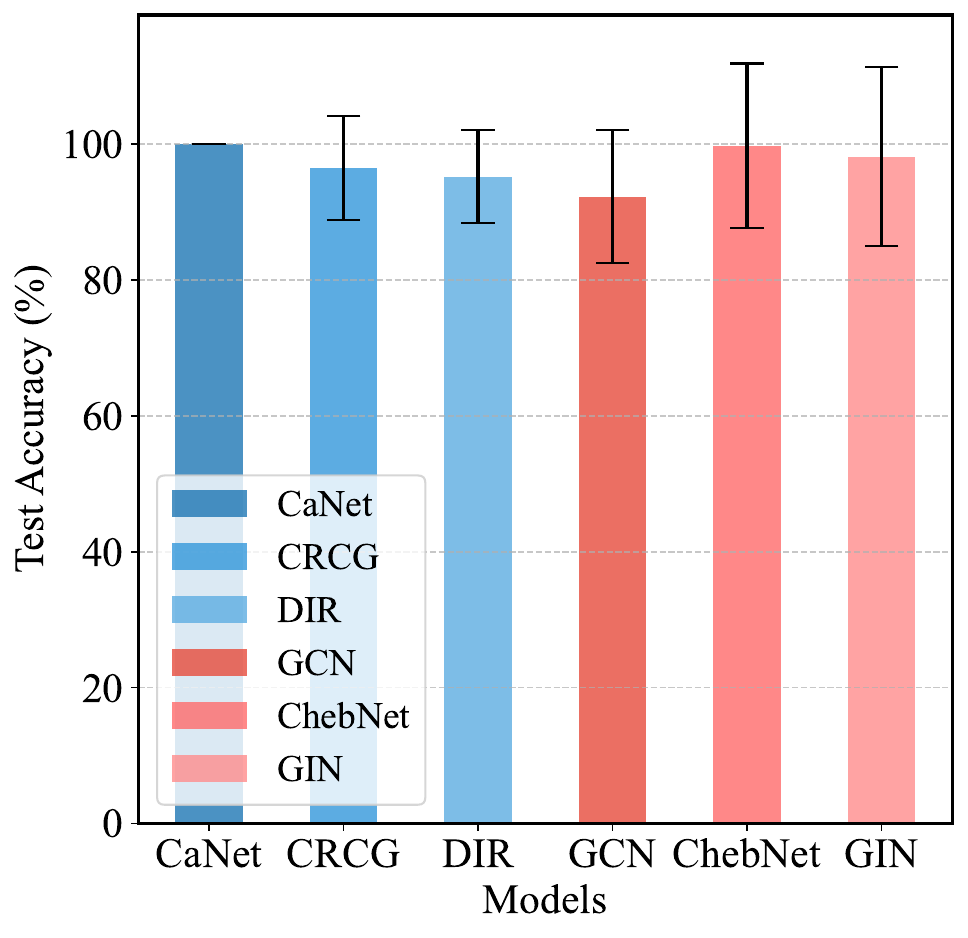}
            \label{fig:methane}
        \end{minipage}
    }
    \subfigure[Ethane]{
        \begin{minipage}{0.3\textwidth}
            \centering
            \includegraphics[width=\linewidth]{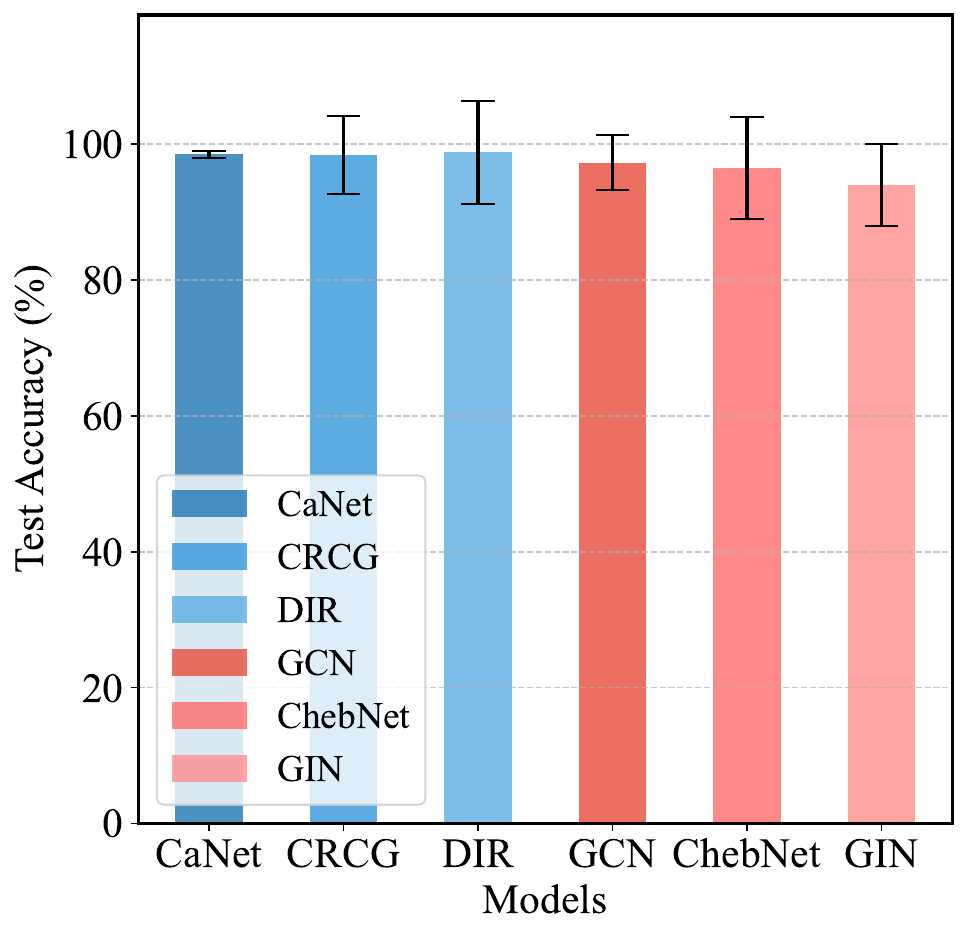}
            \label{fig:ethane}
        \end{minipage}
    }
    \subfigure[Benzoic Acid]{
        \begin{minipage}{0.3\textwidth}
            \centering
            \includegraphics[width=\linewidth]{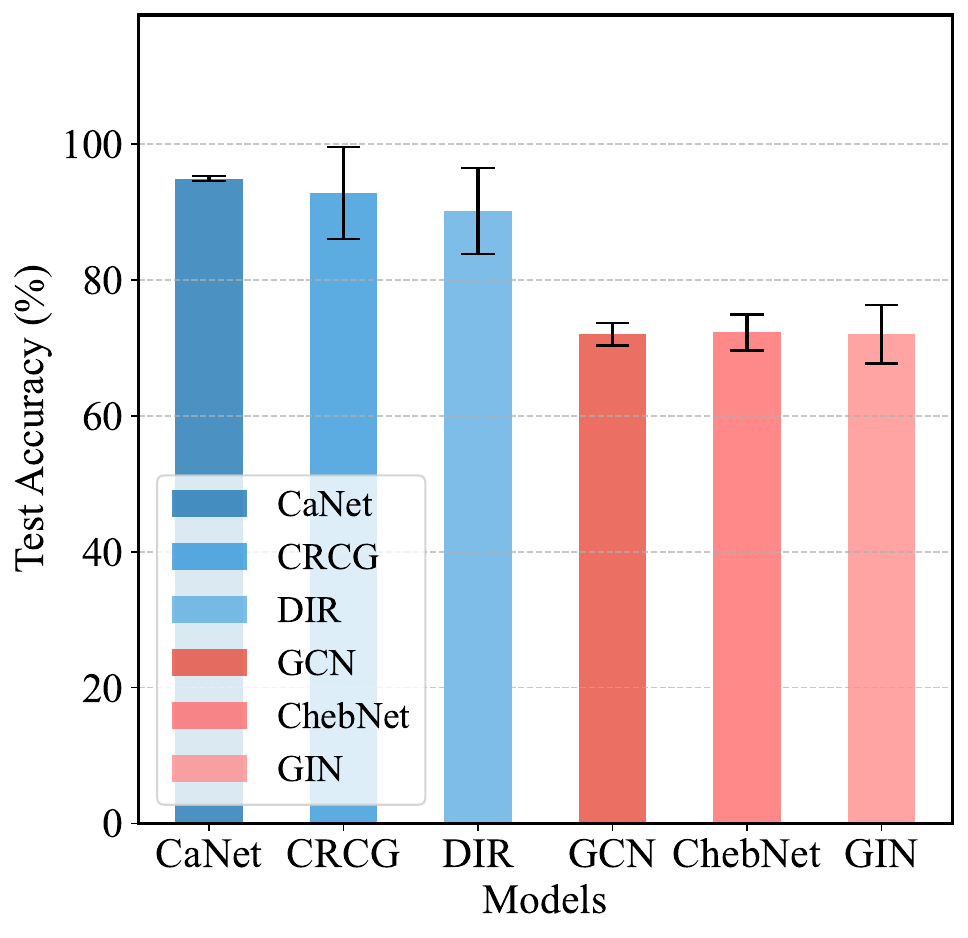}
            \label{fig:benzoic}
        \end{minipage}
    }
    \subfigure[Nitrobenzene]{
        \begin{minipage}{0.3\textwidth}
            \centering
            \includegraphics[width=\linewidth]{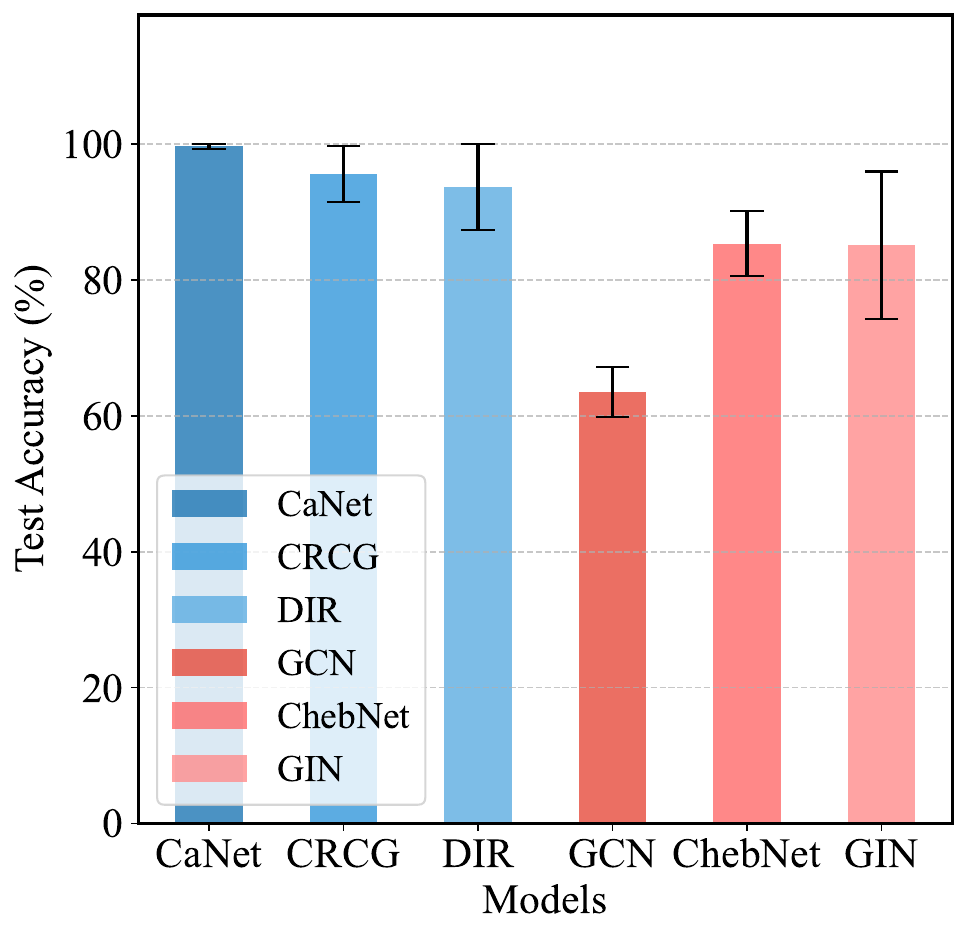}
            \label{fig:nitro}
        \end{minipage}
    }
    \caption{Molecular Structure Confounders}
    \label{fig:molecular_results}
\end{figure}

\begin{figure}[h]
    \centering
    
    \subfigure[Basic Element]{
        \begin{minipage}{0.3\textwidth}
            \centering
            \includegraphics[width=\linewidth]{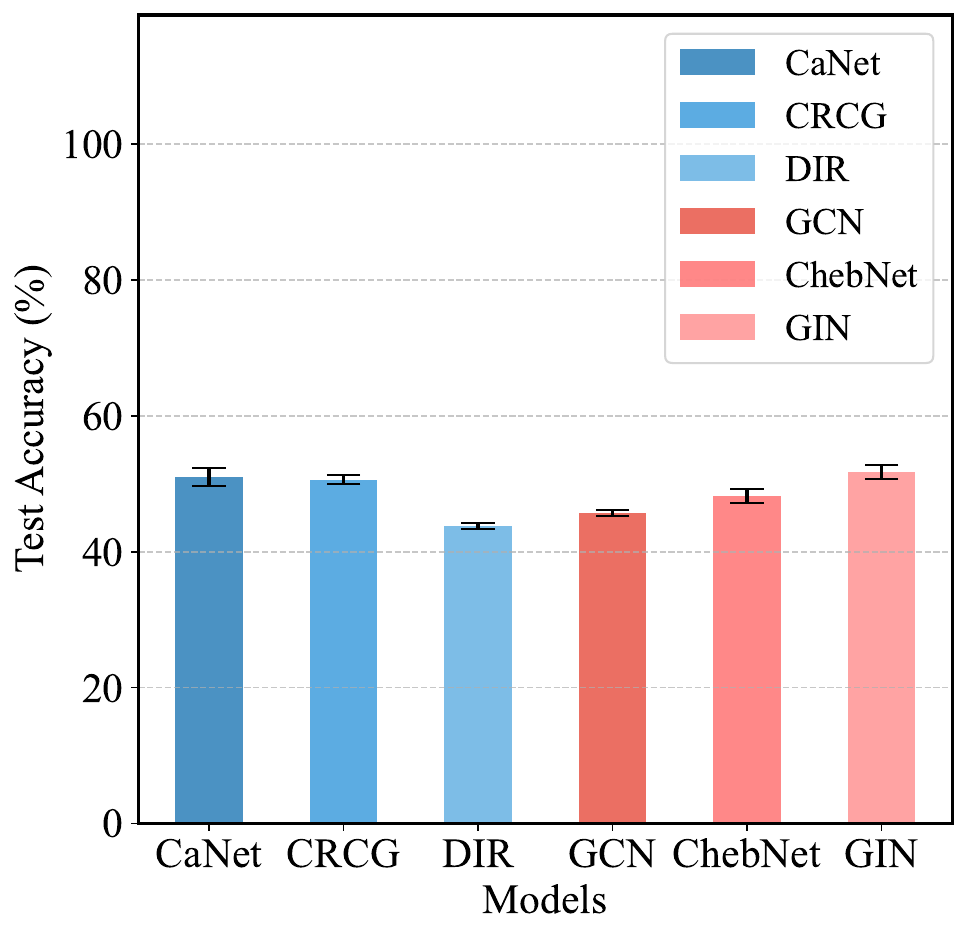}
            \label{fig:basic}
        \end{minipage}
    }
    \subfigure[Citation Element]{
        \begin{minipage}{0.3\textwidth}
            \centering
            \includegraphics[width=\linewidth]{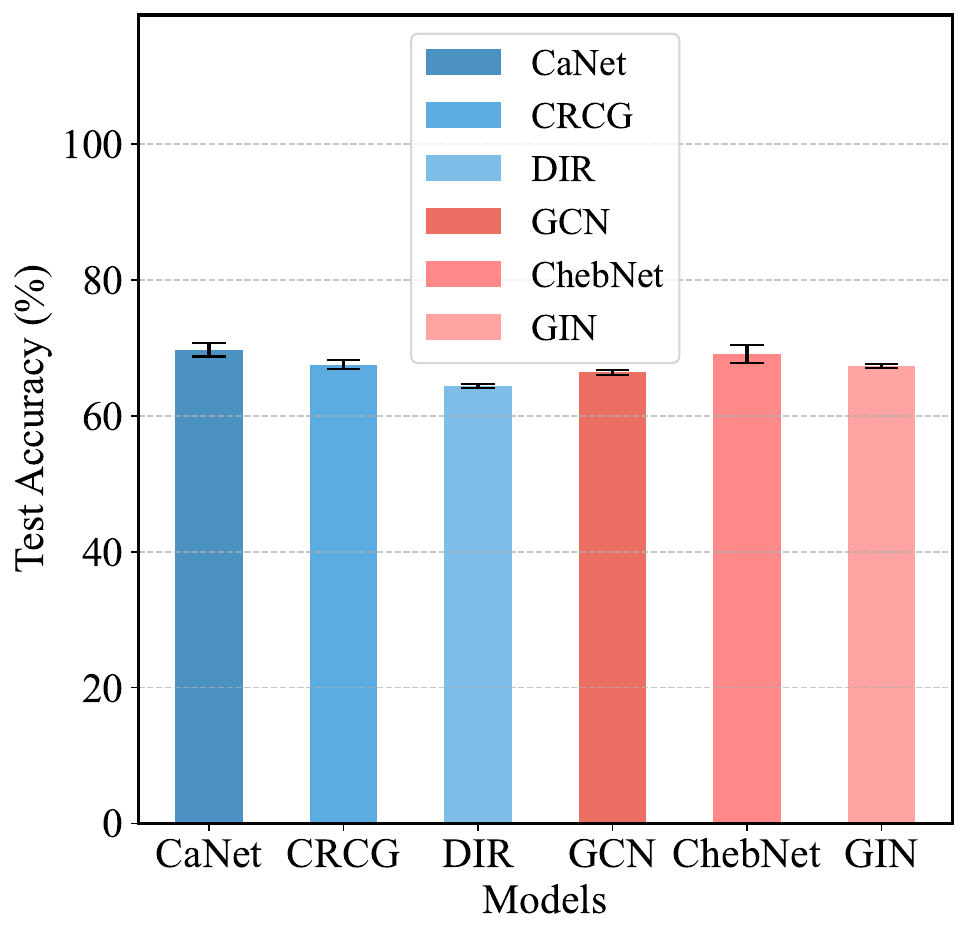}
            \label{fig:citation}
        \end{minipage}
    }
    \subfigure[Author Element]{
        \begin{minipage}{0.3\textwidth}
            \centering
            \includegraphics[width=\linewidth]{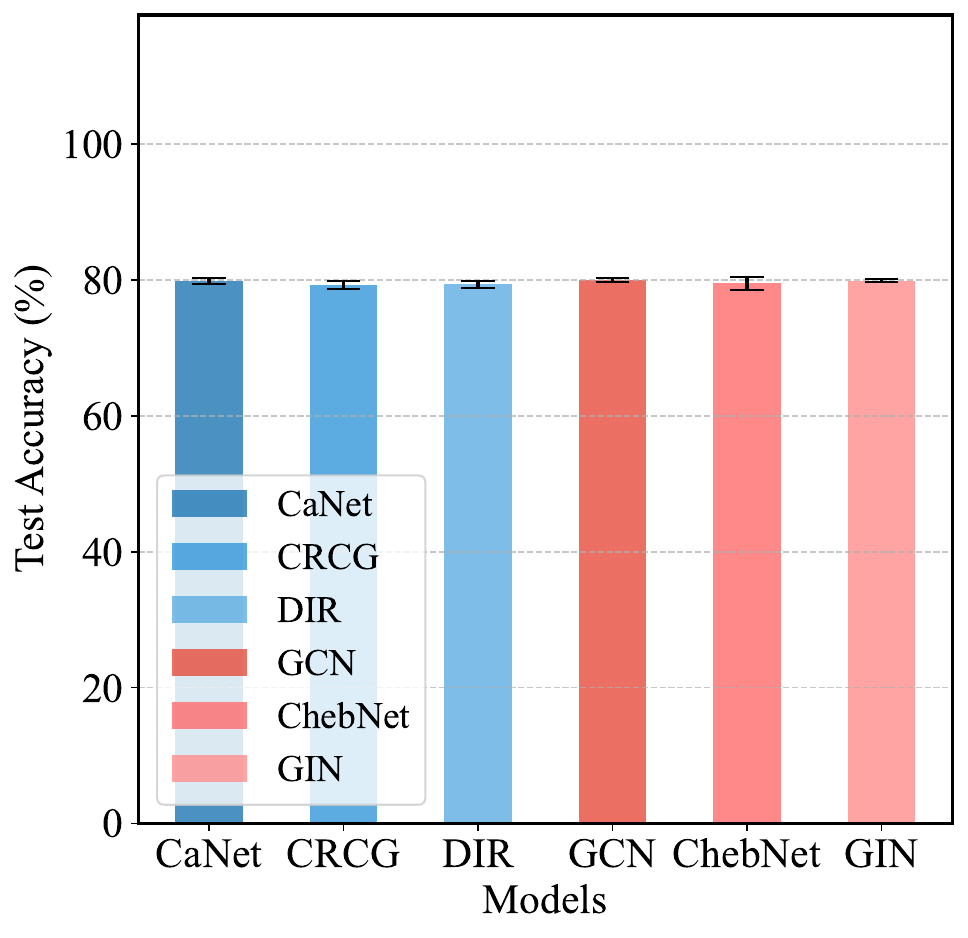}
            \label{fig:author}
        \end{minipage}
    }
    \subfigure[Topic Element]{
        \begin{minipage}{0.3\textwidth}
            \centering
            \includegraphics[width=\linewidth]{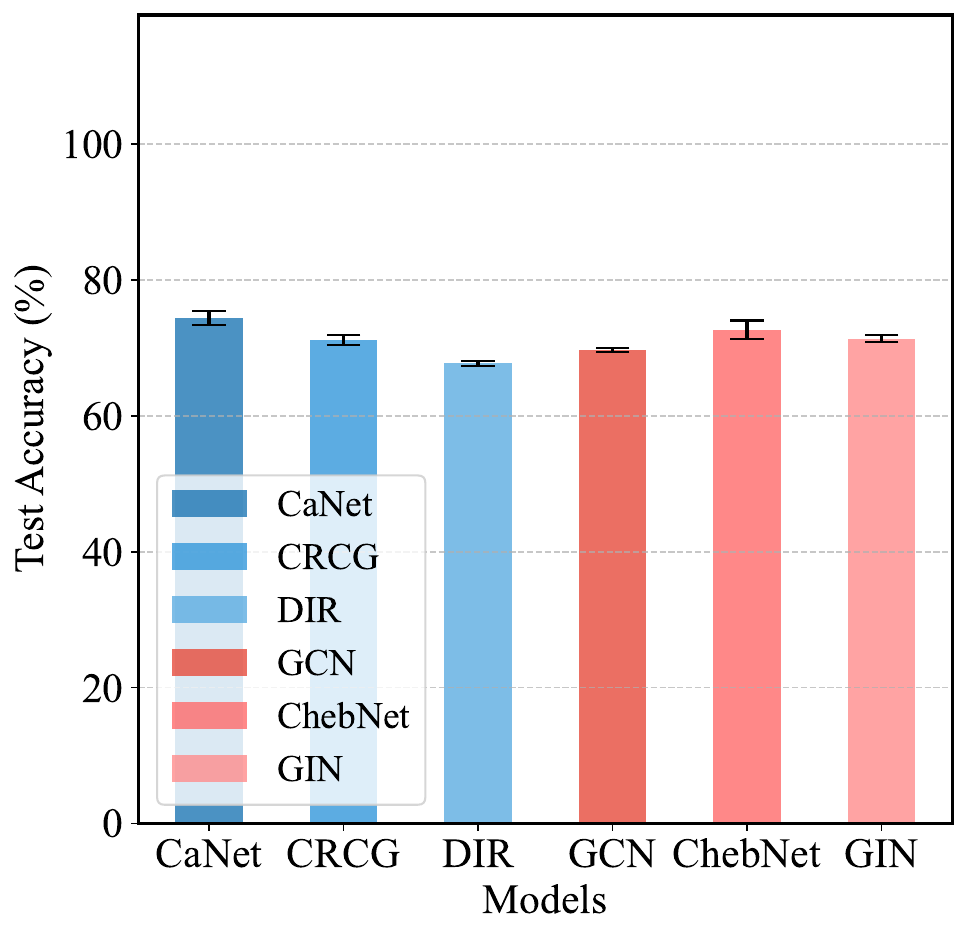}
            \label{fig:topic}
        \end{minipage}
    }
    \subfigure[Method Element]{
        \begin{minipage}{0.3\textwidth}
            \centering
            \includegraphics[width=\linewidth]{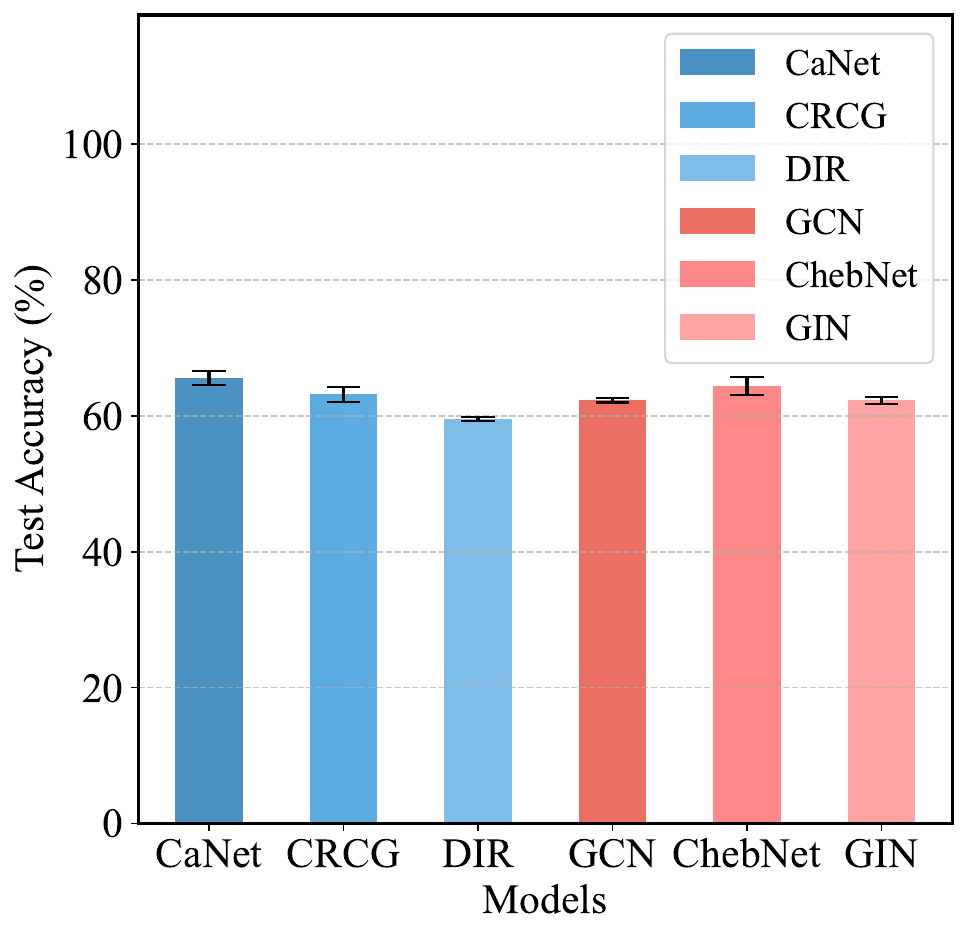}
            \label{fig:method}
        \end{minipage}
    }
    \caption{Citation Confounders}
    \label{fig:citation_results}
\end{figure}

\subsection{Training Process Analysis}

\label{sec:exp2}

We also analyzed the training performance of different methods under various scenarios. The results are shown in Figure \ref{fig:three-datasets}. We can observe significant differences in the training performance of different methods across different datasets. In the molecular dataset with a single confounder (Figure a), the performance of the CaNet model is clearly superior to other methods. As the number of training epochs increases, its validation accuracy continuously rises, ultimately approaching 100\%. Other methods, such as GCN, ChebNet, and GIN, show relatively flat performance, with validation accuracy fluctuating around 50\%, and they fail to improve significantly.

In the molecular dataset with multiple confounders (Figure b), CaNet still performs excellently, with its validation accuracy surpassing 80\% and steadily increasing. Similar to Figure a, the performance of other methods is closer to each other, with DIR and GIN showing poorer results, failing to significantly improve the model accuracy.

For the citation dataset, where confounders consist of nodes (Figure c), CaNet's performance remains the most outstanding, with validation accuracy maintained at a high level, and the training process is relatively stable. In Figure d (where confounders consist of both nodes and edges), CaNet still achieves good results, with validation accuracy showing a steady upward trend. Other methods, such as GCN and ChebNet, show slightly worse performance, with validation accuracy fluctuating significantly.

In summary, causal graph representation learning demonstrates an advantage over general methods throughout the training process. Additionally, we observe that some methods may experience performance degradation as training progresses when confounder interference is present.

\subsection{Experimental results across different confounder types.}

\label{sec:exp3}

We also analyzed the effects of different types of graph elements acting as confounders within Figure \ref{fig:motif_results}, \ref{fig:molecular_results} and \ref{fig:citation_results}. The results are shown in the figures. From the overall trend, it can be observed that the test accuracy fluctuates across different methods as the type of graph element changes.

When dealing with different graph structures, such as ``Star,'' ``Path,'' and ``Fan,'' it is evident that the accuracy of the models varies depending on the confounder type. For example, in certain graph element scenarios, the accuracy fluctuates to varying degrees, while in others, it gradually stabilizes as training progresses, indicating that these methods exhibit different adaptability to confounders.

For molecular structure datasets, such as ``Benzene Ring,'' ``Methane,'' and ``Ethane,'' the impact of different confounder types on model performance is also noticeable. In some structures, the interference of confounders appears to complicate the training process and affects the final test accuracy, while in other cases, the confounder's interference does not result in a significant accuracy drop.

In the citation dataset, such as ``Basic Element,'' ``Citation Element,'' and ``Topic Element,'' the test accuracy shows more complex trends as the confounder type changes. In certain graph element scenarios, the models exhibit higher volatility when processing specific elements, indicating greater sensitivity to confounders in these contexts.

In conclusion, as the confounder type changes, the test accuracy of the models is influenced to varying degrees, and different types of graph elements exhibit different impact patterns. This suggests that the performance of methods in handling confounders is closely related to the type and complexity of the confounder, as well as the specific structure of the data.

\section{Details of Experiments}
\label{apx:expdtl}
\subsection{REC Settings}
This module's key hyperparameters include: \(\lambda_{\text{init}} = 1.0\), which controls the initial filtering strength; \(\epsilon = 0.01\), which implements progressive decay during the training process;\(\lambda_{\text{min}} = 0.2\).

\subsection{Baselines and Settings}
\paragraph{GCN.} We adopt a two-layer Graph Convolutional Network (GCN) architecture for representation learning. 
The hidden dimension is set to 64, and each layer performs neighborhood aggregation based on the graph structure, trained using a learning rate of 0.01, a weight decay of $5 \times 10^{-4}$, a batch size of 32, and for 50 training epochs

\paragraph{GIN.} This baseline is a graph neural network architecture designed to achieve strong expressive power in distinguishing graph structures. It is based on a message-passing mechanism, where node representations are iteratively updated through neighborhood aggregation. In our configuration, GIN employs a hidden dimension of 64, with two network layers. The model is also trained using the setting as GCN.

\paragraph{ChebNet.}
This baseline performs graph convolution through Chebyshev polynomial approximation of the graph Laplacian. In our baseline, we adopt a two-layer architecture with polynomial order 2 and a hidden dimension of 64. The model is trained for 10,000 epochs with a learning rate of 0.01 and a dropout rate of 0.5. By leveraging higher-order polynomial filters on either the symmetrically normalized or the random-walk normalized Laplacian, ChebNet enables effective aggregation of neighborhood information. Each layer is followed by nonlinear activation and dropout, which enhance the expressiveness of node representations.

\paragraph{DIR.}
This baseline is designed to capture causal structures within graphs. The model is configured with a hidden dimension of 128, a causal ratio of 0.7, a learning rate of 0.001, a batch size of 128, and is trained for 50 epochs. It estimates edge importance scores through convolutional encoding and a multilayer perceptron, and then separates subgraphs based on the causal ratio. The architecture incorporates three predictive branches, focusing on causal, confounding, and combined predictions. Training follows a dual-loss strategy with masking mechanisms, which emphasize causal signals while mitigating the influence of confounders.

\paragraph{CRCG.}
This baseline integrates causal representation learning into graph neural networks. With a hidden dimension of 32, a causal ratio of 0.25, a learning rate of 0.001, a batch size of 64, and 50 training epochs, this baseline jointly models causal and confounding structures. Edge importance is estimated by combining an encoder with a scoring mechanism, and graphs are split into causal and confounding subgraphs before being relabeled for consistency. The predictive module contains distinct branches for causal and confounding signals. Training employs a dual-loss scheme, and model performance is evaluated with multiple metrics, including accuracy, precision, and mean reciprocal rank.

\paragraph{CaNet.}
This baseline introduces a causal attention mechanism for robust graph learning. It is evaluated on the Citeseer dataset with a two-layer architecture, a hidden dimension of 64, three environments, a learning rate of 0.01, weight decay of $5 \times 10^{-4}$, and 40 training epochs. To ensure stability, experiments are repeated three times. CaNet leverages Gumbel-Softmax to learn environment distributions and adopts a two-stage forward process, where the model outputs both predictions and regularization losses during training. A feature filtering module is applied at the input stage to suppress noisy or irrelevant features, and graph-level pooling is used for aggregation. This design strengthens the generalization ability of the model across different environments.

\subsection{Datasets}
\paragraph{RWG-Molecular.} Each generated dataset contains 1900 graphs, among which 1500 graph samples are used as training samples, 200 samples as validation samples, and 200 samples as test samples. The number of nodes ranges from 50 to 80, the number of edges ranges from 60 to 120, the node feature dimension is 5, and there are 5 classes.  In the experiment of Figure \ref{fig:three-sen-int}, the dataset is artificially synthesized molecular data, with a confounding ratio of 90\% and an intervention probability of 100\%.  
In the experiment of Figure \ref{fig:curves}, the dataset is artificially synthesized molecular data, with the confounding ratio ranging from 10\% to 90\%.  
In the experiment of Figure \ref{fig:kinds}, the dataset is artificially synthesized molecular data with either a single large molecule block (index = 1, size = 50, branched = 10) or multiple small molecule blocks (indices = 1--10, size = 5, branches = 5), and a confounding ratio of 70\% is applied.  
In the experiment of Figure \ref{fig:pure causal}, the training set of the dataset has a confounding ratio of 0\%, while the validation and test sets have a confounding ratio of 70\%.  
In the experiment of Table \ref{tab:results_c_exp}, the dataset is artificially synthesized molecular data, with a confounding ratio of 70\%.

\paragraph{RWG-Citation.} Each generated dataset contains 1900 graphs, among which 1500 graph samples are used for training, 200 samples for validation, and 200 samples for testing. The number of nodes ranges from 15 to 25, the number of edges ranges from 20 to 60, the node feature dimension is 5, and there are 5 classes.  
In the experiment of Figure~\ref{fig:three-sen-int}, the dataset is artificially synthesized citation network data, with a confounding ratio of 90\% and an intervention probability of 100\%.  
In the experiment of Figure~\ref{fig:curves}, the dataset is artificially synthesized citation network data, with the confounding ratio ranging from 10\% to 90\%.  
In the experiment of Figure~\ref{fig:kinds}, the dataset is artificially synthesized citation network data, incorporating mixed node information and complex structures (e.g., node relations), with a confounding ratio of 70\%.  
In the experiment of Figure~\ref{fig:pure causal}, the training set of the dataset is applied with a confounding ratio of 0\%, while the validation and test sets are applied with a confounding ratio of 70\%.  
In the experiment of Table~\ref{tab:results_c_exp}, the dataset is artificially synthesized citation network data, with a confounding ratio of 70\%.

\paragraph{SPMotif.} 
In each generated dataset, there are 1900 graphs in total, with 1500 graphs used as training samples, 200 graphs as validation samples, and 200 graphs as test samples. The number of nodes ranges from 20 to 40, the number of edges ranges from 30 to 50, the node feature dimension is 5, and there are 5 classes. 
In the experiments of Figure \ref{fig:three-sen-int}, the dataset consists of synthetically generated primitive data, with a confounding ratio of 90\% and an intervention probability of 100\%. 
In the experiments of Figure \ref{fig:curves}, the dataset consists of synthetically generated primitive data, with the confounding ratio ranging from 10\% to 90\%. 
In the experiments of Figure \ref{fig:pure causal}, the training set of the dataset is generated with a confounding ratio of 0, while the validation and test sets are generated with a confounding ratio of 70\%. 
In the experiments of Table \ref{tab:results_c_exp}, the dataset consists of synthetically generated primitive data, with a confounding ratio of 70\%.

\paragraph{CRCG.} This dataset is a synthetic graph classification dataset. Following the official setting, our generated data comprises 4,000 graphs in total, with 1,000 for training, 1,000 for validation, and 2,000 for testing. The dataset contains five classes, and the confounder ratio is set to 70\%.

\paragraph{CiteSeer.} This dataset is a citation network dataset consisting of 3,312 nodes, each with a 3,703-dimensional feature vector, and 4,723 edges. The dataset contains six classes.

\paragraph{ENZYMES.} This dataset is a graph dataset constructed from protein tertiary structures. It contains 600 graphs with a total of 19,580 nodes and 174,564 edges. Each node has a 3-dimensional feature vector, and the dataset covers six classes.

\end{document}